\DeclareMathOperator*{\argmax}{arg\,max}
\DeclareMathOperator{\E}{\mathbb{E}}
\def\delequal{\mathrel{\ensurestackMath{\stackon[1pt]{=}{\scriptstyle\Delta}}}}
\newtheorem{defn}{Definition}
\newtheorem{theorem}{Theorem}
\newtheorem{lemm}{Lemma}
\newtheorem{lemm2}{Lemma}
\newtheorem*{theorem*}{Theorem}
\newtheorem{assumption}{Assumption}
\newtheorem*{assumption*}{Assumption}
\newtheorem{corr}{Corollary}
\newtheorem*{corr*}{Corollary}
\begin{document}

\title{Multi-Agent Advisor Q-Learning}

\author{\name Sriram Ganapathi Subramanian \email s2ganapa@uwaterloo.ca \\
\addr University of Waterloo \\ 200 University Ave W, Waterloo, ON  N2L 3G1 \\
Vector Institute \\ 661 University Ave Suite 710, Toronto, ON M5G 1M1
       \AND
       \name Matthew E. Taylor \email matthew.e.taylor@ualberta.ca \\
       \addr University of Alberta \\ 
       116 Street and 85 Avenue, 
       Edmonton, AB  T6G 2R3  \\
       Alberta Machine Intelligence Institute (Amii) \\
       10065 Jasper Ave Edmonton, AB T5J 3B1 
       \AND
       \name Kate Larson \email kate.larson@uwaterloo.ca \\
       \addr University of Waterloo \\ 
       200 University Ave W, Waterloo, ON  N2L 3G1  
       \AND
       \name Mark Crowley \email mcrowley@uwaterloo.ca \\
       \addr University of Waterloo \\ 
       200 University Ave W, Waterloo,
       ON  N2L 3G1}


\maketitle

\begin{abstract}

In the last decade, there have been significant advances in multi-agent reinforcement learning (MARL) but there are still numerous challenges, such as high sample complexity and slow convergence to stable policies, that need to be overcome before wide-spread deployment is possible. However, many real-world environments already, in practice,  deploy sub-optimal or heuristic approaches for generating policies. An interesting question that arises is how to best use such approaches as \emph{advisors} to help improve reinforcement learning in multi-agent domains. 
In this paper, we provide a principled framework for incorporating action recommendations from online sub-optimal advisors in multi-agent settings. We describe the problem of \emph{ADvising Multiple Intelligent Reinforcement Agents} (ADMIRAL) in nonrestrictive \emph{general-sum stochastic game} environments and present two novel $Q$-learning based algorithms:  \textbf{ADMIRAL - Decision Making (ADMIRAL-DM)} and \textbf{ADMIRAL - Advisor Evaluation (ADMIRAL-AE)}, which allow us to improve learning by appropriately incorporating advice from an advisor (ADMIRAL-DM), and evaluate the effectiveness of an advisor (ADMIRAL-AE). We analyze the algorithms theoretically and provide fixed point guarantees regarding their learning in general-sum stochastic games. Furthermore, extensive experiments illustrate that these algorithms: can be used in a variety of environments, have performances that compare favourably to other related baselines, can scale to large state-action spaces, and are robust to poor advice from advisors. 

\end{abstract}

\section{Introduction}\label{sec:introduction}

Reinforcement learning (RL) research is growing and expanding rapidly, however, this method still finds only limited applications in practical real-world settings \citep{dulac2021challenges}. One major reason for this is that RL algorithms typically have high sample complexity and can learn effective policies only after experiencing millions of data samples in simulation \citep{kakade2003sample}. Multi-agent reinforcement learning (MARL) extends RL to domains where more than one agent learn simultaneously in the environment \citep{shoham2008multiagent}. Moving from single-agent to multi-agent settings introduces new challenges including non-stationary environments and the curse-of-dimensionality \citep{hernandez2019survey}, while concerns from single-agent RL such as exploration-exploitation trade-offs and sample efficiency remain~\citep{yogeswaran2012reinforcement}. In MARL environments, it has been reported that learning complex tasks from scratch is even impractical due to its poor sample complexity \citep{da2019survey}. In this regard, it becomes necessary for agents to obtain guidance from an external source to have any possibility of scaling up to real-world domains. Furthermore, during the early stages of learning, agents' policies may be quite random and dangerous, which makes it almost impossible to use them in real-world environments. Thus, it is hard to improve upon these policies by only using direct interactions with the environment. In this paper, we aim to tackle the problem of improving sample efficiency in MARL through the use of other available sources of knowledge, particularly during the early stages of training. 

In single-agent RL, the use of external knowledge sources such as \emph{advisors} to drive exploration has been successful in a variety of domains.  The advisors provide actions to the agent at different states to bootstrap learning by targeted exploration~\citep{nair2018overcoming}. However, the biases of sub-optimal advisors pose a challenge to successful learning~\citep{gao2018reinforcement}. Further, many of these approaches do not directly extend to multi-agent environments due to the additional complications present in the multi-agent environments. Although learning from external sources of knowledge has been explored in multi-agent settings, many previous works assume the presence of fully optimal experts \citep{natarajan2010multi, hadfield2016cooperative, yu2019multi}. Generally, they entail additional assumptions such as having simplified environments with only two agents \citep{lin2019multi} and consider restrictive environments such as  competitive zero-sum settings \citep{wang2018competitive} or fully cooperative settings where all agents  share a common goal \citep{natarajan2010multi, le2017coordinated, Peng2020}. Additionally, some approaches such as \cite{lin2019multi} restrict themselves to simple multi-agent environments with discrete state and action spaces. The use of sub-optimal advisors in multi-agent general-sum settings with an arbitrary number of agents has been less explored, and to the best of our knowledge, there has been no comprehensive analysis of this approach, especially from a theoretical perspective.


\subsection{Motivational Examples}\label{sec:motivationalexample}

We  describe two motivational examples relevant to the goals of our paper. These examples provide an intuition about the kind of practical problems where our approach can be used and clarify the potential impact of this line of research. The first example uses a cooperative wildfire response setting and the second example uses the competitive product marketing domain.

\textbf{Motivational Example 1: }  Wildfire response is a  complicated process that requires systematic planning of important resources and a good understanding of wildfire behaviour for proper estimation and combat \citep{Thompson2018}. The firefighters and fire managers need to make many critical decisions related to wildfire control, and on many occasions, these decisions could be the difference between life and death \citep{thompson2019risk}. Additionally, these decisions have a high ecological impact. This is a multi-agent problem (multiple fire-fighters aim to fight fire) where artificial learning agents can learn suitable policies to aid in fire-fighting efforts \citep{nikitin2019development}. Machine learning, particularly reinforcement learning, has a huge potential  in this area but has been underutilized so far \citep{Jain2020}. Notably, in these sustainability-based domains, there is a general paucity of data \citep{tymstra2020wildfire}, since obtaining good quality high-resolution sensor data is expensive and hard. Hence, current state-of-the-art MARL algorithms are incapable of being used in such problems due to poor sample efficiency. Notably, current practical fire-fighting efforts use physics-based models \citep{rothermel1972mathematical} that help in predicting the spread of fires given current location and intensity. Despite being the current state-of-the-art, these models are  sub-optimal, have low accuracies \citep{jahdi2015evaluating}, and possess other problems like under-prediction bias and lack of generalizability to regions outside North America \citep{Cruz2010}. Thus, we have particular knowledge sources that are not optimal but are still used in practice (particularly due to lack of alternatives). Our work in this paper will enable MARL training to use these physics-based models to speed up learning. The policies that the MARL algorithms will finally arrive at will have the potential to be better than these physics-based models since the MARL algorithms will also simultaneously learn from data.

\textbf{Motivational Example 2: } Multi-agent algorithms have the potential to learn from available data and formulate effective marketing and price management strategies to improve financial profit for companies \citep{ganesh2019reinforcement}. However, the problem of poor sample efficiency prevents the usage of MARL for this problem, since many companies would find it difficult to procure sufficient data for MARL training. There are many mathematical marketing models in the literature that typically help companies formulate marketing strategies \citep{Eryigit2017}, however, these models are sub-optimal, with scope for improvement, especially in adapting to changing trends \citep{storbacka2020changing}. These models can serve as external knowledge sources that MARL training can leverage to learn good policies. 

Hence,  learning from possibly sub-optimal external content sources, which we broadly refer to as ``advisors'', is useful for MARL training. We  formally introduce this problem and study it further in this paper.

\subsection{Related Work}\label{sec:relatedwork}


\textit{Imitation learning} includes various methods for learning the behaviour of advisors, 
the simplest being \textit{behaviour cloning}, where supervised learning is used to mimic the advisor policy. This method dates back to the early 90s, where agents were shown to be successful in copying the behaviour of the demonstrator in autonomous driving tasks such as road following and perception \citep{pomerleau1998autonomous, pomerleau1991efficient}. This method comes with certain theoretical guarantees, where prior works have conducted formal analysis and established that near-optimal advisors are the easiest to imitate, requiring far fewer demonstrations than sub-optimal advisors to achieve the same performance as the advisors being imitated \citep{syed2010reduction}. Behaviour cloning is hard to generalize to different unseen environments since the agent is learning in a supervised fashion \citep{kiran2021deep}. Prior works in the area of behaviour cloning have also introduced methods to detect and safeguard against a few noisy/bad demonstrations \citep{zheng2014robust, hussein2021robust}, however, in general, the demonstrations are assumed to be near-optimal to enable learning reasonable policies.  Further, it has been noticed that behaviour cloning methods are prone to a problem of \emph{distribution drift}, where the trajectory distribution at the test time drifts away from the distribution learned from the advisor during training \citep{ross2011reduction, rajeswaran2017learning}.  In autonomous driving environments, on-policy data collection has been shown to mitigate this problem to some extent \citep{santana2016learning}. Some recent approaches propose off-policy solutions, along with techniques of expanding the input (image)-action space using data-augmentation methods \citep{codevilla2018end, laskey2017dart}. However, several other limitations like dataset bias and high variance in neural network-based solutions have been reported to limit the application of behaviour cloning to real-world environments \citep{codevilla2019exploring}.

Another popular imitation learning framework is \textit{inverse reinforcement learning (IRL)} \citep{ng2000algorithms}, where the objective is to learn the reward function from demonstrations. This framework typically assumes that the environment does not have a reward function and/or it is difficult to formulate good reward functions, but expert demonstrations of good behaviour are easier to obtain. A good example is autonomous driving, where formulating a complete reward function that covers all scenarios is hard while obtaining demonstrations of good driver behaviour is much simpler.   Initial approaches to IRL used maximum margin methods where an initial estimate of the reward function for the demonstrator keeps being iteratively improved, such that the performance of the demonstrator is at least more than a ``margin'' of the previous reward estimate for the demonstrator \citep{abbeel2004apprenticeship, ratliff2006maximum}. This iteration is repeated until no such improvement is possible. The problem with the maximum margin methods are their sensitivity to noise and imperfection in the demonstrator behaviour. To alleviate this problem, probabilistic approaches using principles of maximum entropy have been proposed for the IRL framework \citep{ziebart2008maximum, boularias2011relative}. These approaches reason over a set of possible behaviours, rather than monotonically improving upon estimates of reward or policies.  Neural networks have also been considered to learn a suitable reward function, where convolutional networks aim to map the relationship between input images to final rewards \citep{wulfmeier2015maximum}. Several techniques from supervised learning such as Gaussian processes \citep{rasmussen2003gaussian} and ensemble methods \citep{dietterich2000ensemble} have also been used in the IRL paradigm \citep{levine2011nonlinear, ratliff2006boosting}. A recent approach, Generative Adversarial Imitation Learning (GAIL), aims to recover the policy of the expert directly instead of extracting an explicit reward function using Generative Adversarial Networks (GANs) \citep{ho2016generative}. Since GAIL is not learning a reward function, it may not be considered an IRL technique and since it is not learning in a supervised fashion it may not be considered a behaviour cloning technique as well. This approach opened up a new class of methods in the intersection of imitation learning and generative adversarial networks \citep{miyato2018spectral, kuefler2017imitating}. Some of these approaches aim to extract an explicit reward function from the demonstrations using GANs \citep{finn2016connection, fu2018learning} and these can be considered to fall within the IRL framework.

The DAGGER algorithm introduced by \citet{ross2011reduction} is yet another imitation learning method. This algorithm has strong guarantees of performance while learning stationary deterministic policies in environments with an online advisor that can be queried interactively for additional feedback. However, the major aim of this algorithm is to obtain a policy that guarantees ``no-regret'' under its induced distribution of states and does not aim to improve upon the provided demonstrator. This method belongs to a wider set of online algorithms that aim to provide the no-regret guarantee while learning based on demonstrations from a perfect advisor \citep{hazan2007logarithmic, cesa2004generalization, kakade2008mind}. 

In general, imitation learning methods assume fully optimal or near-optimal advisors (or \emph{experts}), with the major goal being to copy the policy or behaviour of the experts. Since MARL problems are non-stationary, there is little expectation of obtaining perfect experts. Rather, we expect guidance on action choices that aid agents in learning and improving over time. Further, several multi-agent imitation-based methods in the literature are restricted to cooperative games \citep{barrett2017making, bogert2014multi} or games with strict restrictions on the nature of the reward function (such as having linear relations to some underlying feature) \citep{reddy2012, waugh2013computational}, in addition to assuming the availability of (near) optimal experts. On the other hand, other approaches based on IRL in multi-agent settings are restricted to zero-sum  games \citep{lin2017multiagent, wang2018competitive}. Some recent approaches aim to apply IRL in general-sum games \citep{yu2019multi, song2018multi}, however the assumption of availability of perfect experts are present in these works as well. A different approach from \citet{price2003accelerating} studies imitating more experienced peers in a multi-agent setting. However, this work considers a very restrictive environment, where each agent's dynamics is independent of other agents. Further, strict assumptions on the reward function exist, such as obtaining the same numerical reward over a part of the state space and having an independent reward function that does not depend on other agents' actions. Such assumptions are hard to verify in real-world multi-agent environments. Pure imitation methods generally lack the ability to exceed the performance of the available expert/advisor.

Another approach, \textit{Learning from Demonstrations (LfD)}, combines the imitation-based behaviour cloning approach of learning from expert demonstrations and the reinforcement learning-based approach of directly learning from the environment to reach a suitable goal.  Here, the objective is not to simply perform imitation learning, but to use imitation learning as a bootstrap mechanism that can speed up the training of RL agents. The RL algorithm enables further fine-tuning of the policy learned from imitation, which provides an opportunity for improving upon the performance of the advisor and learning goal-oriented policies. 
The algorithms in this approach use the environmental rewards along with expert/advisor demonstrations collected offline. Unlike the IRL paradigm, the environmental rewards are assumed to be available, and the rewards need not be extracted from suitable expert demonstrations. Our work in this paper is most related to the LfD framework. Early works in this area studied the LfD approach using model-based reinforcement learning, which found applications in classical RL environments like cart-pole \citep{schaal1997learning} and robot arm learning to balance a pendulum \citep{atkeson1997robot}. More recently, the model-free RL approaches gained prominence, especially after the exceptional performance shown by Deep $Q$-learning (DQN) on Atari games \citep{mnih2015human}. Model-free RL using replay buffers for training have been successful in the LfD framework as well \citep{piot2014boosted, chemali2015direct}. One state-of-the-art algorithm, \textit{Deep $Q$-learning from Demonstrations (DQfD)} \citep{hester2018deep} pre-trains the agent using demonstration data, keeps this data permanently for training, and combines an imitation-based hinge loss with a temporal difference (TD) error. This additional loss helps DQfD learn faster from demonstrations, but also makes DQfD prone to the problem of overfitting to the demonstration data.  
\textit{Normalized Actor-Critic (NAC)} \citep{jing2020reinforcement} drops the imitation loss and hence is more robust to imperfect demonstrations (from bad, almost adversarial advisors) than DQfD. However, we find that the performance of DQfD is at least as good as NAC for reasonable advisors (due to the imitation loss). \citet{goecks2019integrating} introduce the Cycle-of-Learning (CoL) algorithm that provides a novel LfD mechanism in which additional human inputs can be obtained during training in environments where humans are present in the loop to help agents train faster. The agents can make use of the human feedback in addition to the demonstration from other sources for training. Notably, LfD algorithms have found numerous applications in robotics. Some early applications have focussed on teaching primitive movements of motors to policy gradient based RL algorithms that can learn to perform a suite of relatively simple robotic tasks, such as manoeuvring gaps  \citep{peters2008reinforcement, theodorou2010reinforcement}. Recently, \citet{rajeswaran2017learning} introduce an effective algorithm that can learn highly complex dexterous manipulation such as object relocation and in-hand manipulation in response to sensor inputs. They introduce an algorithm, Demonstration Augmented Policy Gradient (DAPG) that uses an on-policy policy gradient \citep{richard1999policy} update as opposed to the off-policy nature of prior approaches such as \citet{hester2018deep}. \citet{zhu2018reinforcement} provides yet another approach that uses the LfD technique for robot manipulation tasks such as block lifting, block stacking and pouring liquids, where the agents need to learn effective visuomotor policies that can take actions in response to image inputs from a camera or sensor. The well-known RL algorithm, Deep Deterministic Policy Gradients (DDPG) \citep{lillicrap2015continuous} has been adapted to the LfD framework to incorporate human demonstrations and learn continuous control object manipulation robotic tasks such as peg-insertion and clip-insertion in both real and simulated environments \citep{vecerik2017leveraging}.

While powerful, the requirement for offline demonstrations commonly seen in prior works on LfD is not a good fit for MARL. In MARL, since the environments are non-stationary with dynamic opponents, real-time action advising would be more useful as the advisors can teach agents to adapt to changing opponents. Recently, some multi-agent approaches have used the LfD method, where the algorithms can, in-theory, do without fully optimal advisors \citep{silver2016mastering, wang2018competitive, hu2018knowledge}. These works, however,  are applicable only to a restrictive set of MARL environments. The Alpha-Go approach from \citet{silver2016mastering} and the approach from \citet{wang2018competitive} are restricted to zero-sum competitive games and cannot naturally extend to general-sum games. The work by \citet{hu2018knowledge} is designed for a very particular application (StarCraft Micromanagement), where the authors require the availability of specialized human-made opponents that contain specific pre-defined tactics about game-playing. In MARL, prior works have also studied peer-to-peer teaching, where each agent can learn when and what advice needs to be extended to peers in addition to learning how to use the available advice and improve its own learning \citep{leno2017simultaneously, omidshafiei2019learning, dayong2020differential}. The agents can switch between the roles of student or teacher at different points of time based on the situation. However, as evident from the setting, this method is only applicable to fully cooperative environments.

Expert demonstrations have also been used for \textit{reward shaping} in single-agent RL \citep{laud2004theory}, but this undermines the convergence guarantees of $Q$-learning based algorithms. Using prior knowledge to define a potential function over the state space provides an approach known as potential-based reward shaping, which preserves the total order over policies and does not affect the convergence guarantees \citep{ng1999policy, wiewiora2003principled}. The approach of reward shaping has been popular in single-agent RL \citep{ofir2018belief}, and very recently adopted to the MARL setting as well \citep{devlin2011empirical, tanmay2020learning, baicen2021shaping}. The work by \cite{tanmay2020learning} uniformly redistributes the rewards  accumulated at the end of a trajectory, to each state-action pair along the length of the trajectory. This approach is based on independent learning, which hurts convergence guarantees in the MARL setting \citep{tan1997multi}. Although it shows good empirical performance in single-agent tasks, this approach performs poorly in many MARL tasks, since the credit-assignment is not always accurate \citep{baicen2021shaping}. On the other hand, the approach by \cite{baicen2021shaping} adapts potential-based reward shaping to the MARL setting. There are two important limitations of this potential-based reward shaping approach, formulated by \cite{baicen2021shaping} in MARL. The first is that the shaping advice is a heuristic that needs to come from an expert who has complete prior knowledge about the entire problem domain and is capable of designing these heuristics. Obtaining such experts for complex MARL tasks is not always possible. Second, the shaping advice is provided at the beginning and then fixed for the duration of training. However, MARL requires adaptive advising at different parts of the state based on opponent behaviour.

Another group of methods such as \emph{human agent transfer} (HAT) \citep{taylor2011integrating} aim to summarize limited offline demonstrations (from sources like humans) into decision tree-based expert rules that boost learning online. \emph{Confidence-based human-agent transfer} (CHAT) \citep{wang2017improving} improves HAT by adding confidence measurements to safeguard against bad demonstrations. 
Both these methods demonstrate good performance in a multi-agent ``Keep Away'' game, although the algorithms themselves are single-agent only. These algorithms are independent methods that consider the other agent(s) as part of the state and do not track opponent actions or perform any kind of opponent modelling. In MARL environments, changes in opponent behaviour play a crucial role in determining the agent rewards \citep{shoham2008multiagent}. Particularly in competitive environments, these agents are expected to adapt between risk-seeking and risk-averse strategies based on the nature of their opponents \citep{conitzer2007awesome}. Without tracking opponent behaviour, this adaptability is not possible, since the differences in opponent behaviour in the same state would not stimulate different responses by the learning agent. A related approach known as the \emph{Teacher-Student Framework} \citep{Torrey2013Teaching} aims at accelerating the learning process under limited communication with an advisor. Almost all works in this framework assume either fully optimal or a moderate level of expertise for the advisors  \citep{Amir2016Interactive, Zhan2016Thoeretical}.

 At their core, RL (and MARL) algorithms are fixed point iterative methods that iterate recursively until no further iterations are desirable or required \citep{littman1996reinforcement}. An RL algorithm's  ability to converge to a fixed point provides a clear picture of the goal towards which the algorithm is progressing. The fixed point defines the completion of the task of an RL agent in the given environment and is like a terminal point in the sequence of RL iterations. For example, single-agent RL methods use the optimal $Q$-value that provides the maximum expected discounted sum of rewards in the given environment as the fixed point \citep{sutton1998introduction}. In MARL the fixed point is defined by the solution concept of the game. We  note that many of the prior works referenced here do not contain a theoretical analysis of the learning algorithms that provide conditions for fixed point guarantees regarding learning in MARL environments. Since all these methods involve the presence of external sources in the learning process, it is unknown if previous guarantees in RL convergence extend to these approaches. Without such guarantees, it is unclear whether the algorithms will learn reasonable policies in any generic environment (beyond those considered in the paper) and whether the algorithms will progress towards obtaining a suitable solution for the current problem. Since there are many solutions concepts in multi-agent environments \citep{shoham2008multiagent}, the kind of solutions that are likely to be obtained by these algorithms are unclear. Some approaches establish the existence of unique solution concepts in the particular model of multi-agent environment considered, yet still lack fixed point guarantees for any RL method and theoretical guarantees of arriving at the established solution concept by any learning algorithm \citep{yu2019multi, song2018multi}.

From the above discussion, we see that there are five fundamental limitations of existing algorithms that learn from external sources of knowledge, which hinders their applicability to real-world multi-agent environments. All prior works can be seen to contain one or more of these limitations. 1) Strict assumptions on the quality of advisors, 2) algorithms designed as single-agent based independent methods that consider other agents as simply part of the state in the environment, though the actions of these agents strongly influence the rewards for the learning agent, 3) offline advising, where demonstrations are collected and used for training agents offline, which is not well-suited for MARL due to the adaptive nature of opponents, where real-time feedback is critical, 4) algorithms designed towards a restricted class of MARL environments that are not generally applicable to many other environments, and 5) lack of thorough analysis for the conditions under which theoretical fixed point guarantees can be provided. 

\subsection{Our Approach}\label{sec:ourapproach}

In this paper, we study advising in MARL under the stochastic game model \citep{shapley1953stochastic} and aim to resolve the five major limitations of prior methods discussed in Section~\ref{sec:relatedwork}. We will explore the use of advisors in multi-agent reinforcement learning (MARL) under general-sum settings, where advisors suggest (possibly sub-optimal) actions to different agents in the environment. The advisors can belong to a broad class of categories, such as pre-trained policies, rule-based systems or other systems that continue to learn and/or adapt during gameplay. 
We do not make any assumptions or place constraints on the quality or type of the advisors themselves. The advisors are assumed to be available online so that agents can get real-time feedback while training in dynamic MARL environments. We will also assume that each agent  has access to at most one advisor.  Communication between the agent and the advisor is assumed to be free. The advisor receives the state of an agent and provides an action recommendation for the current state. These action recommendations can be deterministic or stochastic. 

We introduce a principled framework for studying the problem of \textbf{ADvising Multiple Intelligent Reinforcement Agents} (ADMIRAL). We propose two $Q$-learning based algorithms~\citep{watkins1992q}. 
The first algorithm, \textbf{ADvising Multiple Intelligent Reinforcement Agents - Decision Making} (ADMIRAL-DM), learns to act in the environment using advisor-guidance, while the second,  \textbf{ADvising Multiple Intelligent Reinforcement Agents - Advisor Evaluation} (ADMIRAL-AE),
provides a principled method to evaluate the usefulness of the advisor in the current MARL context. To the best of our knowledge, we are the first to propose a method to evaluate a knowledge source before using it for learning in MARL\footnote{In their comprehensive survey of literature that aims at reusing knowledge to accelerate MARL, \citet{da2019survey} state that several prior works \citep{Amir2016Interactive, Zhan2016Thoeretical} assume (at least) a moderate level of expertise for the advisors for action advising and are only applicable to single-agent environments, in line with our discussions in Section~\ref{sec:relatedwork}. While \citet{leno2017simultaneously} relax the assumption of optimal advisors by allowing agents to advise each other (in cooperative games), they do not provide a method to evaluate the available agents/advisors before using them.}. We  empirically study the performance of our algorithms in suitable test-beds, along with a comparison to related baselines. Theoretically, we  establish conditions under which we can provide fixed point guarantees regarding the learning of our ADMIRAL algorithms in general-sum stochastic games. 


Specifically, our contributions in this work are: 1) introducing a general paradigm for learning from external advisors in MARL, 2) analyzing two important challenges in learning from advisors in MARL, 3) presenting a suitable algorithm for each of these challenges, 4) establishing conditions for appropriate fixed point guarantees in these algorithms, 5) proving that it is possible to provide convergence results under less restrictive assumptions compared to prior work, and 6) empirically showing that our algorithm can adapt and perform well in many challenges in MARL.

\section{Background}\label{sec:background}

In this section, we present the key concepts used in this paper. We start with a brief introduction to single-agent RL before describing the generalized multi-agent RL setting we use in this paper.  

\begin{defn}
A Markov decision process (MDP) is defined as $\langle \mathcal{S},A,R,T, \gamma \rangle$ where $\mathcal{S}$ is the set of states, $A$ is a set of actions, $R:S\times A\mapsto \mathbb{R}$ is the reward function, $T:S\times A \times S\mapsto [0,1]$ is the transition function and  $0\leq\gamma<1$ is the discount factor.
\end{defn}
Given an MDP, it is assumed that the agent starts in some state $s \in \mathcal{S}$, takes some action $a\in A$, and transitions to another state $s'$ with probability $T(s,a,s')$ where it collects reward $R(s,a)$.
A \emph{policy}, $\pi:\mathcal{S}\mapsto \Delta A$, specifies a probability distribution over the set of actions for each state $s\in\mathcal{S}$ (the notation $\Delta$ is used to denote a probability distribution). The value, or expected discounted sum of future rewards, of following some policy $\pi$ when starting in state $s$ is defined as $v(s,\pi)=\sum_{t=0}^\infty \gamma^t E[r_t|s_0=s, \pi]$ where $r_t$ is the reward collected at time $t$. 
The objective  of the agent is to find the optimal policy, $\pi^*$, which maximizes the expected discounted sum of future rewards at each state, $v(s, \pi^*)$. An alternative approach is to use Q-values. A  Q-value, $Q_{\pi}(s,a)$, of a state-action pair, is the expected future reward estimate that can be obtained from taking action $a$ in the state $s$ and following the policy $\pi$ from there. The optimal policy $\pi^{*}$ is obtained by $\pi^{*}(s) = \argmax_{a \in A} Q^*(s,a)$ where $ Q^*(s,a)$ is the optimal action-value function that returns the maximum $Q$-value in all the states.

In multi-agent settings, the optimal policy of an agent may depend on the policies followed by the other agents.  Generalizations of MDPs, called \emph{stochastic games}, are used to model multi-agent settings and form the basis of MARL. 

\begin{defn}
A stochastic game is defined as $\langle \mathcal{S},N,\mathbf{A},P,\mathbf{R}, \beta \rangle $ where
$\mathcal{S}$ is a finite set of states, $N$ is the finite set of agents, $|N|=n$, and $\mathbf{A}=A^1\times\ldots \times A^n$ is the set of joint actions, where $A^i$ is the finite action set of an agent $i$, and  $\boldsymbol{a}=(a^1,\ldots,a^n)\in \mathbf{A}$ is the joint action where an agent $i$ takes action $a^i\in A^i$. Furthermore, $P:S\times\mathbf{A}\times S\mapsto [0,1]$ is the transition function, $\boldsymbol{R} = \{R^1, \ldots, R^n\}$ is the set of reward functions, where  $R^i:S\times\mathbf{A}\mapsto\mathbb{R}^n$ is the reward function of the agent $i$, and $\beta$ is the discount factor ($0\leq\beta<1)$.
\end{defn}

In a stochastic game, the common assumption is that all agents share the same set of states $\mathcal{S}$ (where $\mathcal{S}$ is the state space), which contains information about all agents participating in the stochastic game \citep{shapley1953stochastic}. The environment provides the (global) state to each agent participating in the stochastic game. At each time step $t$, each agent $i$ observes the current state $s \in \mathcal{S}$ and takes a local action $a^i \in A^i$ (where $A^i$ is called as the action space of the agent $i$). Subsequently, the agent obtains a reward $r^i$ according to its reward function $R^i$, and the joint action $\boldsymbol{a}$ of all agents in the environment at state $s$. The transition function, $P$, determines the transition of the environment to the next state, $s'$.   This transition depends on the current state ($s$) and the joint action ($\boldsymbol{a}$) of all agents in the environment. Further, the transition function is fixed and satisfies the constraint, $\sum_{s'\in \mathcal{S}} P(s' |s, \boldsymbol{a}) = 1$ for all $s \in \mathcal{S} $ and $\boldsymbol{a} \in \boldsymbol{A}$. Given a stochastic game, a \emph{joint policy} is represented by $\boldsymbol{\pi}=(\pi^1,\ldots,\pi^n)$, where $\pi^i$ is the stochastic policy followed by an agent $i$. 
As in the single-agent setting, each individual agent tries to maximize their value function. However, this optimization depends on the policies of others: $
     v^i(s, \pi^1, \ldots, \pi^n) = \sum_{t=0}^{\infty} \beta^t \E (r^i_t | \pi^1, \ldots, \pi^n, s_0 = s)$.

There are several formulations of the stochastic game model. The \emph{general-sum} model is the most general formulation, where the rewards that an agent receives at any time step can be related to the rewards obtained by other agents in an arbitrary fashion. Special cases of general-sum games are \emph{zero-sum} games that restrict the sum of rewards obtained by all the agents at any time step to be zero, and identical interest \emph{coordination} games that require all agents in the environment to obtain the same numerical reward at every time step. We will use the general-sum formulation in this work.

 It is useful to think of a stochastic game as a multi-period stage game.
In a stage game, agents select an individual action and then receive some (possibly different) payoff, which depends on the joint action taken.
This can be formally defined as follows.

\begin{defn}\label{def:stagegame}
    An $n$-player stage game is defined as $(\mathbf{A}, M^1, \ldots, M^n)$, where $M^k:\mathbf{A}\mapsto \mathbb{R}$ is agent~$k$'s payoff function, specifying a payoff for agent~$k$ for each joint action $(a_1,\ldots, a_n)\in\mathbf{A}$.
\end{defn}

The main solution concept we are interested in is the \emph{Nash equilibrium} \citep{nash20167}, namely a stable point in the joint policy-space. We first formally define a Nash equilibrium in a stage game, before moving on to the generalization of this concept in stochastic games. We switch terminology slightly and will refer to agents' strategies to be consistent with the game-theoretic literature, although we can use strategy and policy interchangeably. In particular, an agent's strategy in a stage game is simply a probability distribution over actions, given the underlying state $s$. 
Let $\phi^k$ be the strategy of agent $k$ in the stage game and $\phi^{-k}$ be the product of strategies of all agents other than $k$, $\phi^{-k} \triangleq \phi^1 \cdots \phi^{k-1} \cdot \phi^{k+1} \cdots \phi^n$.

\begin{defn}\label{def:nasheqstagegame}
    A joint strategy $(\phi^1, \ldots, \phi^n)$ can be considered as a Nash equilibrium for the stage game $(\mathbf{A}, M^1, \ldots, M^n)$, for $k = 1, \ldots, n$, if 
    \begin{equation}
        \phi^k \phi^{-k} M^k \geq \hat{\phi}^k \phi^{-k} M^k, \quad \textrm{for all } \hat{\phi}^k \in \phi(A^k)
    \end{equation}
    where $\phi(A^k)$ is the set of all probability distributions over $A^k$.
\end{defn}

\noindent The term $\phi^k \phi^{-k} M^k $ is a scalar value. The product of strategies denotes the product of probabilities of taking specific actions by an agent. This is multiplied with the value of that action as denoted by $M^k$. The dimensionality of $M^k$ is equal to the action space $|A|$. 

For a stochastic game, the strategies for an agent apply to the entire time horizon of the game.

 \begin{defn}\label{defn:nasheqstochasticgame}
     In a stochastic game $\Gamma$, a Nash equilibrium is a tuple of $n$ strategies $(\pi^1_{*}, \ldots, \pi^n_{*})$, such that for all states $s \in \mathcal{S}$ and agents $i = 1,\ldots, n$,
      \begin{equation}
    \begin{array}{l}
         v^i(s, \pi^1_{*}, \ldots, \pi^{i-1}_{*}, \pi^i, \pi_{*}^{i+1}, \ldots, \pi_{*}^n)  \leq v^i(s, \pi^1_{*}, \ldots, \pi^n_{*})
    \end{array}{}
\end{equation}
for all $\pi^i \in \Pi^i$ where $\Pi^i$ is the set of strategies available to the agent $i$. 

\end{defn}
 
\noindent That is, no agent has incentive to unilaterally change their strategy in a Nash equilibrium.
Now, we define the Nash $Q$-function~\citep{hu2003nash} as follows,

 \begin{defn}
     Agent $i$'s Nash $Q$-function is defined as the sum of the agent $i$'s immediate reward and its discounted future rewards when all agents follow a joint Nash equilibrium strategy $(\pi^1_{*}, \ldots, \pi^n_{*})$ 
     \begin{equation}
     \begin{array}{l}
    Q^i_{*}(s, \boldsymbol{a}) = r^i(s, \boldsymbol{a})  +   \beta \sum_{s' \in \mathcal{S}} P(s'| s, \boldsymbol{a}) v^i(s', \pi^1_{*}, \ldots, \pi^n_{*})
        \end{array}
\end{equation}
where $r^i(s, \boldsymbol{a})$ is the immediate one-stage reward of the agent $i$ at state $s$ and the corresponding joint action $\boldsymbol{a}$, and $v^i(s', \pi^1_*, \ldots, \pi^n_*)$ is the agent $i$'s total discounted reward over infinite periods starting from $s'$, given that all agents follow the joint equilibrium strategy. 

 \end{defn}

\noindent The $Q$-values of the Nash $Q$-function are denoted as the \emph{Nash $Q$-value} in \citet{hu2003nash}. Finally, we define an approximate Nash equilibrium concept, an $\epsilon$-equilibrium, which bounds the benefits of agents' deviations from the joint Nash equilibrium strategy.
\begin{defn}\label{def:epsionadvisorq}

In a stochastic game $\Gamma$, a joint strategy $(\pi^{1}_{*'}, \ldots, \pi^{n}_{*'})$ is an  ($\epsilon$)-equilibrium if it satisfies (for all $\pi^{i'} \in \Pi^i$ and $\forall s$)
\begin{equation}
    \begin{array}{l}
        v^i(s, \pi^1_{*'}, \ldots, \pi^{i-1}_{*'}, \pi^{i'}, \pi^{i+1}_{*'}, \ldots, \pi^n_{*'}) 
      - v^i(s, \pi^1_{*'}, \ldots, \pi^n_{*'})  \leq \epsilon
    \end{array}
\end{equation}
\end{defn}

As seen by the definition of the Nash equilibrium (Definition~\ref{defn:nasheqstochasticgame}), given the strategies of the other agents, the Nash Equilibrium guarantees that any given agent is obtaining the best possible payoff. This is the best guarantee we can provide in a general-sum stochastic game setting with fully independent agents \cite{hu2003nash}, without any restrictions on the nature of the environment or the agent. Hence, we choose to use the Nash equilibrium as our solution concept.

\section{Advisor Q-Learning}\label{sec:advisorqlearning}

In this section, we introduce the problem of ADvising Multiple Intelligent Reinforcement Agents (ADMIRAL). We have a set of agents that can either take an action using their own policy or consult an advisor that provides action recommendations, given the current state,  at each time step. Each agent has access to at most one advisor. An advisor can be any external source of knowledge, such as a rule-based agent, a pre-trained policy, or any other system that continues to learn during gameplay.  The advisor is assumed to be available online with the possibility of providing instantaneous action recommendations to an agent. Furthermore, we consider a centralized training setting where agents can observe the state, the local actions, and the rewards of all other agents. Another specification is that in our setting, the advisor and agent communication is free, while the agents cannot communicate among themselves. There is no communication amongst the agents themselves since establishing reliable communication protocols amongst every single agent may be prohibitively expensive in large multi-agent environments. For example, in the case of wildfire fighting, it has been noted that communication, even if available, could be very limited since the individual agents (fire-fighters) may be very far apart \citep{phan2008cooperative}. However, all agents can receive global inputs from satellite/airborne sensors \citep{leblon2012use}. The centralized setting we consider is similar to that in prior works \citep{hu2003nash}. Subsequently, in Section~\ref{sec:nnimplementation} we will show that our method can be adapted to the popular centralized training and decentralized execution \citep{lowe2017multi} paradigm, which provides a suitable relaxation of the centralization assumption.

We study two challenges that arise when learning from advisors in MARL and provide algorithms for each problem. 
The first challenge is learning a policy with the help of an advisor. We introduce an algorithm for this challenge, which we call ADvising Multiple Intelligent Reinforcement Agents - Decision Making (ADMIRAL-DM). In this setting, each agent aims to learn a suitable policy that provides the best responses to the opponent(s) and performs effectively in the given multi-agent environment. An agent has access to a (possibly sub-optimal) advisor that could be leveraged to improve the speed of learning. Hence, at each time step, the agent could choose to follow its own policy or that of the advisor. In  early stages of learning, the dependence on the advisor is greater, and this dependence gradually declines  as the agent's policy improves. If an agent does not receive an action recommendation at some time step, they can simply use their own current policy. Hence, we do not require the advisor to be capable of providing action advice at every state in the state space. A schematic of this setting is provided in Figure~\ref{fig:ADMIRALDM}. 

\begin{figure}
    \centering
    \includegraphics[width=0.6\textwidth]{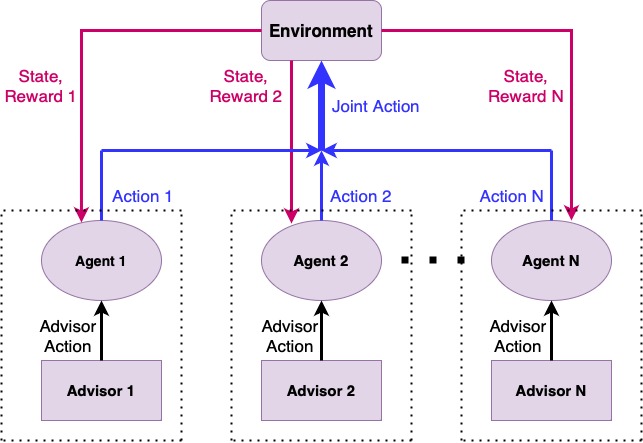}
    \caption{Architecture of the ADMIRAL-DM algorithm. Each agent has access to at most one advisor. The advisor provides action recommendations to the agent, and the agent can decide how much to rely on the advisor.}
    \label{fig:ADMIRALDM}
\end{figure}

The second challenge is the evaluation of the advisor itself. We provide an algorithm called ADvising Multiple Intelligent Reinforcement Agents - Advisor Evaluation (ADMIRAL-AE) that tackles this challenge. Before using an advisor, it is beneficial to  evaluate it  to determine whether the advisor will provide effective advice. Hence, we propose a `pre-learning' phase (i.e., a distinct phase before the beginning of training of ADMIRAL-DM) where the ADMIRAL-AE is used with the goal of getting a good understanding of the capabilities of the advisor in the current environment. We  assume that a single advisor exists in the system and this advisor could be evaluated by one or more agents. A schematic of this setting is provided in Figure~\ref{fig:ADMIRALAE}. 


\begin{figure}
    \centering
    \includegraphics[width=0.6\textwidth]{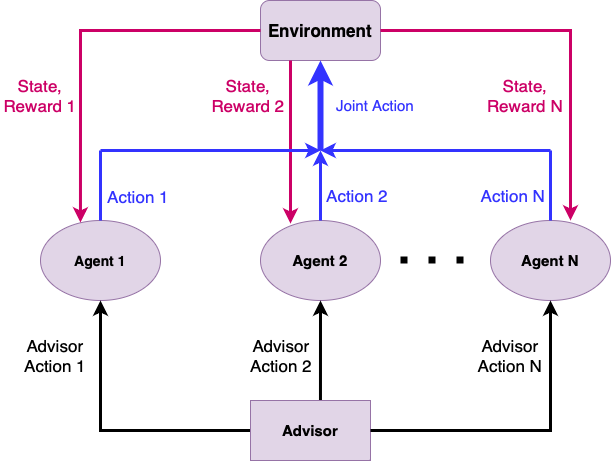}
    \caption{Architecture of the ADMIRAL-AE algorithm. All agents aim to evaluate a single advisor. The advisor can be queried by each agent to obtain action recommendations, which the agent can choose to execute.}
    \label{fig:ADMIRALAE}
\end{figure}

Many real-world multi-agent application domains may have a state-of-the-art solution method being currently used in practice. This method could be normally useful, but may not be suitable for every context and against every possible opponent in MARL environments. Hence, it is possible to face situations in which the available advisor  provides advice that is optimal or close to optimal for some states, but whose advice is poor in others.  Intuitively, to learn well, agents must listen more to the advisor when the available advisor is good and well-suited to the current context and listen less (or not at all) to the available advisor when it is bad. To make this issue more concrete, recall the 
 wildfire example discussed in Section~\ref{sec:motivationalexample}. A well-known fire simulator model is the Farsite \citep{finney1998farsite} simulator that is actively  used in practice to model the spread of fire. This fire simulator model predicts the spread of fire in the future and forms the basis of  fire-control strategies of firefighters. Notably, through extensive experimentation, it has been reported that this model does not give satisfactory performances under certain environmental conditions such as extreme downslope winds \citep{zigner2020evaluating}. However, since real fires are affected by many dynamic environmental factors at the same time, coming up with suitable heuristics for deciding the fit of the given model is almost impossible. Thus, it is important to understand when an advisor is providing useful advice and when its advice has limitations. Always relying on a poor advisor may lead to poor policies being learned or hurt the overall sample complexity of the algorithms. Thus, we recommend evaluating the advisor before using it for learning, if possible. We argue that it is important to  decouple the problem of ``advisor-evaluation'' where the objective is to study the suitability of the advisor in the given MARL environment from the problem of ``decision-making'' which aims to improve the training of MARL algorithms by making use of advisor knowledge.
 

We propose to conduct a `pre-learning' phase before training  the decision-making algorithm. In this phase we perform an ``advisor-evaluation'' study,  either in simulation or in real-world environments (particularly in environments that are not safety-critical like recommendation systems and board games such as chess) that helps to answer two important questions before beginning to learn from the advisor. 1) Does the advisor have some good knowledge of the domain that could be helpful for the MARL algorithms during training? and 2) How much should one listen to the given advisor? Performing advisor evaluation before beginning the process of learning helps the agent gain a good understanding of the advisor before learning from it, which would help in leveraging it effectively in learning a suitable decision-making policy. Most importantly, in MARL, advisors (especially good ones) could continue to learn and/or adapt online, during gameplay, based on the nature of opponents. Such advisors cannot be evaluated effectively unless their learning and evolution is captured using a principled method designed to evaluate them. If advisors are being evaluated by agents along with agents simultaneously using them for learning decision-making policies, the evaluation becomes limited and advisors are prone to be discarded quickly based on the metrics of performance and consistency at the early stages of training. During this time, the advisor could be still evolving its strategies based on the nature of the opponent(s), and hence, the evaluation is not accurate. 
For example, in the algorithm CHAT from  \citet{wang2017improving}, the confidence of an agent on a demonstrator is determined based on the demonstrator's consistency in action recommendations for the same state. This approach works well in the single-agent context. However, in MARL, due to the adaptive nature of opponents, good advisors could adapt based on the opponent and possibly evolve mixed (stochastic) strategies that will provide different actions at the same state. An approach such as CHAT would reduce the confidence of such an advisor, but this is not accurate since the advisor is good and should be leveraged more for better performance. 

\subsection{Decision-Making Using Advisor}

Our first algorithm learns to  act in the environment by leveraging the available advisor. 



\begin{algorithm}[t]
  \caption{ADvising Multiple Intelligent Reinforcement Agents - Decision Making (ADMIRAL-DM)}
  \label{alg:advisorQ2}
\begin{algorithmic}[1]
  \State Set $t=0$, get the initial state $s_0$. Let the learning agent be indexed by $j$
  \State For all $j$, $s \in \mathcal{S}$, and $a^j \in A^j$, let $Q^j_0(s, \boldsymbol{a})=0$, where $\boldsymbol{a} = (a^1, \ldots, a^n)$. Initialize a value for hyperparameters $\epsilon$ and $\epsilon'$ (i.e. value for $\epsilon_0$ and $\epsilon'_0$)
  \State Define policy derived from $Q$ to return a random action with probability $\epsilon_t$, advisor suggested action with probability $\epsilon'_t$ and greedy action with probability $1 - \epsilon_t - \epsilon'_t$ 
  \State Choose $a^j_0$ at state $s_0$ for each $j$  using policy derived from $Q$
  \While{$Q^j$ is not constant for each $j \in \{1, \ldots, n\}$}
  \State For each agent $j$, execute the action $a^j_t$ and observe $r_t^1, \ldots, r^n_t$; $a^1_t, \ldots, a^n_t$; and $s_{t+1} = s'$
  \State For each $j$, choose the next greedy action for all other agents from the copy of their respective
  \Statex \hskip1.5em  policies using $s'$. The next greedy actions are chosen using the current observed actions of 
  \Statex \hskip1.5em other agents
  
  \State For each $j$, let $u$ be a uniform random number between 0 and 1 
  
  \If{$u$ < $\epsilon'_t$}
  
    \State Obtain next action $a^{j}_{t+1} = a^{j'}$ from the advisor (using state $s'$)

    \ElsIf{$u$ > $\epsilon'_t$ and $u$ < $\epsilon_t$}
    
    \State Set the next action $a^{j}_{t+1} = a^{j'}$ as a random action from the action space $A^j$
    
    \Else
    
    \State Choose a greedy action $a^{j}_{t+1} = a^{j'}$ from the $Q$-function using $s'$ and the next greedy  
    \Statex \hskip3.0em actions of other agents
    
    \EndIf

  \State Update $Q^j_t$ for each $j \in \{1, \ldots, n\}$ using Eq.~\ref{Eq:qupdate}
  
  \State Let $t:=t+1$
  \State For each agent $j$, set the current action $a^j = a^{j'}$ and current state $s = s'$
  \State At the end of each episode, linearly decay $\epsilon_t$ and $\epsilon'_t$
\EndWhile
   
\end{algorithmic}
\end{algorithm}

ADMIRAL-DM is described in  Algorithm~\ref{alg:advisorQ2}. First, for simplicity, we  assume that all the agents in the environment use the same algorithmic steps for learning, as done in prior work \citep{hu2003nash}. Subsequently, the same algorithm can be used in other scenarios where different agents use different algorithms for learning as well. Further, as in \citet{hu2003nash}, we assume that all agents maintain a copy of the $Q$-updates of the other agents. This is possible since, during training, agents are in a centralized setting and can observe the local actions and rewards of all other agents at each time step. This helps in predicting the actions of opponents needed for providing the best responses. 

A learning agent (represented by $j$) starts with an arbitrary initialization of its  $Q$-value $Q^j_0(s, \boldsymbol{a})$. One such assignment could be to set $Q^j_0(s, \boldsymbol{a}) = 0$, for all agents $j$, all states $s \in \mathcal{S}$ and actions $a^1 \in A^1, \ldots, a^n \in A^n$. Recall, in this setting, each agent has access to an online advisor that it could query during learning. Whenever the agent needs to choose an action, it does so based on its current $Q$-value, the advisor's recommendation, or simply a random action, as the case may be (lines~8--15). The dependence on the advisor's recommendation and the random exploration is captured by two hyperparameters, $\epsilon'_t$ and $\epsilon_t$, respectively. This action is subsequently executed and the actions and rewards of the other agents are observed, including the next state $s'$ (line~6). During training, at each time step, the agent picks the possible next actions of other agents in line~7 using its copy of other agents $Q$-values. Then, in lines~8--15, the agent $j$ picks its next action based on ADMIRAL-DM algorithm's policy which chooses a random action and an advisor action with diminishing probabilities, and a greedy action with increasing probabilities, such that it becomes greedy in the limit with infinite exploration (GLIE). Thus, the agent is guaranteed to train without any further advisor influence after some finite time $t$ in the training process. Accordingly, the dependence on the advisor's recommendation is decayed linearly (line~19). In this process, the dependence of an agent is more on the advisor during the earlier stages of learning, when its own policy is quite bad. This dependence gradually reduces as its own policy improves. The $Q$-values are updated (line~16) following an update scheme given by, 
\begin{equation}\label{Eq:qupdate}
    \begin{array}{l}
         Q^j_{t+1}(s,\boldsymbol{a})  =  (1-\alpha_t)Q_t^j(s,\boldsymbol{a}) +
         \alpha_t[r_t^j + \beta Q^j_{t}(s', \boldsymbol{a'})]
    \end{array}
\end{equation}
where $\boldsymbol{a} = ( a^{1}, \ldots, a^{n}) $ denotes the actions for all agents at state $s$ and $ \boldsymbol{a'} = (a^{1'}, \ldots, a^{n'})$ denotes the actions for all the agents at state $s'$. $\beta$ denotes the discount factor, and $\alpha_t \in (0,1)$ is the learning rate. The other variables have the usual meaning described in Section~\ref{sec:background}. The algorithm's steps are repeated continuously until either the $Q$-values fully converge or come within a small threshold of convergence, as is commonly done in practice \citep{sutton1998introduction}. 



The ADMIRAL-DM algorithm's time and space complexity can be compared to the NashQ algorithm from \citet{hu2003nash}. At each time step, a learning agent $j$ needs to update $(Q^1, \ldots, Q^n)$, for all states $s \in \mathcal{S}$, and all actions $a^1 \in A^1, \ldots, a^n \in A^n$. Let the total number of states in the environment be represented by $|\mathcal{S}|$, and $|A^j|$ be the total number of actions in the action space of the agent $j$. Further, assuming that $|A^1| = \cdots = | A^n| = |A|$, we get the total number of entries in $Q^j$ to be $|\mathcal{S}||A|^n$. If the learning agent needs to update a total of $n$ $Q$-tables, then the space complexity can be given by $n|\mathcal{S}||A|^n$.  Thus, regarding the space complexity, a tabular version of ADMIRAL-DM is linear in the number of states, polynomial in the number of actions, and exponential in the number of agents, which is the same as the guarantees for the NashQ algorithm in \citet{hu2003nash}. However, in the case of time complexity, ADMIRAL-DM has the same guarantees as given for the space complexity, since in the worst case, each entry in the table needs to be queried before updating a $Q$-value. Note that this is better than the algorithm by \citet{hu2003nash}, which had exponential time complexity in the states and actions, even in the case of a two-player game. This is because the NashQ algorithm has a requirement of determining the Nash equilibrium at each stage game, which has exponential time complexity even for two-player games \citep{neumann1928theorie}. We do not have this requirement.

\subsection{Evaluation Of Advisors}\label{sec:offpolicy}

The second challenge is that of evaluating the advisor to determine its nature and its suitability for the given context. As described in the previous sub-section, the ADMIRAL-DM uses an advisor if one exists. In this sub-section, we provide an algorithm (ADMIRAL-AE) that evaluates a potential advisor and helps guide the configuration of Algorithm~\ref{alg:advisorQ2} by setting the initial value of $\epsilon'$. The objective is to make an agent following ADMIRAL-DM listen more to good advisors and listen less (or not at all) to bad advisors. The ADMIRAL-AE is used in the `pre-learning' phase discussed earlier, where agent(s) are evaluating the advisor in the context of the given environment and opponents.    



We start with a definition of an advisor strategy.

\begin{defn}\label{defn:advisorstrategy}
In a stage game, $(\boldsymbol{A}, M^1, \ldots ,M^n)$ an advisor strategy (or advisor solution) is a tuple of $n$ strategies $(\sigma^1, \ldots, \sigma^n)$, an advisor specifies for all $n$ agents.

\end{defn}


Since the advisor is defined to be a general model that receives a state and provides action recommendations to an agent in the multi-agent environment, the advisor, in general, is capable of predicting the actions of other agents as well. All the advisor's predictions and/or recommendations towards every agent in the environment constitute the advisor solution as in Definition~\ref{defn:advisorstrategy}. Here, we do not restrict our setting to environments where the advisor can predict the actions of all agents. In practice, it is possible to encounter situations where the advisor cannot predict or recommend actions to some agents. In this case, these agents can get random or placeholder strategies in the advisor solution formulated in Definition~\ref{defn:advisorstrategy}. 

Similar to our decision-making setting, we first provide an algorithm that will have all the agents in the environment using the same algorithmic steps. In each state $s$, and time $t$ during training, we form a stage game $(Q^1_t(s), \ldots, Q^n_t(s))$ using the $Q$-values of all agents. Here, the notation $Q^j_t(s)=(Q^j_t(s,a^1),\ldots, Q^j_t(s,a^n))$. The advisor receives the state and provides its predictions/recommendation for each agent, which will form the advisor solution $(\sigma_t^1(s), \ldots, \sigma_t^n(s))$ for the stage game $(Q^1_t(s), \ldots, Q^n_t(s))$. 
In the stochastic game, having access to the state and the advisor allows an agent to have access to the full advisor solution at every state $s \in \mathcal{S}$ for all time $t$.

\begin{algorithm}[t]
   \caption{ADvising Multiple Intelligent Reinforcement Agents - Advisor Evaluation (ADMIRAL-AE)}
   \label{alg:advisorQ}
\begin{algorithmic}[1]
  \State Set $t=0$ and get the initial state $s_0$. Let the learning agent be indexed by $j$ 
  \State For all $j$, $s \in \mathcal{S}$, and $a^j \in A^j$, let $Q^j_0(s, \boldsymbol{a})=0$ where $\boldsymbol{a} = ( a^1, \ldots, a^n )$
  
  \State Set the value of hyperparameters $\eta$ and $\eta'$
  
  \While{$Q^j$ is not constant for each $j \in \{1, \ldots, n\}$}
  
  \State For each $j$, let $u$ be a uniform random number between 0 and 1

  \If{$u$ < $\eta'$}
  
    \State Obtain action $a^{j}_t$ for the current state $s_t$ from the advisor

    \ElsIf{$u$ > $\eta'$ and $u$ < $\eta$}
    
    \State Set the action $a^{j}_t$ as a random action from the action space $A^j$
    
    \Else
    
    \State Choose a greedy action $a^{j}_t$ from the $Q$-function using $s_t$ and the observed previous  
     \Statex \hskip3.0em actions of other agents
    
    \EndIf
  
  \State Execute $a^j_t$ and then observe $ a^1_t, \ldots, a^n_t$;  $r^j_t$; and $s_{t+1} = s'$ for each $j \in \{1, \ldots, n\}$
  \State Obtain the advisor solution from the advisor for state $s'$
  \State Update $Q^j_t$ for each $j \in \{1,\ldots,n\}$ using Eq.~\ref{eq:updateoffpolicyQ}
 \State Let $t:=t+1$ and current state $s_t = s_{t+1}$
\EndWhile
   
\end{algorithmic}
\end{algorithm}

Algorithm~\ref{alg:advisorQ} describes our ADMIRAL-AE algorithm. A  learning agent $j$ starts with an arbitrary value of $Q^j_0(s, \boldsymbol{a})$, which represents the value at the initial time step $t=0$. We define an action selection scheme (lines~5--12) that chooses to directly use the advisor's recommendation with probability $\eta'$, a random action with probability $\eta$ and an action that maximizes the $Q$-values at the current state with probability $1- \eta- \eta'$. The idea is to mix between directly following the advisor at the current time step and choosing an action that maximizes the value of following the advisor at later stages for the action selection. We also perform a small percentage of random actions to ensure sufficient exploration of the environment. At each time $t$, the agent $j$ observes the current state $s$, and takes a local action $a^j$ (line~13) and observes the action of all agents (including itself), the reward it obtains and the new state $s'$. Note that, unlike the decision-making setting, here we do not require all agents to maintain copies of the updates of other agents and hence the rewards of other agents are not required to be observed by the agent $j$. The agent then obtains the advisor solution (Definition~\ref{defn:advisorstrategy}) from the advisor for the next state $s'$ (line~14). Subsequently, each agent $j$ updates its $Q$-value as follows (using $\beta \in [0,1)$, 
 as the discount factor): 
\begin{equation}\label{eq:updateoffpolicyQ}
    \begin{array}{l}
         Q^j_{t+1}(s,\boldsymbol{a})  = (1-\alpha_t)Q_t^j(s,\boldsymbol{a}) +  
         \alpha_t[r_t^j + \beta AdvisorQ_t^j(s')]
    \end{array}
\end{equation}
where $\alpha_t \in (0,1)$ is the learning rate. The term 
$AdvisorQ_t^j(s')$, is the total value (payoff) that the agent $j$ will obtain at the state $s'$ when all agents (including itself) play the advisor solution. This is calculated as $AdvisorQ_t^j(s') = \sigma^1_t(s') \cdots \sigma^n_t(s') \cdot Q^j_t(s')$, where $(\sigma^1_t(s'), \ldots, \sigma^n_t(s'))$ denotes the advisor solution at state $s'$ and time $t$. This can be seen as a solution to the stage game $(Q^1_t(s'), \ldots, Q^n_t(s'))$, since the value of each agent's payoff at state $s'$ is reflected in their corresponding $Q$-values at state $s'$. Since the advisor recommendation to each agent can be a stochastic sample, the $\sigma^j_t(s')$ is interpreted as a vector that contains the probability of taking each action in the action space of $j$. Similarly, from the $Q$-function of the agent $j$, we can obtain $Q^j_t(s')$, which consists of the $Q$-value of taking each action in the action space of $j$. Hence, $AdvisorQ^j_t(s')$ is a scalar value obtained using a component-wise multiplication of the advisor solution and the $Q$-values. 

The $Q$-values of all agents are updated using the advisor strategies at each iteration, as given in line~15 of Algorithm~\ref{alg:advisorQ}. The above-described steps continue until convergence, or until the $Q$-values come within a small threshold of convergence, as in ADMIRAL-DM.

Recall that the primary purpose of this algorithm is advisor evaluation. After implementing ADMIRAL-AE using the given advisor in the `pre-learning' phase, the performance of the algorithm can be used to determine $\epsilon'_0$ (hyperparameter of ADMIRAL-DM) in different ways. We provide one heuristic in this paper. We propose that the performance of ADMIRAL-AE (in terms of cumulative rewards) be compared against the maximum possible performance of any algorithm (maximum cumulative rewards). The ADMIRAL-AE's performance using the given advisor should lie between the maximum possible performance in that environment and the performance of ADMIRAL-AE using a random advisor. This can then be normalized in the range of $[0,1]$ to determine a value for $\epsilon'_0$. This normalization is given in Eq.~\ref{eq:normalize}. 

\begin{equation}\label{eq:normalize}
    \epsilon'_0 = \frac{CR - RCR}{MCR - RCR}
\end{equation}

\noindent where $CR$ denotes the cumulative reward obtained by ADMIRAL-AE using the advisor (averaged across multiple trials), $RCR$ denotes the cumulative reward obtained by ADMIRAL-AE using a random advisor (averaged across multiple trials), $MCR$ denotes the maximum possible cumulative reward in the given environment \footnote{We clarify that in this method if the RCR and MCR are not very “tight,” the difference between almost similar performing advisors could become small. However, this will not be a problem for our method, since the ADMIRAL-DM can also learn directly through environmental interactions.}. To be more accurate, a correction can be applied to $MCR$ to compensate for the loss in performance from random exploration in ADMIRAL-AE (hyperparameter~$\eta$). 

After obtaining the value of $\epsilon'_0$ from ADMIRAL-AE, this hyperparameter is used in the training of ADMIRAL-DM where its value is linearly decayed in line with the steps in Algorithm~\ref{alg:advisorQ2}. We experimentally illustrate these steps later in Section~\ref{sec:gridworldappendix}. A more elaborate study demonstrating the effectiveness of this technique is given in Section~\ref{sec:pommerman}.

Another way of using ADMIRAL-AE is to study the effectiveness of the available advisor against simulated or baseline opponents (Section~\ref{sec:experimentswithmaeqlee}). An important advantage of ADMIRAL-AE is in situations of adapting advisors, as discussed earlier. An experimental illustration of this advantage is given in Appendix~\ref{sec:adaptive}. The ADMIRAL-AE algorithm is off-policy, as the update policy (line~15) and policy followed (lines~5--12) are different. Due to this off-policy nature of ADMIRAL-AE, we do not require an agent to follow the advisor at every state in this setting. The evaluation is happening through the $Q$-values, while the action selection policy can be independent of the policy being updated as in any off-policy algorithm. The convergence guarantees in off-policy methods do not require using specific action selection policies as long as sufficient exploration is guaranteed \citep{jaakkola1994convergence, sutton1998introduction}. 

We would like to clarify that our main algorithm, ADMIRAL-DM, uses the advisor in a fully-online fashion and does not require any `pre-learning' for implementation. ADMIRAL-AE is a principled method that helps in determining how much to listen to an advisor (through the hyperparameter $\epsilon'_0$) in an optional `pre-learning' phase. If the `pre-learning' phase is not conducted, an approximate value for $\epsilon'_0$ can still be obtained using experimental heuristics. Using such an approximate value is not a problem since ADMIRAL-DM is also capable of learning from environmental rewards (through direct environmental interactions), in addition to the advisor. Here the advisor only aims to accelerate the process of training. Hence, the use of ADMIRAL-AE does not violate our contribution of relaxing the offline limitation in prior methods. 

In a tabular implementation of the ADMIRAL-AE algorithm, both space and time complexities will be linear in the number of states, polynomial in the number of actions, and exponential in the number of agents, same as that described for ADMIRAL-DM. However, the space and time complexity of ADMIRAL-AE can be represented by $|\mathcal{S}||A|^n$. Here, notice that the complexity does not have the product term of the number of agents $n$, unlike the requirement for ADMIRAL-DM. This is due to the fact that ADMIRAL-AE does not have the requirement of each agent maintaining copies of the $Q$-values of other agents as  in ADMIRAL-DM. As described in the previous sub-section, this time complexity of ADMIRAL-AE is much better than that of NashQ \citep{hu2003nash}. 


\subsection{Illustrative Example For Algorithm~\ref{alg:advisorQ}}\label{sec:algorithmexample}

In this sub-section, we show the calculations of some steps in the $Q$-updates of Algorithm~\ref{alg:advisorQ} for a single state system, to serve as a demonstration of this algorithm. Our objective is to provide a practical illustration of the various steps involved since, to the best of our knowledge, pre-evaluation of advisors have not been considered before. Since Algorithm~\ref{alg:advisorQ2} has similarities to the well-known $Q$-learning algorithm \citep{watkins1992q}, we omit a demonstrative example for Algorithm~\ref{alg:advisorQ2}.

Let us consider a two agent game with all the initial $Q$-values set to 0. The first agent (column agent) can perform two actions ``Up'' and ``Down'' and the second agent (row agent) can also perform two actions ``Left'' and ``Right''. The system has only one state, $s_1$. Let the learning rate $\alpha$ be 0.9 and discount factor $\beta$ be 0.9. Let us assign the hyper-parameters $\eta = 0.05$ and $\eta' = 0.45$. At the initial state, at time $t=0$, the stage game constructed from the $Q$-values of both the agents is given in Table~\ref{tab:initials1}.

\begin{table}
\subfloat[Initial stage game at time $t=0$\label{tab:initials1}]{
\begin{tabular}{||c | c | c||} 
 \hline
 ($Q^{1}_0$, $Q^{2}_0$) & Left & Right \\ [0.5ex] 
 \hline\hline
 Up & (0,0) & (0,0) \\ 
 \hline
 Down & (0,0) & (0,0) \\ [1ex] 
 \hline
\end{tabular}
}
\subfloat[Stage game at time $t = 1$ \label{tab:initials2}]{
\begin{tabular}{||c | c | c||} 
 \hline
 ($Q^{1}_1$, $Q^{2}_1$) & Left & Right \\ [0.5ex] 
 \hline\hline
 Up & (1.8, 1.8) & (0,0) \\ 
 \hline
 Down & (0,0) & (0,0) \\ [1ex] 
 \hline
\end{tabular}
}
\subfloat[Stage game at time $t=2$\label{tab:initials3}]{
\begin{tabular}{||c | c | c||} 
 \hline
 ($Q^{1}_2$, $Q^{2}_2$) & Left & Right \\ [0.5ex] 
 \hline\hline
 Up & (2.34, 2.34) & (0,0) \\ 
 \hline
 Down & (0,0) & (0,0) \\ [1ex] 
 \hline
\end{tabular}
}
\caption{Stage games constructed in the example.}
\end{table}



At state $s_1$, in time $t=0$, let us assume that both agents decide to use the advisor recommended actions ``Up'' and ``Left''  respectively. They execute these actions and obtain a reward of 2 each. Now the agents receive the next state, which is $s_1$ (single state system). Let the advisor solution at this state for the column agent be $\sigma^{1}_0(s_1) = (1, 0)$ and that for the row agent be $\sigma^{2}_0(s_1) = (1, 0)$. The first value of the tuple in this notation is the probability of taking the first action, and the second value of the tuple is the probability of the second action for the respective agents. This means that the advisor recommends both the agents to perform the ``Up'' and ``Left'' actions respectively, with probability 1, and the other action with probability 0. The Q update will be as follows: 
\begin{equation}
    \begin{array}{l}
Q^{1}_1(s_1, \textrm{Up}, \textrm{Left}) =  Q^{1}_0(s_1, \textrm{Up}, \textrm{Left})   +  \alpha \Big(r^{1}_0 + \beta AdvisorQ^{1}_0(s_1) - Q^{1}_0(s_1, \textrm{Up}, \textrm{Left}) \Big)
\\ \\ 
Q^{1}_1(s_1, \textrm{Up}, \textrm{Left}) = Q^{1}_0(s_1, \textrm{Up}, \textrm{Left})  +  \alpha \Big(r^{1}_0 + \beta \sigma^1_0(s_1) \cdot \sigma^2_0(s_1) \cdot Q^1_0(s_1) - Q^{1}_0(s_1, \textrm{Up}, \textrm{Left}) \Big)
\\ \\
Q^{1}_1(s_1, \textrm{Up},\textrm{Left}) =  0 + 0.9 \Big(2 + 0.9 \times 0 - 0 \Big)
\\ \\ 
Q^{1}_1(s_1, \textrm{Up}, \textrm{Left}) =  1.8
    \end{array}
\end{equation}

The superscript for $Q$ represents the agent index (1 represents the column player). The above equation also holds for the row agent and the new stage game with $Q$-values at state $s1$, in time $t = 1$ is given in Table~\ref{tab:initials2}.

Now, at state $s_1$ and time $t=1$, let us assume that the agents decide to perform the best actions from their respective $Q$-values, i.e., ``Up'' and ``Left'' again. The actions are executed, and the agents obtain a reward of 2 each. The next state is observed, and let the advisor solution here be $\sigma^{1}_1(s_1) = (0.5, 0.5)$ and $\sigma^{2}_1(s_1) = (0.5, 0.5)$. This means that the advisor assigns a probability of 0.5 for each of the actions for both the agents. Again, we calculate the $Q$-update for the column agent, 
\begin{equation}
    \begin{array}{l}
Q^{1}_2(s_1, \textrm{Up}, \textrm{Left}) =  Q^{1}_1(s_1, \textrm{Up}, \textrm{Left}) + \alpha \Big(r^{1}_1 + \beta AdvisorQ^{1}_1(s_1) - Q^{1}_1(s_1, \textrm{Up},\textrm{Left}) \Big)
\\ \\
Q^{1}_2(s_1, \textrm{Up}, \textrm{Left}) =  Q^{1}_1(s_1, \textrm{Up}, Left)  + \alpha \Big(r^{1}_1 + \beta \sigma^1_1(s_1) \cdot \sigma^2_1(s_1) \cdot Q^1_1(s_1) - Q^{1}_1(s_1, \textrm{Up}, \textrm{Left}) \Big)
\\ \\ 
Q^{1}_2(s_1, \textrm{Up}, \textrm{Left}) 
\\ =  1.8 + 0.9 \Big(2 + 0.9 \big( (0.5)^2 \times 1.8  
+  (0.5)^2 \times 0 +  (0.5)^2 \times 0 +  (0.5)^2 \times 0 \big) - 1.8 \Big)
\\ \\
Q^{1}_2(s_1, \textrm{Up}, \textrm{Left}) =  2.34
    \end{array}
\end{equation}

The above equation also holds for the row agent and the new stage game at $t = 2$ is given in Table~\ref{tab:initials3}. Similarly, the $Q$-values continue to be updated until convergence or until the values come to a small threshold of convergence.   

In the above steps, we have demonstrated the $Q$-values of different actions at the given state based on the advisor solutions. Such a process, at convergence, will lead to the agent(s) evaluating the advisor obtain a value for the advisor at each state in the state space of
the environment.

\subsection{Neural Network And Actor-Critic Implementations}\label{sec:nnimplementation}
It is well known that tabular algorithms are not applicable for environments with large state and action spaces in RL \citep{mnih2016asynchronous}. All our algorithms can be extended to large state-action  environments using function approximations as is common in RL \citep{mnih2015human, mnih2016asynchronous}, where neural networks serve as the function approximators. In this section, we give a neural network-based implementation of ADMIRAL-DM and ADMIRAL-AE.

We incorporate techniques introduced in the well-known Deep $Q$-learning (DQN) algorithm \citep{mnih2015human} with the update rule in ADMIRAL-DM to obtain its neural network implementation in Algorithm~\ref{alg:maeqldm}.  To highlight differences from the tabular implementation, we  make note of some parts of Algorithm~\ref{alg:maeqldm}. All agents maintain two networks (evaluation and target) throughout the training process. Both the evaluation and target networks start with the same configuration. The evaluation network is updated periodically at every training step and used for action selection at each step. The target network provides the target value for calculating the Bellman errors during training and is updated less frequently compared to the evaluation network. In line~18 of Algorithm~\ref{alg:maeqldm}, all agents store the experience tuples in their respective replay buffers. After every episode in lines~21 -- 25, the evaluation networks are trained using the temporal difference (TD) errors as the loss function. The TD target is obtained from the Bellman equation given in Eq.~\ref{Eq:qupdate}. After every finite number of training steps, the target network parameters are updated by copying over values from the evaluation network in line~26 as previously performed in \cite{mnih2015human}. 


\begin{algorithm}
   \caption{ADMIRAL-DM Neural Network Implementation}
   \label{alg:maeqldm}
\begin{algorithmic}[1]
   \State Initialize $Q_{\phi^j}, Q_{\pi^j}$, where $\phi$ represent the evaluation (eval) net and $\pi$ represents the target net, for all $j \in \{1, \ldots, n\}$. Initialize a value for hyperparameters $\epsilon$ and $\epsilon'$ (i.e. value for $\epsilon_0$ and $\epsilon'_0$)
  \State  At $t=0$, get the initial state $s_0$ from the environment
  \State Let the learning agent be indexed by $j$ 
  \State For all $s \in \mathcal{S}$, $a^j \in A^j$, and $j \in \{1,\ldots,n\}$, let $Q^j_t(s, a^1, \ldots, a^n)=0$
  \State Define policy derived from $Q$ to return the random action $a$ with probability $\epsilon_t$, advisor suggested action with probability $\epsilon'_t$ and greedy action with probability $1 - \epsilon_t - \epsilon'_t$
  \State Choose action $a^j_0$ for $s_0$ using policy derived from $Q$ for each $j \in \{1, \ldots, n\}$
  \While{training is not finished}
    \State For each agent $j$, execute the action $a^j_t$ and observe $r_t^1, \ldots, r^n_t$; $a^1_t, \ldots, a^n_t$; and $s_{t+1} = s'$
   \State For each $j$, choose the next greedy action for all other agents from the copy of their respective
  \Statex \hskip1.5em  policies using $s'$. The next greedy actions are chosen using the current observed actions of 
  \Statex \hskip1.5em other agents
  
   \State For each $j$, let $u$ be a uniform random number between 0 and 1 
  
  \If{$u$ < $\epsilon'_t$}

    \State Obtain next action $a^{j}_{t+1} = a^{j'}$ from the advisor (using state $s'$)

    \ElsIf{$u$ > $\epsilon'_t$ and $u$ < $\epsilon_t$}
    
    \State Set the next action $a^{j}_{t+1} = a^{j'}$ as a random action from the action space $A^j$
    
    \Else
    
    \State Choose a greedy action $a^{j}_{t+1} = a^{j'}$ from $Q_{\phi^j}$ using $s'$ and the next greedy actions of 
    \Statex \hskip3em other agents  
    
    \EndIf

  \State Store $\langle s,\boldsymbol{a}, \boldsymbol{r}, s',\boldsymbol{a'} \rangle$ in replay buffer $\mathscr{D}$, where 
  \Statex \hskip1.5em $s = s_t$, $\boldsymbol{a} = {a^1_t, \ldots, a^n_t}$; $\boldsymbol{r} = {r^1_t, \ldots, r^n_t}$ and $\boldsymbol{a'} = {a^{1}_{t+1}, \ldots, a^{n}_{t+1}}$; for each agent $j$
  \State Set the current action $a^j_t = a^{j}_{t+1}$ and the current state $s_t = s_{t+1}$; for each agent $j$
  \State At the end of each episode linearly decay $\epsilon'_t$ and $\epsilon_t$
  \While{$j$ = 1 to $n$} 
    \State Sample a minibatch of $K$ experiences $ \langle s,\boldsymbol{a}, \boldsymbol{r}, s', \boldsymbol{a'} \rangle$ from $\mathscr{D}$
    \State Set $y^j = r^j + \beta Q_{\pi^j}(s',\boldsymbol{a'})$ according to Eq. \ref{Eq:qupdate}
    \State Update the $Q$ eval network by minimizing the loss $L(\phi^j) = \frac{1}{K} \sum (y^j - Q_{\phi^j}(s,\boldsymbol{a}))^2$
  \EndWhile
  \State Update the parameters of the target network for each agent by copying over the eval network 
  \Statex \hskip1.5em every $\mathcal{T}$ steps: $\pi^j \xleftarrow{} \phi^j $
  
\EndWhile
   
\end{algorithmic}
\end{algorithm}

Similar to our approach with ADMIRAL-DM, we incorporate the techniques of DQN \citep{mnih2015human} with the update rule in ADMIRAL-AE to obtain Algorithm \ref{alg:maeqlae}. The important changes are similar to our discussion with the ADMIRAL-DM case, where the algorithm uses two networks and a replay buffer for training. The target for the loss function is obtained from Eq.~\ref{eq:updateoffpolicyQ} (line~21 in Algorithm~\ref{alg:maeqlae}). If more exploration is desired, $\epsilon$ need not be decayed in these implementations.


\begin{algorithm}[t]
\fontsize{11pt}{11pt}\selectfont
   \caption{ADMIRAL-AE  Neural Network Implementation}
   \label{alg:maeqlae}
\begin{algorithmic}[1]
   \State Initialize $Q_{\phi^j}, Q_{\pi^j}$, where $\phi$ represent the evaluation (eval) net and $\pi$ represents the target net, for all $j \in \{1, \ldots, n\}$. Initialize the hyperparameters $\eta$ and $\eta'$
  \State At $t=0$, get the initial state $s_0$ from the environment
  \State Let the learning agent be indexed by $j$ 
  \State For all $s \in \mathcal{S}$, $a^j \in A^j$, and $j \in \{1,\ldots,n\}$, let $Q^j_t(s, a^1, \ldots, a^n)=0$
  \State Set the value of hyperparameters $\eta$ and $\eta'$
  \While{training is not finished}
   \State For each $j$, let $u$ be a uniform random number between 0 and 1

  \If{$u$ < $\eta'$}
  
    \State Obtain action $a^{j}_t$ for the current state $s_t$ from the advisor

    \ElsIf{$u$ > $\eta'$ and $u$ < $\eta$}
    
    \State Set the action $a^{j}_t$ as a random action from the action space $A^j$
    
    \Else
    
    \State Choose a greedy action $a^{j}_t$ from $Q_{\phi^j}$ using $s_t$ and the observed previous actions of other agents
    
    \EndIf
  \State For each agent $j$, execute the action $a^j_t$ and observe $r_t^j; a^1_t, \ldots, a^n_t$, and $s_{t+1} = s'$
  \State For each agent $j$, obtain the advisor action  $a^{1}_{e,t+1}, \ldots, a^{n}_{e,t+1}$ for all agents at state $s'$
  \State For each agent $j$, store $\langle s,\boldsymbol{a}, r^j,s',\boldsymbol{a'} \rangle$ in replay buffer $\mathscr{D}$, where 
   \Statex \hskip1.5em $s = s_t$, $\boldsymbol{a} = {a^1_t, \ldots, a^n_t}$ and $\boldsymbol{a'} = {a^{1}_{e, t+1}, \ldots, a^{n}_{e, t+1}}$
   \State Set the current state $s_t = s_{t+1}$
  
  \While{$j$ = 1 to $n$}
    \State Sample a minibatch of $K$ experiences $\langle s,\boldsymbol{a}, r^j, s', \boldsymbol{a'} \rangle$ from $\mathscr{D}$
    \State Set $y^j = r^j + \beta AdvisorQ_{\pi^j}(s')$ according to Eq. \ref{eq:updateoffpolicyQ}
    \State Update the $Q$ eval network by minimizing the loss $L(\phi^j) = \frac{1}{K} \sum (y^j - Q_{\phi^j}(s,\boldsymbol{a}))^2$
  
  \EndWhile
  
  \State Update the parameters of the target network for each agent by copying over the eval network 
  \Statex \hskip1.5em every $\mathcal{T}$ steps: $\pi^j \xleftarrow{} \phi^j $
  
\EndWhile
   
\end{algorithmic}
\end{algorithm}

We also extend Algorithm~\ref{alg:maeqldm} to an actor-critic method --- \textbf{ADvising Multiple Intelligent Reinforcement Agents - Decision Making (Actor-Critic)} abbreviated as ADMIRAL-DM(AC). This algorithm uses the $Q$-function as the critic and the policy derived from $Q$ as the actor. The algorithm follows a \emph{Centralized Training and Decentralized Execution} (CTDE) scheme \citep{lowe2017multi}, where the critic uses the information associated with other agents during the training time and the actors can act independently without access to other agent information during execution. 
This allows our methods to be applicable in environments where global information (i.e., information associated with other agents) is available during training but not available during execution such as, autonomous driving \citep{Ming2020scalable}.
The CTDE scheme extends our algorithm to partially observable environments, where the actor can just use the local observations of the agent for action selection (during both training and execution), while the critic can use the joint observation of all agents during training. As discussed in \cite{lowe2017multi} in the simplest case, the $Q$-function (used as the critic) would consist of the observations of all agents, but it could also include additional state information when available. The critic is not required during execution in this setting.   Furthermore, ADMIRAL-DM(AC) makes our method applicable to continuous action spaces as well.

\begin{algorithm}
\fontsize{11pt}{11pt}\selectfont
   \caption{ADMIRAL-DM(AC) Neural Network Implementation}
   \label{alg:maeacdm}
\begin{algorithmic}[1]
   \State Initialize $V_{\phi^j}, \pi_{\theta^j}$, the critic and actor networks for all $j \in \{1, \ldots, n\}$. Initialize a value for hyperparameters $\epsilon$ and $\epsilon'$ (i.e. value for $\epsilon_0$ and $\epsilon'_0$)
  \State At $t=0$, get the initial state $s_0$
  \State Let the learning agent be indexed by $j$ 
  \State For all $s \in \mathcal{S}$ and $a^j \in A^j$, $j \in \{1,\ldots,n\}$, let $Q^j_t(s, a^1, \ldots, a^n)=0$
  \State Define policy derived from $Q$ to return the random action $a$ with probability $\epsilon_t$, advisor suggested action with probability $\epsilon'_t$ and greedy action with probability $1 - \epsilon_t - \epsilon'_t$
  \State For each $j$ sample action $a^j_0$ from the actor $\pi_{\theta^j}$ at state $s_0$
  \While{training is not finished}
  \State Execute the action $a^j_t$ and observe $r_t^1, \ldots, r^n_t; a^1_t, \ldots, a^n_t$; and $s_{t+1} = s'$, for all agents $j$
  \State For each $j$, let $u$ be a uniform random number between 0 and 1
  
  \If{$u$ < $\epsilon'_t$}

    \State Obtain next action $a^{j}_{t+1} = a^{j'}$ from the advisor (using state $s'$)
    
    \Else
    
    \State Choose an action $a^{j}_{t+1} = a^{j'}$ using the respective actor $\pi_{\theta^j}(s')$
    
    \EndIf

  \State Set $y^j = r^j + \beta V_{\phi^j}^j(s', \boldsymbol{a'}^{-j})$, where $\boldsymbol{a'}^{-j} = ({a'}^1, \ldots, {a'}^{j-1}, {a'}^{j+1}, \ldots, {a'}^N)$, for all $j$
\State For each $j$, update the critic by minimizing the loss $\mathcal{L}(\phi^j) = (y^j - V_{\phi^j}^j(s, \boldsymbol{a}^{-j}))^2 $,
\Statex \hskip1.5em where $\boldsymbol{a}^{-j} = (a^1, \ldots, a^{j-1}, a^{j+1}, \ldots, a^N)$ and $s = s_t$
\State For each $j$, update the actor using the log loss 
$\mathcal{J}(\theta^j) = \log \pi_{\theta^j}(a^j|s)\mathcal{L}(\phi^j) $
\State Set the current action $a^j = a^{j'}$ and the current state $s = s'$ for each agent $j$
 \State At the end of each episode linearly decay $\epsilon'_t$ and $\epsilon_t$
  \EndWhile
   
\end{algorithmic}
\end{algorithm}

We provide the complete pseudocode for the actor-critic implementation of ADMIRAL-DM in Algorithm~\ref{alg:maeacdm}. 
All agents maintain two networks during training. The first network is the value network that serves as a critic, and the second network is a policy network that serves as the actor (line~1).  After each experience, the value (critic) network is updated in line~16 using the TD errors (from Eq.~\ref{Eq:qupdate}) as the loss function. The actor is updated in line~17 using policy gradients. Since the algorithm is maintaining a stochastic policy which explores naturally, we do not need to perform a random action selection for exploration  (unlike in the other algorithms).

\section{Theoretical Results}\label{sec:theoriticalresults}

In this section, we first show that $Q$-updates following Algorithm~\ref{alg:advisorQ} will converge to an $\epsilon$-equilibrium in the stochastic game. From the update rule provided in Eq.~\ref{eq:updateoffpolicyQ}, we note that this equilibrium corresponds to the value of the advisor, which is the action-value function that provides the expectation of immediate reward and future discounted rewards when all agents follow the advisor solutions for infinite periods starting from the current state and joint action. This $\epsilon$ of the $\epsilon$-equilibrium will depend on the nature of the advisor used. Further, we prove that the $Q$-updates following Algorithm~\ref{alg:advisorQ2} converges to the Nash $Q$-value, thus finding the Nash equilibrium of the stochastic game. 

The primary convergence result for $Q$-learning based algorithms in a general-sum stochastic game was provided by \citet{hu2003nash}. However, this result relies on a very restrictive assumption that states that every stage game of the stochastic game contains a Nash equilibrium that is either a global optimum or a saddle point. Additionally, an agent must use the payoff at this equilibrium to update its $Q$-value in every stage game of the stochastic game. As shown by \citet{bowling2000convergence}, this assumption implies that every stage game should use the same kind of equilibrium, it cannot oscillate between being a global optimum or saddle point between stage games. There is almost no game that satisfies this condition in practice \citep{hu2003nash}. We will show in this section that the convergence results in our setting can be provided under a set of assumptions weaker than that used by \citet{hu2003nash}.   

In this section, we provide three important theorems with their detailed proofs. The proofs of each theorem depend on a set of lemmas that we provide. We have included the statement of these lemmas in this section, while the complete proofs of the lemmas can be found in Appendix~\ref{appendix:proofs}.

We start by providing a general result for stochastic processes. Theorem~\ref{maintheorem} is a technical result, extending a result of   \citet{szepesvari1999unified}, which will form the foundation of our convergence result in Theorem~\ref{convergencetheorem}.  Theorem~\ref{maintheorem} aims to relax a requirement of contraction conditions in the result from \citet{szepesvari1999unified}. The presence of these conditions would necessitate strong assumptions in a MARL setting as shown in \citet{hu2003nash}, which we aim to avoid. Towards our Theorem~\ref{maintheorem}, we restate some formal definitions relating to translation and invariance of operators from \citet{szepesvari1999unified} in Appendix~\ref{appendix:definitions} to stay self-contained.

\begin{theorem}\label{maintheorem}
Let $\mathscr{X}$ be an arbitrary set and assume that $\mathcal{B}$ is the space of bounded functions over $\mathscr{X}$. Let  $T:\mathcal{B} \xrightarrow{} \mathcal{B}$, be an arbitrary operator. Let $F \subseteq \mathcal{B} $, be a subset of $\mathcal{B}$ and let $\mathcal{F}_0: F \xrightarrow{} 2^{\mathcal{B}}$ be a mapping that associates subsets of $\mathcal{B}$ with the elements of $F$. Let $v^*$ be a fixed point of $T$ and let $\mathcal{T} = (T_0, T_1, \ldots)$ be a sequence of random operators, $T_t$ mapping $\mathcal{B}\times \mathcal{B}$ to $\mathcal{B}$, that approximate $T$ at $v^*$ and for initial values from $\mathcal{F}_0(v^*)$. Further, assume that $\mathcal{F}_0$ is invariant under $\mathcal{T}$. Let $V_0 \in \mathcal{F}_0(v^*)$, and define $V_{t+1} = T_t(V_t, V_t)$. If there exist random functions $0 \leq F_t(x) \leq 1$ and $0 \leq G_t(x) \leq 1$ satisfying the conditions below with probability 1 (w. p. 1), then $V_t$ converges to a point $(v^* - S)$\footnote{Note that the variable $S$ here does not denote the state space but a deviation from the fixed point $v^*$. We use the (calligraphic) $\mathcal{S}$ to denote the state space.} w. p. 1 in the norm of $\mathcal{B}(\mathscr{X})$:

1. For all $U_1$ and $U_2 \in \mathcal{F}_0$ and all $x \in \mathscr{X}$, 
\begin{equation*}
    \begin{array}{cc}
        T_t(U_1, v^*)(x) - T_t(U_2, v^*)(x)   
        =  G_t(x) (U_1(x) - U_2(x)).  
    \end{array}
\end{equation*}

2. For all $U$ and $V \in \mathcal{F}_0$, and all $x \in \mathscr{X}$, we can find a finite sequence $k_t(x)$ such that, 

\begin{equation*}
    \begin{array}{l}
         T_t(U,v^*)(x) - T_t(U,V)(x) 
          = F_t(x)(|| v^* - V|| + \lambda_t + k_t(x) ||v^* - V||)
    \end{array}
\end{equation*}
\noindent where $\lambda_t \xrightarrow{} 0$ w. p. 1 as $t \xrightarrow{} \infty$ and $k_t(x)$ is finite for all values of $x$ and $t$. 

3. $k_t(x)$ converges to a finite point (independent of time) $K(x)$, as $t \xrightarrow{} \infty$.

4. For all $l>0$, $\Pi_{t=l}^n G_t(x)$ converges to 0 uniformly in $x$ as $n \xrightarrow{} \infty$.

5. There exists $0\leq \gamma < 1$ such that for all $x \in \mathscr{X}$ and large enough $t$, 
        $F_t(x) = \gamma(1-G_t(x))$

The point $S$ can be represented by the equation $S (x) = \frac{1}{\hat{\beta}} (\gamma C_1 + K(x) C_1) $, if $K(x) \neq 0$, $\forall x \in \mathscr{X}$, where $0 < \hat{\beta} \leq 1$ and $C_1$ is a small positive constant. If $K(x) = 0$ $\forall x \in \mathscr{X}$, then $S(x) = 0$ $\forall x$. 
\end{theorem}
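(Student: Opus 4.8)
The idea is to follow the architecture of the convergence proof for generalized value iteration in \citet{szepesvari1999unified} and to isolate exactly where the two extra terms of Condition~2 --- the vanishing scalar $\lambda_t$ and the term $k_t(x)\|v^*-V\|$ --- perturb that argument. Write $\delta_t=V_t-v^*$ for the error and $w_t(x)=T_t(v^*,v^*)(x)-v^*(x)$ for the approximation noise. Unfolding $V_{t+1}=T_t(V_t,V_t)$ by applying Condition~1 with $U_1=V_t,\,U_2=v^*$ to rewrite $T_t(V_t,v^*)$, then Condition~2 with $U=V=V_t$ to rewrite the gap between $T_t(V_t,v^*)$ and $T_t(V_t,V_t)$, and substituting $F_t(x)=\gamma(1-G_t(x))$ from Condition~5, the error recursion takes the convex-combination form
\begin{equation*}
    \delta_{t+1}(x)=G_t(x)\,\delta_t(x)+\big(1-G_t(x)\big)\Big(-\gamma\,(1+k_t(x))\,\|\delta_t\|-\gamma\,\lambda_t\Big)+w_t(x).
\end{equation*}
This is the recursion analysed in \citet{szepesvari1999unified}, except that the effective gain multiplying $\|\delta_t\|$ is $\gamma(1+k_t(x))$ rather than $\gamma$, and it need not lie below $1$. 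The hypotheses that $\mathcal{T}$ approximates $T$ at $v^*$ and that $\mathcal{F}_0$ is invariant under $\mathcal{T}$ (Appendix~\ref{appendix:definitions}) guarantee that every iterate $V_t$ stays in the set where Conditions~1--2 apply, and that $w_t\to 0$ along the iteration in the usual stochastic-approximation sense.

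\emph{Upper envelope.} Since the bracketed ``push'' term above is non-positive, $\delta_t$ is bounded from above essentially for free. Condition~1 with $G_t\ge 0$ makes $U\mapsto T_t(U,v^*)$ pointwise monotone, and Condition~2 gives $T_t(V_t,V_t)\le T_t(V_t,v^*)$ pointwise; hence the comparison sequence $U^{+}_0=V_0$, $U^{+}_{t+1}=T_t(U^{+}_t,v^*)$ dominates $V_t$ for all $t$. By the definition of ``$T_t$ approximates $T$ at $v^*$'' --- Condition~4 being the operative hypothesis --- $U^{+}_t\to v^*$, so $\limsup_t\sup_x\delta_t(x)\le 0$ w.p.\ 1. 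Consequently $\|\delta_t\|=\|\delta_t^{-}\|+o(1)$ eventually, where $\delta_t^{-}(x)=\max\{0,-\delta_t(x)\}$.

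\emph{Lower envelope and identification of $S$.} Feeding the upper bound back in, for large $t$ one gets $\delta_{t+1}(x)\ge G_t(x)\delta_t(x)-\gamma(1-G_t(x))(1+k_t(x))\|\delta_t^{-}\|-\gamma(1-G_t(x))\lambda_t+w_t(x)$, an asynchronous stochastic-approximation recursion for $-\delta_t$ with bounded input $\gamma(1+k_t(x))\|\delta_t^{-}\|$. The standard $\limsup$ argument then applies --- Condition~4 supplies the forgetting of initial conditions, $\lambda_t\to 0$ and $w_t\to 0$ the vanishing-perturbation part, and Condition~3, $k_t(x)\to K(x)$, the stationarity of the limiting recursion --- and bounds $\limsup_t\|\delta_t^{-}\|$ by the unique non-negative solution $S$ of the stationary relation obtained by freezing $G_t,k_t,\lambda_t$ at their limits while retaining the residual at the small constant $C_1$. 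Solving that relation componentwise, with $\hat\beta\in(0,1]$ the limiting value of $1-G_t(x)$, gives $S(x)=\frac{1}{\hat\beta}\big(\gamma C_1+K(x)C_1\big)$, so $V_t$ converges to within $O(C_1)$ of $v^*-S$; when $K\equiv 0$ the extra term of Condition~2 disappears, the argument collapses to that of \citet{szepesvari1999unified}, the residual can be taken to $0$, and $V_t\to v^*$, i.e.\ $S\equiv 0$.

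\emph{Main obstacle.} The delicate step is the lower-envelope analysis, because $\gamma(1+k_t(x))$ can exceed $1$: there is then no uniform contraction, so the limit cannot come from a contraction-mapping bound and must instead be extracted from (i) the one-sided sign of the extra push, which is what lets the positive part be dispatched separately in the upper-envelope step; (ii) Condition~4, which supplies the averaging/forgetting that makes the recursion settle despite the large gain; and (iii) the decay of $\lambda_t$ and of $w_t$. This is also where the small positive constant $C_1$ appears to be unavoidable --- the method controls the error only up to an $O(C_1)$ residual --- and where most of the bookkeeping lives, namely promoting the various $\limsup$'s to genuine limits so that $\hat\beta$ is well defined, which relies on the precise content of ``$T_t$ approximates $T$ at $v^*$'' and ``$\mathcal{F}_0$ invariant under $\mathcal{T}$'' from \citet{szepesvari1999unified}.
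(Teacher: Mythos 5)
Your error recursion is derived correctly, and your route is genuinely different from the paper's: the paper never works with $\delta_t=V_t-v^*$ and an explicit one-step noise $w_t$, but instead compares $V_t$ against the auxiliary iteration $U_{t+1}=T_t(U_t,v^*)$ and analyses $\delta_t=U_t-V_t$, for which the recursion $\delta_{t+1}(x)=G_t(x)\delta_t(x)+F_t(x)\bigl(\|\delta_t\|+k_t(x)\|\delta_t\|+\epsilon_t\bigr)$ with $\epsilon_t\to 0$ contains no residual noise term at all --- the entire content of ``$\mathcal{T}$ approximates $T$ at $v^*$'' is spent on making $U_t\to v^*$. Your upper-envelope step, which uses exactly that comparison sequence together with the monotonicity supplied by Condition~1 and the sign of Condition~2, is sound, and is a cleaner observation than anything the paper makes explicit.

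The gap is in the lower-envelope step, and it is twofold. First, the recursion you analyse there carries the one-step noise $w_t(x)=T_t(v^*,v^*)(x)-v^*(x)$, and the hypotheses do not make $w_t$ vanish pointwise: in the $Q$-learning instantiation $w_t$ is a learning rate times a bounded, \emph{non-vanishing}, zero-mean noise, and Condition~4 only gives $\sum_t(1-G_t)=\infty$, not square-summability, so $w_t$ cannot be dispatched as a ``vanishing perturbation''; the only control the hypotheses give over the accumulated noise is the black-box convergence of the comparison sequence, which your lower bound does not use. Second, and more fundamentally, when $\gamma(1+K(x))\ge 1$ --- a regime the hypotheses do not exclude --- the frozen map $s\mapsto \gamma(1+K(x))s+\mathrm{const}$ is expansive, the ``standard limsup argument'' does not close, and the stationary relation has no appropriate non-negative solution. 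The paper's (admittedly delicate) way around this is a rescaling device: it exploits the homogeneity of the $\delta$-recursion to study a copy of the process truncated at an arbitrary level $C_1$, proves convergence of the truncated process to $\gamma C_1+K(x)C_1$ (Lemmas~\ref{lemma:normedvectorspace} and~\ref{lemm:boundedlemma}), and transfers the limit back to the untruncated process at the price of the accumulated rescaling factor $\hat\beta=\prod_i B_i\in(0,1]$ (Lemma~\ref{lemm:rescaling}). Your proposal has no counterpart of this device, and the omission is visible in your conclusion: the constant $C_1$ does not emerge from your stationary relation but is ``retained'' by fiat, and your identification of $\hat\beta$ as the limiting value of $1-G_t(x)$ is inconsistent with the paper --- there $\hat\beta$ is the product of truncation coefficients, whereas $1-G_t(x)$ is the relaxation (learning-rate) parameter and typically tends to $0$, which would make your $S$ undefined.
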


Before providing the proof of Theorem~\ref{maintheorem}, we  provide an intuitive grasp of the result by relating the different variables in Theorem~\ref{maintheorem} to RL. The operators $T$ and $\mathcal{T}$ are similar to Bellman operators commonly seen in $Q$-learning, where $T_t$ is the component of $\mathcal{T}$ at time $t$. The $\mathcal{B}$ is the space of all $Q$-functions and $\mathcal{F}_0$ provides a mapping on the subsets of $\mathcal{B}$ (since a set of $n$ elements has $2^n$ subsets, the $\mathcal{F}_0$ maps to $2^{\mathcal{B}}$). Specific instances ($U,V$) of $\mathcal{F}_0$ are considered. These can be seen as particular $Q$-functions. The variable $x$ denotes the parameters of the $Q$-function (state, action pair). Here $v^*$ is the fixed point of the $Q$-function ($Q^*$). The $G_t(x)$ and $F_t(x)$ are functions of the learning rate ($\alpha_t$). These relations will become more explicit in our upcoming results.

Although our proof for Theorem~\ref{maintheorem} is structurally similar to that from \citet{szepesvari1999unified},  there are significant differences in our detailed proof arguments which stems from the differences in the nature of our results. First, \citet{szepesvari1999unified} required an inequality condition in condition 2 to hold for all $x$ and all $t$. 
We do not have this requirement. Our condition~2 uses an exact equivalence, includes an additional term to capture the difference in the other terms and the constraint on this additional term needs to hold only as $t\xrightarrow{} \infty$ (condition~3). As a consequence, we are restricted to showing convergence to a point close to the fixed point $v^*$, instead of exactly to $v^*$. Second, as discussed in \citet{szepesvari1999unified}, condition~2 in their theorem combined with the other conditions turns $T_t$ operator into a contraction condition for all $t$ which is hard to satisfy or ensure in multi-agent environments. While \citet{hu2003nash} use a very restrictive assumption to overcome this problem, we show that this problem can be altogether avoided using our Theorem~\ref{maintheorem} (which is the core motivation for this theorem).
We also use an exact equivalence in condition 1 and condition 5 to avoid the contraction condition. The complete proof of Theorem~\ref{maintheorem} is given next. 

\begin{proof}

Let $U_0$ be a value function in $\mathcal{F}_0(v^*)$ and let $U_{t+1} = T_t(U_t, v^*)$. Since $T_t$ approximates $T$ at $v^*$, $U_t$ converges to $T v^* = v^*$ w. p. 1 uniformly over $\mathscr{X}$. 
Let 
\begin{equation}\label{eq:delequation}
    \begin{array}{l}
         \delta_t(x) = U_t(x) - V_t(x)
    \end{array}
\end{equation}
and let, 

\begin{equation}
    \begin{array}{l}
         \Delta_t(x) =  v^*(x) - U_t(x).
    \end{array}
\end{equation}

We know that $\Delta_t(x)$ converges to 0 because $U_t$ converges to $v^*$. We will show that $\delta_t$ converges to a point (independent of $t$) $S$, w. p. 1, which implies that $V_t$ converges to the point $(v^* - S)$. 

Now from the conditions of Theorem \ref{maintheorem} we have,

\begin{equation}\label{deltaeq}
    \begin{array}{l}
        \delta_{t+1}(x) = U_{t+1}(x) - V_{t+1}(x) 
        \\ \\
        = T_t(U_t, v^*)(x) - T_t(V_t, V_t)(x) 
        \\ \\
       =  T_t(U_t, v^*)(x) - T_t(V_t, v^*)(x)  +  T_t(V_t, v^*)(x) - T_t(V_t, V_t)(x) 
        \\ \\
        = G_t(x) (U_t(x) - V_t(x))  +  F_t(x) (|| v^* - V_t|| + \lambda_t + k_t(x) ||v^* - V||)
        \\ \\ 
        = G_t(x) \delta_t(x) +  F_t(x) (|| v^* - V_t|| + \lambda_t + k_t(x) ||v^* - V||)
        \\ \\
        = G_t(x) \delta_t(x)  +
        F_t(x) (|| v^* - U_t + U_t - V_t ||+ \lambda_t  + k_t(x) || v^* - U_t + U_t - V_t ||)
        \\ \\
        = G_t(x) \delta_t(x)  + F_t(x) (||\delta_t + \Delta_t ||+ \lambda_t +   k_t(x) (||\delta_t + \Delta_t ||))
        \\ \\ 
        \overset{1}{\approx}  G_t(x) \delta_t(x)  + F_t(x) (||\delta_t||+ \lambda_t +  k_t(x)||\delta_t|| + ||\Delta_t|| + k_t(x) ||\Delta_t||)
        \\ \\
        = G_t(x) \delta_t(x) + F_t(x) (||\delta_t|| +  k_t(x) ||\delta_t|| + \epsilon_t).
    \end{array}
\end{equation}

The (1) comes from the fact that $\Delta_t$ is guaranteed to converge to 0 in the limit, so the effect of splitting the sum under the norm is negligible. Regarding the last step, let us denote the term $(\lambda_t + ||\Delta_t|| + k_t(x) ||\Delta_t|| )$ by $\epsilon_t$ as all these terms converge to 0 in the limit. Another assumption in Theorem~\ref{maintheorem} is that $k_t(x)$ converges to $K(x)$ in the limit. We consider two cases, in which the first case is $K(x) = 0$  $\forall x \in \mathcal{X}$ and the second case is when $K(x) \neq 0$. In the first case, notice that, the theorem will then effectively change to Theorem 1 in  \citet{szepesvari1999unified}, where the authors prove that $\delta_t(x)$ converges to 0 and hence $V_t(x)$ converges to $v^*(x)$.

For the second case, we provide the proof for the process $\delta_t$ to converge to a point (independent of time) by keeping a modified process $\hat{\delta}_t$ bounded by rescaling $\delta_t$. Since $\delta_t$ is a homogeneous process, it can be written in the form $\delta_{t+1} = G_t(\delta_t, ||\Delta_t|| + \lambda_t)$, such that $\hat{\beta} G_t(x,y) = G_t(\hat{\beta} x, \hat{\beta} y)$ holds for all $\hat{\beta}>0$. Now we will prove that, if $\hat{\delta}_t$ converges to a point $S$, then $\delta_t$ will also converge to another point, that is $\frac{1}{\hat{\beta}}S$, where $\hat{\beta}$ is the scale factor applied to $\delta_t$ to get the modified process  $\hat{\delta}_t$.

Similar to \citet{szepesvari1999unified}, we will begin by considering a homogenous process and another equivalent formulation of this process that can be obtained by keeping it bounded by scaling. 

Let us consider a process of the form, 

\begin{equation}\label{eq:homogeneous}
    \begin{array}{l}
         x_{t+1} = G_t(x_t, \epsilon_t)
    \end{array}
\end{equation}

\noindent where $G_t: \mathcal{B} \times \mathcal{B} \xrightarrow{} \mathcal{B}$
is a homogeneous random function, i.e., 

\begin{equation}
    \begin{array}{l}
         G_t(\hat{\beta} x, \hat{\beta} \epsilon) = \hat{\beta} G_t(x, \epsilon) 
    \end{array}{}
\end{equation}

\noindent holds for all $\hat{\beta}>0$, $x$ and $\epsilon$. We want to prove that $x_t$ converges to some point independent of $t$. 

Now, consider another process, that is obtained from modifying process in Eq.~\ref{eq:homogeneous} by keeping it bounded by re-scaling, namely the process 

\begin{equation}\label{eq:homogeneous2}
      y_{t+1} = \Bigl\{  
        \begin{array}{ll}
          G_t(y_t, \epsilon_t) &\textrm{if:} || G_t(y_t, \epsilon_t)|| \leq C \\
          C G_t(y_t, \epsilon_t)/ ||G_t(y_t, \epsilon_t)|| &~~~\textrm{otherwise}
        \end{array}
\end{equation}

We denote the solution of Eq.~\ref{eq:homogeneous} corresponding to an initial condition of $x_0 = \omega$ and a sequence $\epsilon = \{\epsilon_k\}$ by $x_t(\omega, \epsilon)$. Similarly, we denote the solution of Eq.~\ref{eq:homogeneous2} corresponding to the initial condition of $y_0 = \omega$ and the sequence $\epsilon$ by $y_t(\omega,\epsilon)$. 

Next, we state a lemma that provides a relationship between convergence of the sequence represented by $x_t$ and the sequence represented by $y_t$. The result is used later in Lemma~\ref{lemma:finallemma}.

\begin{lemm}\label{lemm:rescaling}
Let us fix an arbitrary positive constant $C$, an arbitrary $w_0$, and a sequence $\epsilon$. Then provided that 

(i) $ y_t(w_0, \epsilon)$ converges to a point (independent of $t$) $\mathcal{D}$.

(ii) The sequence $\epsilon$ converges to 0 in the limit ($t \xrightarrow{} \infty$).

\noindent The homogeneous process $x_t(w_0, \epsilon)$ converges to a point $\frac{1}{\hat{\beta}} \mathcal{D}$ w. p. 1, where $\hat{\beta}$, satisfying $0 < \hat{\beta} \leq 1$, is the scaling factor applied.  
\end{lemm}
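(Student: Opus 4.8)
\textbf{Overall strategy.} The plan is to compare the two processes trajectory-by-trajectory and exploit homogeneity of $G_t$ to show that, as long as the rescaled process $y_t$ stays in the ``unclipped'' regime from some time on, the two processes agree up to a fixed multiplicative constant. First I would record the elementary fact that, by homogeneity, for any scalar $c>0$ we have $x_t(c\,w_0, c\,\epsilon) = c\, x_t(w_0,\epsilon)$, obtained by induction on $t$ using $G_t(c x, c \epsilon) = c\, G_t(x,\epsilon)$. The analogous statement fails for $y_t$ because of the clipping, and that asymmetry is exactly what the argument must handle.

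\textbf{Key steps.} (1) Since $\epsilon_t \to 0$ and (by hypothesis (i)) $y_t(w_0,\epsilon)$ converges to the finite point $\mathcal{D}$, the sequence $\{y_t\}$ is bounded; pick $C$ large enough (or note $C$ is given) so that $\|\mathcal{D}\| < C$ strictly, hence there is a (random, a.s.\ finite) time $t_0$ after which $\|G_t(y_t,\epsilon_t)\| \le C$, i.e.\ the clipping in Eq.~\ref{eq:homogeneous2} is never active for $t \ge t_0$. Thus for $t\ge t_0$ the process $y_t$ evolves exactly by the unclipped recursion $y_{t+1} = G_t(y_t,\epsilon_t)$. (2) Define the scaling factor $\hat\beta$ so that starting the homogeneous process from the rescaled initial segment reproduces $y_{t_0}$: concretely, set $\hat\beta$ by comparing the magnitudes that accumulated along $[0,t_0]$ in the clipped versus unclipped runs, so that $x_{t_0}(w_0,\epsilon) = \tfrac{1}{\hat\beta}\, y_{t_0}(w_0,\epsilon)$, and check $0 < \hat\beta \le 1$ because clipping can only shrink the trajectory relative to the unclipped one. (3) Apply the homogeneity identity from step~(1) of the strategy on the tail $[t_0,\infty)$: since both $\{x_t\}_{t\ge t_0}$ (started from $x_{t_0}$) and $\{\tfrac{1}{\hat\beta} y_t\}_{t\ge t_0}$ (started from $\tfrac{1}{\hat\beta} y_{t_0}=x_{t_0}$) satisfy the \emph{same} unclipped recursion $z_{t+1}=G_t(z_t,\epsilon_t)$ with the same driving sequence $\epsilon$ and the same initial value at time $t_0$, they coincide for all $t\ge t_0$. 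Hence $x_t(w_0,\epsilon) = \tfrac{1}{\hat\beta}\, y_t(w_0,\epsilon)$ for all $t \ge t_0$, and letting $t\to\infty$ gives $x_t(w_0,\epsilon) \to \tfrac{1}{\hat\beta}\mathcal{D}$ w.\,p.\ 1.

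\textbf{Main obstacle.} The delicate point is step~(2): making rigorous the claim that the clipped and unclipped trajectories differ only by a single global scale factor $\hat\beta$ over the initial window $[0,t_0]$. In general, applying the clipping $G_t \mapsto C\,G_t/\|G_t\|$ at several different times multiplies by several different data-dependent scalars, and homogeneity of $G_t$ only lets one pull a \emph{single} scalar through one step at a time. The argument works because each clipping event is itself a positive rescaling of the current iterate, and homogeneity lets these be composed into one net positive factor $\hat\beta \le 1$ applied to the whole unclipped trajectory up to $t_0$ — this is the place where one must invoke $G_t(\hat\beta x,\hat\beta\epsilon)=\hat\beta G_t(x,\epsilon)$ repeatedly and carefully track that the \emph{same} perturbation sequence $\epsilon$ (not a rescaled one) drives $y_t$, which is why the conclusion is a rescaling of the \emph{state} only and requires $\epsilon_t\to0$ to control the mismatch between $\epsilon$ and $\hat\beta\epsilon$ in the limit. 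I expect the formal write-up to follow \citet{szepesvari1999unified} closely here, with the convergence of $\epsilon_t$ to $0$ doing the work of absorbing the lower-order discrepancy.
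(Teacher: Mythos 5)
Your proposal follows essentially the same route as the paper's proof: both arguments observe that, since $y_t \to \mathcal{D}$ with $\|\mathcal{D}\| \le C$, the clipping is active only finitely often (w.\,p.\ at least $1-\delta$), compose the clipping coefficients $B_t$ into a single net scale $\hat\beta = \prod_i B_i$ via the homogeneity identity $B\,x_t(w,\epsilon) = x_t(Bw, B\epsilon)$, and invoke $\epsilon_t \to 0$ to absorb the mismatch in the driving sequence. One caution on your step~(3): the assertion that $\{x_t\}$ and $\{\tfrac{1}{\hat\beta} y_t\}$ satisfy the same recursion with the same driving sequence and hence \emph{coincide exactly} on the tail is not literally true, since $\tfrac{1}{\hat\beta} y_{t+1} = G_t(\tfrac{1}{\hat\beta} y_t, \tfrac{1}{\hat\beta}\epsilon_t)$ is driven by $\tfrac{1}{\hat\beta}\epsilon_t$, and the paper's bookkeeping shows the perturbations entering at different times acquire \emph{different} accumulated factors $c_{t,i} = \Pi_{j=i}^{t-1} B_j$ rather than one global scalar; the identification of the two trajectories is therefore only asymptotic, which is exactly where hypothesis (ii) is needed --- a point you do correctly flag in your obstacle paragraph, though it contradicts the exact-coincidence claim as written.
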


Next, we  state another lemma that provides conditions for the convergence of a cascade of two converging processes. Again, this result will be used later in Lemma~\ref{lemma:finallemma}.

\begin{lemm}\label{lemma:normedvectorspace}
Let $X$ and $Y$ be normed vector spaces, $U_t: X \times Y \xrightarrow{} X (t=0,1,2, \ldots)$ be a sequence of mappings, and $\theta_t \in Y$ be an arbitrary sequence. Let $\theta_\infty \in Y$ and $x_\infty \in X$. Consider the sequences $x_{t+1} = U_t(x_t, \theta_\infty)$, and $y_{t+1} = U_t(y-t, \theta_t)$ and suppose that $x_t$ and $\theta_t$ converge to $x_\infty$ and $\theta_\infty$ respectively, in the norm of the appropriate spaces.

Let $L^\theta_k$ be the uniform Lipschitz index of $U_k(x,\theta)$ with respect to $\theta$ at $\theta_\infty$ and, similarly, let $L^\mathscr{X}_t$ and $L^\theta_t$ satisfy the relations $L^\theta_t \leq C(1 - L^\mathscr{X}_t)$, and $\Pi_{m=t}^\infty L^\mathscr{X}_m = 0$ where $C>0$ is some constant and $t = 0,1,2, \ldots,$ then $\lim_{t \xrightarrow{} \infty} || y_t - x_\infty|| = 0$.
 
\end{lemm}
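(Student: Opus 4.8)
The plan is to track the ``decoupled'' discrepancy $d_t \delequal ||y_t - x_t||$ between the two trajectories and show it vanishes, after which the conclusion follows from the triangle inequality together with the assumed convergence $x_t \to x_\infty$. First I would estimate, for every $t$,
\[
d_{t+1} = ||U_t(y_t,\theta_t) - U_t(x_t,\theta_\infty)|| \le ||U_t(y_t,\theta_t) - U_t(x_t,\theta_t)|| + ||U_t(x_t,\theta_t) - U_t(x_t,\theta_\infty)|| ,
\]
and bound the first term by $L^\mathscr{X}_t\, d_t$ using Lipschitz continuity of $U_t$ in its first argument (with index $L^\mathscr{X}_t$), and the second by $L^\theta_t\, ||\theta_t - \theta_\infty||$ using the uniform Lipschitz index of $U_t(x_t,\cdot)$ at $\theta_\infty$. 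Writing $\epsilon_t \delequal ||\theta_t - \theta_\infty||$ (so $\epsilon_t \to 0$ by hypothesis) and invoking $L^\theta_t \le C(1 - L^\mathscr{X}_t)$ --- which, since Lipschitz indices are nonnegative, forces $L^\mathscr{X}_t \in [0,1]$ --- yields the scalar recursion
\[
d_{t+1} \le L^\mathscr{X}_t\, d_t + C\,(1-L^\mathscr{X}_t)\,\epsilon_t .
\]

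The core of the argument is then a purely scalar claim, mirroring the cascade lemma of \citep{szepesvari1999unified}: if $0\le L_t\le 1$, $\epsilon_t \to 0$, $\prod_{m=t}^{\infty} L_m = 0$ for every starting index, and $d_{t+1} \le L_t d_t + C(1-L_t)\epsilon_t$, then $d_t \to 0$. To prove it I would fix $\varepsilon > 0$, choose $T$ with $\epsilon_t \le \varepsilon$ for all $t \ge T$, and establish by induction on $t \ge T$ that, with $P_t \delequal \prod_{m=T}^{t-1} L_m$,
\[
d_t \le P_t\, d_T + C\varepsilon\,(1 - P_t) ,
\]
the inductive step reducing to the identity $L_t\big(P_t d_T + C\varepsilon(1-P_t)\big) + C(1-L_t)\varepsilon = P_{t+1} d_T + C\varepsilon(1-P_{t+1})$. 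Letting $t \to \infty$ and using $P_t \to \prod_{m=T}^{\infty} L_m = 0$ gives $\limsup_{t\to\infty} d_t \le C\varepsilon$; as $\varepsilon$ is arbitrary, $d_t \to 0$. Finally, $||y_t - x_\infty|| \le d_t + ||x_t - x_\infty|| \to 0$, which is the asserted conclusion.

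The step I expect to need the most care is not the recursion itself (which is elementary) but lining up the definitions so that the first displayed inequality is actually justified: one must use the precise notion of ``uniform Lipschitz index'' from \citep{szepesvari1999unified} to license $||U_t(x_t,\theta_t) - U_t(x_t,\theta_\infty)|| \le L^\theta_t\,||\theta_t - \theta_\infty||$ uniformly over the states $x_t$ visited along the trajectory, and to confirm that the constants $L^\mathscr{X}_t$ appearing in the hypothesis $L^\theta_t \le C(1-L^\mathscr{X}_t)$ genuinely serve as Lipschitz bounds for $U_t$ in its first argument along $(x_t, y_t)$. Once the Lipschitz bookkeeping is in place, the remainder is the routine induction and limit argument sketched above; and when this lemma is invoked inside the almost-sure arguments of the surrounding theorems, all the ``$\to 0$'' and ``$=0$'' hypotheses are simply read as holding with probability~1.
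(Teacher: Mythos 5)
Your proof is correct. The paper does not actually prove this lemma---it defers entirely to Theorem~15 of \citet{szepesvari1999unified}---and your argument is essentially the standard proof of that result: decouple the two trajectories via $d_t=\|y_t-x_t\|$, derive the scalar recursion $d_{t+1}\le L^{\mathscr{X}}_t d_t + C(1-L^{\mathscr{X}}_t)\epsilon_t$ (noting that $0\le L^\theta_t\le C(1-L^{\mathscr{X}}_t)$ forces $L^{\mathscr{X}}_t\in[0,1]$), run the product-telescoping induction to get $\limsup_t d_t\le C\varepsilon$ for every $\varepsilon>0$, and finish with the triangle inequality against $x_t\to x_\infty$. The only cosmetic difference from the cited reference is which intermediate point you insert in the triangle inequality ($U_t(x_t,\theta_t)$ rather than $U_t(y_t,\theta_\infty)$); with uniform Lipschitz indices both decompositions are equivalent.
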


Now, we show that stochastic processes having certain special structure will converge to some point independent of $t$, under a set of conditions. This result will also be used later in the proof of Lemma~\ref{lemma:finallemma}. 

\begin{lemm}\label{lemm:boundedlemma}

Let $\mathcal{Z}$ be an arbitrary set and consider the process
\begin{equation}\label{eq:limitlemma}
    \begin{array}{l}
         x_{t+1}(z) = G_t(z)x_t(z) + F_t(z) (C + k_t(z) C)
    \end{array}{}
\end{equation}

\noindent where $x_1, F_t, G_t \geq 0 $ are random processes,  $||x_1|| < C < \infty$ w. p. 1 for some $C>0$, and $z$ is an element in $\mathcal{Z}$. Assume that for all $k$, $\lim_{n \xrightarrow{} \infty} \Pi_{t=k}^n G_t(z) = 0$ uniformly in $z$ w. p. 1 and $F_t(z) = \gamma(1 - G_t(z))$, for some $0 \leq \gamma < 1$, and $\forall z \in \mathcal{Z}$,  w. p. 1. Also, $k_t(z)$ converges to $K(z)$ in the limit. Then, $x_t(z)$ converges to a point $D(z) = \gamma(C + K(z)C)$ w. p. 1.

\end{lemm}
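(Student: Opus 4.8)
The plan is to solve the linear recursion in Eq.~\ref{eq:limitlemma} explicitly and then read off the limit by exploiting the special form $F_t = \gamma(1-G_t)$. First I would unroll the recursion: setting $b_t(z) = F_t(z)\big(C + k_t(z)C\big)$, a straightforward induction on $t$ gives
\begin{equation*}
x_{t+1}(z) = \Big(\prod_{i=1}^t G_i(z)\Big) x_1(z) + \sum_{j=1}^t \Big(\prod_{i=j+1}^t G_i(z)\Big) b_j(z),
\end{equation*}
with the convention that an empty product equals $1$. The leading term vanishes as $t\to\infty$, uniformly in $z$ and w.p.\ 1, by the hypothesis $\prod_{t=k}^n G_t(z)\to 0$ (taken with $k=1$) together with $\|x_1\|<C$; so the entire content of the lemma lies in evaluating the limit of the sum.

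Next I would substitute $F_j = \gamma(1-G_j)$ and use the telescoping identity $\big(\prod_{i=j+1}^t G_i\big)(1-G_j) = \prod_{i=j+1}^t G_i - \prod_{i=j}^t G_i$, so that the sum becomes $\gamma\sum_{j=1}^t w_j(z)\big(C + k_j(z)C\big)$ with $w_j(z) = \prod_{i=j+1}^t G_i(z) - \prod_{i=j}^t G_i(z)$. Since $F_t\ge 0$ and $\gamma>0$ force $0\le G_t\le 1$ (the case $\gamma=0$ is trivial: then $b_t\equiv 0$ and $x_{t+1}=\big(\prod_{i=1}^tG_i\big)x_1\to 0 = D(z)$), each $w_j(z)\ge 0$ and, by telescoping, $\sum_{j=1}^t w_j(z) = 1 - \prod_{i=1}^t G_i(z) \le 1$; i.e.\ the $w_j(z)$ act as sub-probability weights. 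Writing $k_j(z) = K(z) + e_j(z)$ with $e_j(z)\to 0$, the $K(z)$-part equals $\gamma\big(C+K(z)C\big)\big(1-\prod_{i=1}^tG_i(z)\big)$, which converges to $D(z) = \gamma\big(C + K(z)C\big)$, exactly the claimed value.

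It remains to show the residual term $\gamma C\sum_{j=1}^t w_j(z)\,e_j(z)$ tends to $0$, and this is the main obstacle; everything else is bookkeeping. The standard device is to cut the sum at an index $N$: for $j>N$ one uses $\sum_{j>N} w_j(z)\le 1$ and $|e_j(z)|\le\varepsilon$ once $N$ is large, bounding that tail by $\gamma C\varepsilon$; for $j\le N$ one uses $w_j(z)\le \prod_{i=N+1}^t G_i(z)$ (the omitted factors $G_{j+1},\dots,G_N$ being $\le 1$), so that head is at most $\gamma C\,N\big(\sup_j|e_j(z)|\big)\prod_{i=N+1}^t G_i(z)$, which $\to 0$ as $t\to\infty$ for fixed $N$ by the product hypothesis with $k=N+1$. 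Choosing $N$ first and then letting $t\to\infty$ makes the residual arbitrarily small, so $x_{t+1}(z)\to D(z)$; this mirrors the corresponding estimate in \citet{szepesvari1999unified}, adapted to the inhomogeneous constant $C+k_j(z)C$. Since all the hypotheses ($\prod_{t=k}^nG_t\to 0$, $F_t=\gamma(1-G_t)$, $k_t\to K$) hold w.p.\ 1, the deterministic argument applies on that event, giving $x_t(z)\to D(z)$ w.p.\ 1. The only delicate point is whether $e_j(z)\to 0$ uniformly in $z$: if it does the convergence is in the norm of $\mathcal{B}$, and if only pointwise convergence of $k_t$ is available the same chain of estimates still yields convergence for each fixed $z$, which is all that is needed where this lemma is invoked in Lemma~\ref{lemma:finallemma}.
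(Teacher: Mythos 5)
Your proposal is correct, and it reaches the limit $D(z)=\gamma(C+K(z)C)$ by a genuinely different (and in one respect more careful) route than the paper. The paper's proof is a two-line fixed-point shift: it subtracts $\gamma(C+k_t(z)C)$ from both sides to get the exact identity $x_{t+1}(z)-\gamma(C+k_t(z)C)=G_t(z)\bigl(x_t(z)-\gamma(C+k_t(z)C)\bigr)$ and then invokes $\prod_{t=k}^n G_t(z)\to 0$ to conclude the deviation vanishes. You instead solve the linear recursion explicitly, use the telescoping identity $\bigl(\prod_{i=j+1}^t G_i\bigr)(1-G_j)=\prod_{i=j+1}^t G_i-\prod_{i=j}^t G_i$ to recognize the forcing terms as sub-probability weights summing to $1-\prod_{i=1}^t G_i$, and then split off the $K(z)$-part from the residual $e_j(z)=k_j(z)-K(z)$ via a cut-the-sum estimate. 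The payoff of your version is that it honestly addresses the point the paper elides: because the subtracted quantity $\gamma(C+k_t(z)C)$ depends on $t$, the paper's shifted quantities at successive steps do not chain into a clean product $\prod G_i$ times an initial deviation --- there is a drift of order $\gamma C\,|k_{t+1}(z)-k_t(z)|$ that must be argued away, which is exactly what your head/tail decomposition does (and what the paper implicitly absorbs into ``$k_t(z)$ converges to $K(z)$ in the limit''). The paper's argument is shorter and transparent about why the limit is the fixed point of the frozen map $x\mapsto G x+\gamma(1-G)(C+KC)$; yours is longer but self-contained and rigorous about the time-inhomogeneity. One small caveat: your weight-positivity step needs $G_t\le 1$, which you correctly derive from $F_t\ge 0$ and $\gamma>0$ (handling $\gamma=0$ separately); the paper never needs this because it works with the exact one-step identity. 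Both arguments are valid.
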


Given the previous results, we are now in a position to conclude the proof by showing that a stochastic process that can be represented by Eq.~\ref{deltaeq} will converge to a point independent of $t$, which is our required result.

\begin{lemm}\label{lemma:finallemma}
Consider an equation of the form
\begin{equation}\label{eq:perterbationterm}
    \begin{array}{l}
         x_{t+1}(z) = G_t(z) x_t(z) + F_t(z) (||x_t|| + \epsilon_t + k_t(z) ||x_t||)
    \end{array}
\end{equation}

\noindent where the sequence $\epsilon_t $ converges to zero w. p. 1. Assume that for all $k$, $\lim_{n \xrightarrow{} \infty} \Pi_{t=k}^n G_t(z) = 0$ uniformly in $z$ w. p. 1 and $F_t(z) = \gamma(1 - G_t(z))$, for some $0 \leq \gamma < 1$, and $\forall z \in \mathcal{Z}$,  w. p. 1. Assume further that $k_t(z)$ is finite, and it converges to $K(z)$ in the limit ($t \xrightarrow{} \infty$). Then $x_t(z)$ converges to a point represented by $S'(z) = \frac{1}{\hat{\beta}} (\gamma C_1 + K(z) C_1) $, where $C_1$ is a small positive constant, w. p. 1 uniformly over $\mathcal{Z}$. Here $\hat{\beta}$ is a scaling factor satisfying $0 < \hat{\beta} \leq 1$.

\end{lemm}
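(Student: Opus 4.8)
The plan is to assemble Lemma~\ref{lemm:rescaling}, Lemma~\ref{lemma:normedvectorspace}, and Lemma~\ref{lemm:boundedlemma} in the same fashion that \citet{szepesvari1999unified} combine their analogous building blocks, the only genuinely new feature being that the target is the \emph{nonzero} point $S'(z)$ forced by the extra term $k_t(z)\|x_t\|$. First I would observe that the recursion in Eq.~\ref{eq:perterbationterm} is homogeneous of degree one in the pair (iterate, driving sequence): writing $x_{t+1} = G_t(x_t,\epsilon_t)$ with $G_t(x,\epsilon)(z) = G_t(z)x(z) + F_t(z)(\|x\| + \epsilon + k_t(z)\|x\|)$, one checks $G_t(\hat\beta x, \hat\beta \epsilon) = \hat\beta\, G_t(x,\epsilon)$ for every $\hat\beta > 0$, which is precisely the structure of Eq.~\ref{eq:homogeneous}.

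Second, I would peel off the vanishing perturbation $\epsilon_t$. Consider the ``frozen'' recursion obtained by replacing the driving sequence with its limit $0$, namely $\tilde x_{t+1}(z) = G_t(z)\tilde x_t(z) + F_t(z)(\|\tilde x_t\| + k_t(z)\|\tilde x_t\|)$. Setting $\theta_t := \epsilon_t$, $\theta_\infty := 0$, and $U_t := G_t$ in Lemma~\ref{lemma:normedvectorspace}, convergence of $x_t$ to the same limit as $\tilde x_t$ follows once the bounds $L^\theta_t \le C(1 - L^{\mathscr X}_t)$ and $\prod_{m=t}^\infty L^{\mathscr X}_m = 0$ are verified; these come directly from $F_t(z) = \gamma(1 - G_t(z))$ (which controls the Lipschitz constant in $\theta$) and from the hypothesis $\prod_{t=k}^{n} G_t(z) \to 0$ uniformly in $z$.

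Third, for the frozen homogeneous recursion I would invoke the rescaling device: form the bounded companion process $y_t$ as in Eq.~\ref{eq:homogeneous2} with an arbitrary cap $C_1 > 0$. By Lemma~\ref{lemm:rescaling}, if $y_t$ converges to a point $\mathcal{D}$ then $\tilde x_t$ converges to $\tfrac{1}{\hat\beta}\mathcal{D}$ with the resulting scaling factor $\hat\beta \in (0,1]$. Finally, since $\|y_t\| \le C_1$ for all $t$, the driving term $\|y_t\| + k_t(z)\|y_t\|$ is dominated by $C_1 + k_t(z)C_1$, so $y_t$ has exactly the form treated in Lemma~\ref{lemm:boundedlemma} (with $C = C_1$), which yields $y_t(z) \to \gamma(C_1 + K(z)C_1)$ w.p.\ 1 uniformly over $\mathcal{Z}$. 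Unrolling the two reductions gives $x_t(z) \to \tfrac{1}{\hat\beta}(\gamma C_1 + K(z)C_1) = S'(z)$, as claimed.

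I expect the main obstacle to be the third step: the self-referential driving term $\|x_t\|$ makes the recursion fail to be a contraction in any obvious metric, so one cannot simply iterate a fixed-point estimate; the rescaling construction is what tames this, and it requires care to show the cap $C_1$ does not distort the limit — i.e.\ that the inequality $\|y_t\| \le C_1$ may legitimately replace the constant $C$ when feeding the hypotheses of Lemma~\ref{lemm:boundedlemma}, and that $\hat\beta$ is well defined and bounded away from $0$. A secondary subtlety is confirming that the $\mathcal{F}_0$-invariance and ``approximation at $v^*$'' properties inherited from the setting of Theorem~\ref{maintheorem} survive both the freezing of $\epsilon_t$ and the rescaling, so that Lemmas~\ref{lemm:rescaling}--\ref{lemm:boundedlemma} are genuinely applicable to the processes they are applied to.
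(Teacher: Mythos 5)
Your proposal matches the paper's proof: it assembles the same three ingredients in the same roles --- the rescaling Lemma~\ref{lemm:rescaling} to pass from a bounded companion process back to the original one (supplying the $\tfrac{1}{\hat\beta}$ factor), the cascade Lemma~\ref{lemma:normedvectorspace} with $\theta_t := \epsilon_t$, $\theta_\infty := 0$, $L^{\mathscr X}_t = G_t(z)$, $L^\theta_t = \gamma(1-G_t(z))$ to discard the vanishing perturbation, and Lemma~\ref{lemm:boundedlemma} with $C = C_1$ to identify the limit $\gamma(C_1 + K(z)C_1)$, yielding $S'(z) = \tfrac{1}{\hat\beta}(\gamma C_1 + K(z)C_1)$. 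The only difference is ordering: the paper first caps $\|x_t\|$ by $C_1$ and then applies the cascade lemma to the resulting bounded process (so the driving term no longer depends on the iterate and $L^{\mathscr X}_t = G_t(z)$ is exact), whereas you freeze $\epsilon_t$ before bounding, leaving the self-referential $\|x_t\|$ term in place when Lemma~\ref{lemma:normedvectorspace} is invoked --- a presentational reshuffle rather than a different argument, though the paper's order makes the Lipschitz bookkeeping cleaner.
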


Hence, we have proved that Eq.~\ref{eq:delequation} has converged to a point independent of $t$, and thus Theorem~\ref{maintheorem} follows. The expression for point $S$ is derived in Lemma~\ref{lemma:finallemma}.

\end{proof}



Let us define a relaxation process of the form 

\begin{equation}\label{process}
    \begin{array}{l}
         V_{t+1}(x) = (1 - f_t(x))V_t(x) + f_t(x) [P_t V_t](x)
    \end{array}
\end{equation}
where $0 \leq f_t(x) \leq 1$ is a relaxation parameter and the sequence $P_t: \mathcal{B}(\mathscr{X}) \xrightarrow{} \mathcal{B}(\mathscr{X})$ can be considered a randomized version of the operator $T$.  Let us consider a process $U_t$ such that $\E[V_t(x)] = U_t(x)$.  Also let, $\E[P_t V_t](x) = T V(x)$.
Now we can state the following corollary to Theorem~\ref{maintheorem}.

\begin{corr}\label{corr:conditionalCorollary}
Assume that the process defined by 

\begin{equation}\label{process2}
    \begin{array}{l}
         U_{t+1}(x) = (1 - f_t(x))U_t(x) + f_t(x) [P_t v^*](x)
    \end{array}
\end{equation}
converges to $v^*$ w. p. 1. Assume further that the following conditions hold: 

1. There exist a scalar $\gamma$ satisfying $0\leq \gamma<1$ and a sequence $\lambda_t \geq 0$ converging to 0 w. p. 1 such that $|| P_t v^* - P_t V||  = \gamma ||v^* - V|| + \lambda_t  + \gamma k_t (x) ||v^* - V||$ holds for all $V \in \mathcal{B}(\mathscr{X})$ and for finite $k_t(x)$.

2. $k_t(x)$ converges to finite point $K(x)$ as $t$ goes to $\infty$. 

3. $0 \leq f_t(x) \leq 1$, $t \geq 0$, and $\sum_{t=1}^{n} f_t(x)$ goes to infinity uniformly in $x$ as $n \xrightarrow{} \infty$. 

\noindent Then, the iteration defined by Eq.~\ref{process} converges to a point $(v^* - S)$, where $S$ is defined as in Theorem~\ref{maintheorem}. 
\end{corr}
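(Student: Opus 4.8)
The plan is to recognize the relaxation process~(\ref{process}) as a particular instance of the abstract iteration $V_{t+1}=T_t(V_t,V_t)$ studied in Theorem~\ref{maintheorem}, and then to verify its five hypotheses one by one. Concretely, I would set
\[
T_t(U,V)(x) \;=\; (1-f_t(x))\,U(x) \;+\; f_t(x)\,[P_t V](x),
\]
so that~(\ref{process}) is exactly $V_{t+1}=T_t(V_t,V_t)$ and~(\ref{process2}) is exactly $U_{t+1}=T_t(U_t,v^*)$. The corollary's standing hypothesis that~(\ref{process2}) converges to $v^*$ is then precisely the statement that the $T_t$ approximate $T$ at $v^*$; taking $\mathcal{F}_0(v^*)=\mathcal{B}(\mathscr{X})$ makes the invariance requirement trivial since each $T_t$ maps into $\mathcal{B}(\mathscr{X})$. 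With this dictionary fixed, the whole proof reduces to reading off the comparison functions and checking conditions~1--5 of Theorem~\ref{maintheorem}.

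I would propose $G_t(x)=1-f_t(x)$ and $F_t(x)=\gamma f_t(x)$, where $\gamma$ is the constant from assumption~1 of the corollary. Condition~1 is immediate: with the second argument frozen at $v^*$, $T_t$ is affine in its first argument, so $T_t(U_1,v^*)(x)-T_t(U_2,v^*)(x)=(1-f_t(x))(U_1(x)-U_2(x))=G_t(x)(U_1(x)-U_2(x))$. Condition~5 holds by construction, $F_t(x)=\gamma f_t(x)=\gamma(1-G_t(x))$. For Condition~2, observe $T_t(U,v^*)(x)-T_t(U,V)(x)=f_t(x)\big([P_tv^*](x)-[P_tV](x)\big)$, and feed in assumption~1 of the corollary, $\|P_tv^*-P_tV\|=\gamma\|v^*-V\|+\lambda_t+\gamma k_t(x)\|v^*-V\|$, which lets us write the difference in the form $F_t(x)\big(\|v^*-V\|+\lambda_t/\gamma+k_t(x)\|v^*-V\|\big)$ with $\lambda_t/\gamma\to 0$ w.\,p.\,1; Condition~3 is then assumption~2 of the corollary verbatim. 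Condition~4 follows from assumption~3: since $\sum_{t=1}^{n}f_t(x)\to\infty$ uniformly in $x$, the elementary bound $\prod_{t=l}^{n}(1-f_t(x))\le\exp\!\big(-\sum_{t=l}^{n}f_t(x)\big)$ forces $\prod_{t=l}^{n}G_t(x)\to 0$ uniformly. Theorem~\ref{maintheorem} then applies and gives $V_t\to v^*-S$ with $S$ exactly as described there; if $K(x)\equiv 0$ this specializes to convergence to $v^*$ itself.

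The main obstacle I expect is the careful bookkeeping in Condition~2, namely passing from the \emph{norm} identity in assumption~1 of the corollary to the \emph{pointwise} identity Theorem~\ref{maintheorem} demands for each $x$: the left side $T_t(U,v^*)(x)-T_t(U,V)(x)=f_t(x)\big([P_tv^*](x)-[P_tV](x)\big)$ is a signed pointwise quantity, while $\|P_tv^*-P_tV\|$ is a supremum. Making this rigorous means choosing $k_t(x)$ pointwise (solving the equality for $k_t(x)$ given the realized values) while tracking that it stays finite and converges to a finite $K(x)$, and treating the degenerate case $\gamma=0$ separately, where $F_t\equiv 0$ and the residual is governed entirely by $\lambda_t$. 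It is worth noting explicitly that, exactly as in the proof of Theorem~\ref{maintheorem}, this reconciliation is what the auxiliary process $\delta_t=U_t-V_t$ (and its rescaled, bounded companion via Lemmas~\ref{lemm:rescaling} and~\ref{lemma:finallemma}) already handles internally; consequently the corollary's proof need only establish the five conditions and invoke Theorem~\ref{maintheorem} as a black box, rather than re-derive any of the stochastic-approximation machinery.
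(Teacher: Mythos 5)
Your proposal is correct and follows essentially the same route as the paper: it defines the random operator $T_t(U,V)(x)=(1-f_t(x))U(x)+f_t(x)[P_tV](x)$, identifies $G_t(x)=1-f_t(x)$ and $F_t(x)=\gamma f_t(x)$, reads the corollary's standing hypothesis as the approximation-at-$v^*$ requirement, and checks the five conditions of Theorem~\ref{maintheorem}. Your verification is in fact somewhat more careful than the paper's (which simply asserts the conditions "can be readily verified"), notably in absorbing $\lambda_t$ into $\lambda_t/\gamma$, handling $\gamma=0$ separately, and flagging the pointwise-versus-norm bookkeeping in Condition~2.
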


\begin{proof}

Here, $P_t$ is a randomized version of an operator $T$. It can be proved that a process of the form,
\begin{equation}\label{eq:averageequation}
    \begin{array}{l}
         U_{t+1}(x) = (1 - f_t(x))U_t(x) + f_t(x) [P_t V](x)
    \end{array}
\end{equation}
will converge to $TV$ w. p. 1 where $V \in \mathcal{B}(\mathscr{X})$. The convergence is due to Lemma~\ref{lemm:conditionalaveraging} below.

\begin{lemm}\label{lemm:conditionalaveraging}
Let $\mathcal{F}_t$ be an increasing sequence of $\sigma$-fields, let $0\leq \alpha_t$ and $w_t$ be random variables such that $\alpha_t$ and $w_{t-1}$ are $\mathcal{F}_t$ measurable. Assume that the following hold w. p. 1: $E[w_t|\mathcal{F}_t, \alpha_t \neq 0] = A$, $E[w^2_t|\mathcal{F}_t] < B < \infty$, $\sum_{t=1}^{\infty} \alpha_t = \infty$ and $\sum_{t=1}^{\infty} \alpha^2_t < C < \infty$ for some $B,C > 0$. Then the process 

\begin{equation}
    \begin{array}{l}
         Q_{t+1} = (1 - \alpha_t)Q_t + \alpha_t w_t
    \end{array}
\end{equation}

\noindent converges to A w. p. 1. 

\end{lemm}

Let the random operator sequence $T_t: \mathcal{B}(\mathscr{X}) \times \mathcal{B}(\mathscr{X}) \xrightarrow{} \mathcal{B}(\mathscr{X})$ be defined by 
\begin{equation}
    \begin{array}{l}
         T_t(U,V)(x) = (1 - f_t(x))U(x) + f_t(x) [P_t V](x).
    \end{array}
\end{equation}

\noindent We know that the operator $T_t$ approximates $T$ at $v^*$, since the process defined by Eq.~\ref{eq:averageequation} converges to $TV$ for all $V \in \mathcal{B}(\mathscr{X})$ by Lemma~\ref{lemm:conditionalaveraging} and the process defined by Eq.~\ref{process2} converges to $v^*$ by definition.  Moreover, observe that $V_t$ as defined by Eq.~\ref{process} satisfies $V_{t+1} = T_t (V_t, V_t)$. Due to conditions 1,2, and 3, it can be readily verified that coefficients $G_t(x) = 1 - f_t(x)$, and $F_t(x) = \gamma f_t(x)$ satisfy the rest of the conditions of Theorem~\ref{maintheorem}, and this yields that the process $V_t$ converges to $(v^* - S)$ w. p. 1. 

\end{proof}

Now, we define the $P_t$ operator in the context of Algorithm~\ref{alg:advisorQ}.

\begin{defn}\label{defn:Pt}
Let $Q = (Q^1, \ldots, Q^n)$. We define an operator $P_t: \mathcal{Q} \xrightarrow{} \mathcal{Q}$ such that $P_t Q = (P_t Q^1, \ldots, P_t Q^n)$, where 
\begin{equation}\label{eq:Ptdefn}
    \begin{array}{l}
         P_t Q^k(s, \boldsymbol{a}) =  r^k_t(s, \boldsymbol{a}) + 
         \beta \sigma_t^1(s') \cdots \sigma_t^n(s') Q^k(s')
    \end{array}
\end{equation}

\noindent and $k = (1, \ldots, n)$, $s'$ is the state at time $t+1$, and $(\sigma_t^1(s'), \ldots, \sigma_t^n(s'))$ is an advisor solution for the stage game $(Q^1(s'), \ldots, Q^n(s'))$ at time $t$. 
\end{defn}

The $P_t$ operator's value depends on the advisor solution. 
Next, we will state some assumptions. The first two are commonly used in RL  \citep{jaakkola1994convergence, singh2000convergence}.

\begin{assumption}\label{assumption:visitassumption}
    Every state $s\in \mathcal{S}$ and action $a^j \in A^j$, for each agent $j \in \{1, \ldots, n\}$, 
    are visited infinitely often, and the reward function for all agents stay bounded. 
\end{assumption}


\begin{assumption}\label{assumption:learningrate}
For all $s, t$, and $\boldsymbol{a}$, $0 \leq \alpha_t(s,\boldsymbol{a}) < 1$, $\sum_{t=0}^\infty \alpha_t(s, \boldsymbol{a}) = \infty$, $\sum_{t=0}^\infty[\alpha_t(s, \boldsymbol{a})]^2 < \infty$.

         


          
 
\end{assumption}

\begin{assumption}\label{assumption:advisorassumption}
The advisor solution to any stage game $(Q^1(s), \ldots, Q^n(s))$ pertaining to state $s \in \mathcal{S}$ will become stationary in the limit ($t \xrightarrow{} \infty$). In other words, the advisor can adapt its solutions, but in the limit it is guaranteed to settle down and provide the same advisor strategy for a particular state $s \in \mathcal{S}$. 
\end{assumption}

         


Assumption~\ref{assumption:advisorassumption} allows the advisor to learn and adapt during gameplay. However, in the limit, the advisor is required to converge and provide the same strategy. Now, directly, it can be seen that this assumption is much weaker than the restrictive assumption in \citet{hu2003nash}. Firstly, Assumption~\ref{assumption:advisorassumption} does not involve the computation of Nash equilibrium of every stage game in the stochastic game. Secondly, agents need not use the Nash equilibrium of every stage game to update their $Q$-values unlike the assumption in \citet{hu2003nash}. If this condition were needed, then the advisor will need to only suggest the Nash equilibrium at every stage game, greatly reducing the scope of the advisor. 
Thirdly, the Nash equilibrium at every stage game does not need to be a global optimum or saddle point. This condition in \citet{hu2003nash} is not satisfied in any practical game, which we have relaxed. 

Next, we state a lemma needed to prove convergence. 

\begin{lemm}\label{corrolorylemma}
Assume that $\alpha_t$ satisfies Assumption 2 and the mapping $P_t: \mathcal{Q} \xrightarrow{} \mathcal{Q}$ satisfies the condition that, there exists a scalar $\gamma$ satisfying $0 \leq \gamma < 1$, a sequence $\lambda_t \geq 0$ converging to zero w. p. 1, and a finite sequence $k_t(s)$ such that $|| P_tQ  - P_t Q_{*} || = \beta ||Q - Q_{*} || + \lambda_t + \beta k_t(s)||Q - Q_{*} ||$ for all $Q$, and all $s \in \mathcal{S}$. Assume further that, $k_t(s)$ converges to a finite point $K(s)$ in the limit. Additionally, $Q_{*}(s, \boldsymbol{a}) = E[P_tQ_{*}(s, \boldsymbol{a}) ]$, then the iteration defined by 
\begin{equation}
    \begin{array}{l}
         Q_{t+1}(s, \boldsymbol{a}) = (1 - \alpha_t)Q_t(s, \boldsymbol{a})  +\alpha_t[P_t Q_t(s, \boldsymbol{a})]
    \end{array}
\end{equation}
converges to $(Q_{*} - S)$ w. p. 1, where $S$ is as given in Theorem~\ref{maintheorem}. 
\end{lemm}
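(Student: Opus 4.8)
The plan is to derive this lemma as an instance of Corollary~\ref{corr:conditionalCorollary}, which in turn rests on Theorem~\ref{maintheorem}. I would first set up the correspondence: take $\mathscr{X}$ to be the set of state--joint-action pairs $(s,\boldsymbol{a})$, take $\mathcal{B} = \mathcal{B}(\mathscr{X})$ to be the space of bounded $Q$-functions, let $v^* = Q_*$, and let the relaxation parameter be $f_t(s,\boldsymbol{a}) = \alpha_t(s,\boldsymbol{a})$. The randomized operator is $P_t$ from Definition~\ref{defn:Pt}, so that the iteration in the lemma statement is exactly the iteration $V_{t+1}(x) = (1-f_t(x))V_t(x) + f_t(x)[P_t V_t](x)$ of Eq.~\ref{process}. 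What remains is to verify the three hypotheses of Corollary~\ref{corr:conditionalCorollary} together with its standing assumption that the ``frozen-target'' process $U_{t+1}(x) = (1-f_t(x))U_t(x) + f_t(x)[P_t Q_*](x)$ converges to $Q_*$.

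For the standing assumption, I would invoke Lemma~\ref{lemm:conditionalaveraging}: since $\alpha_t$ satisfies Assumption~\ref{assumption:learningrate} ($\sum\alpha_t=\infty$, $\sum\alpha_t^2<\infty$, $0\le\alpha_t<1$) and, by the lemma's hypothesis, $Q_*(s,\boldsymbol{a}) = \E[P_t Q_*(s,\boldsymbol{a})]$, the stochastic-averaging process driven by $w_t = [P_t Q_*](s,\boldsymbol{a})$ has conditional mean $Q_*(s,\boldsymbol{a})$ and bounded conditional second moment (the latter following from the boundedness of rewards in Assumption~\ref{assumption:visitassumption} and the fact that $\sigma_t^1(s')\cdots\sigma_t^n(s')Q^k(s')$ is a convex combination of bounded $Q$-entries). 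Hence $U_t\to Q_*$ w.\,p.\ 1. Condition~1 of the corollary is precisely the contraction-like identity assumed in the lemma, $\|P_t Q - P_t Q_*\| = \gamma\|Q - Q_*\| + \lambda_t + \gamma k_t(s)\|Q - Q_*\|$ with $\gamma = \beta$ (here one identifies the scalar $\gamma$ of the corollary with the discount factor $\beta$, which is legitimate since $0\le\beta<1$); condition~2 is the assumed convergence $k_t(s)\to K(s)$; and condition~3 is the divergence $\sum_t \alpha_t = \infty$ uniformly in $x$, which is part of Assumption~\ref{assumption:learningrate} combined with Assumption~\ref{assumption:visitassumption} guaranteeing every $(s,\boldsymbol{a})$ is updated infinitely often. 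With all hypotheses checked, Corollary~\ref{corr:conditionalCorollary} yields that the iteration converges to $(Q_* - S)$ w.\,p.\ 1 with $S$ as in Theorem~\ref{maintheorem}, which is the claim.

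The main obstacle I anticipate is not the high-level reduction but the bookkeeping required to make the correspondence exactly fit the hypotheses of Corollary~\ref{corr:conditionalCorollary} and Theorem~\ref{maintheorem}. In particular one must be careful that the equality-form condition (as opposed to an inequality) in condition~1 is genuinely what $P_t$ satisfies --- this is where the extra term $k_t(s)\|Q-Q_*\|$ earns its keep, absorbing the discrepancy between $\|P_tQ - P_tQ_*\|$ and $\beta\|Q-Q_*\|$ that arises because the advisor solution $(\sigma_t^1(s'),\dots,\sigma_t^n(s'))$ is not yet stationary (Assumption~\ref{assumption:advisorassumption} only forces stationarity in the limit, which is exactly what drives $k_t(s)\to K(s)$). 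I would also need to confirm that $\mathcal{F}_0$-invariance and the ``approximates $T$ at $v^*$'' conditions of Theorem~\ref{maintheorem} are automatically satisfied by the averaging operator $T_t(U,V)(x) = (1-f_t(x))U(x) + f_t(x)[P_t V](x)$ --- but this is handled inside the proof of the corollary already, so at the level of this lemma it suffices to cite it. The one genuinely new verification specific to this lemma is the bounded-second-moment and the measurability/infinite-visitation requirements needed to apply Lemma~\ref{lemm:conditionalaveraging}, all of which follow from Assumptions~\ref{assumption:visitassumption} and~\ref{assumption:learningrate}.
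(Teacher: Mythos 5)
Your proposal is correct and follows essentially the same route as the paper: the paper's own proof is a one-line reduction to Corollary~\ref{corr:conditionalCorollary} together with Lemma~\ref{lemm:conditionalaveraging} (with $f_t=\alpha_t$, $v^*=Q_*$, and $\gamma$ identified with $\beta$), which is exactly the correspondence you set up. Your additional verification of the second-moment bound and the hypotheses of the corollary is detail the paper leaves implicit, not a different argument.
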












Theorem~\ref{convergencetheorem} proves that the $Q$-updates in Algorithm~\ref{alg:advisorQ}, converges to an $\epsilon$-equilibrium consistent with Definition~\ref{def:epsionadvisorq}. The $Q$-values at convergence denotes the value of the advisor. The proof of this theorem will be an application of Lemma~\ref{corrolorylemma}. The bound $\epsilon$ of the $\epsilon$-equilibrium will depend on the advisor through its advisor solutions.  

As proved by \cite{fink1964equilibrium}, every stochastic game is guaranteed to have at least one Nash equilibrium point. However, there can be more than one Nash equilibrium point, in which case, the $Q_{*}$ in Theorem~\ref{convergencetheorem}, could be the Nash $Q$-function of any Nash equilibrium strategy. We do not require the Nash equilibrium point of the stochastic game to be unique in Theorem~\ref{convergencetheorem} as assumed in prior works \citep{hu2003nash}.

\begin{theorem}\label{convergencetheorem}
Under the Assumptions~\ref{assumption:visitassumption}, \ref{assumption:learningrate}, and \ref{assumption:advisorassumption}, the $Q$-functions updated by Eq.~\ref{eq:updateoffpolicyQ} 
converges to a bounded distance from the Nash $Q$-function $Q_{*} = (Q^1_{*}, \ldots, Q^n_{*})$, represented as $( Q_{*} - S)$, in the limit ($t \xrightarrow{} \infty$). The point $S$ is as given in Theorem \ref{maintheorem}. 

\end{theorem}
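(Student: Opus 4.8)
The plan is to recognize the ADMIRAL-AE update of Eq.~\ref{eq:updateoffpolicyQ} as precisely the stochastic relaxation iteration analyzed in Lemma~\ref{corrolorylemma}, and then to discharge that lemma's hypotheses. Stacking the per-agent updates into $Q=(Q^1,\dots,Q^n)$ and using the operator $P_t$ of Definition~\ref{defn:Pt}, Eq.~\ref{eq:updateoffpolicyQ} reads $Q_{t+1}(s,\boldsymbol a)=(1-\alpha_t)Q_t(s,\boldsymbol a)+\alpha_t[P_tQ_t(s,\boldsymbol a)]$, where the randomness in $P_t$ is supplied by the sampled successor state $s'\sim P(\cdot\mid s,\boldsymbol a)$ and by the (possibly stochastic) advisor recommendation, so that, conditioned on the history, $\E[P_tQ^k(s,\boldsymbol a)]=r^k(s,\boldsymbol a)+\beta\sum_{s'}P(s'\mid s,\boldsymbol a)\,\sigma^1_t(s')\cdots\sigma^n_t(s')\,Q^k(s')$. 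We fix $Q_*$ to be a Nash $Q$-function, which exists by \citet{fink1964equilibrium} and need not be unique. With this identification, the conclusion of Lemma~\ref{corrolorylemma} --- convergence to $(Q_*-S)$ with $S$ as in Theorem~\ref{maintheorem} --- is exactly the statement to be proved.

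The routine verifications concern step sizes and boundedness. Assumption~\ref{assumption:learningrate} supplies the Robbins--Monro conditions on $\alpha_t$ directly, and Assumption~\ref{assumption:visitassumption} (every state--action pair visited infinitely often, bounded rewards) ensures both that the effective per-pair step sizes retain the summability properties and that, because $\beta<1$, the iterates remain in the space $\mathcal{B}$ of bounded functions on which Lemma~\ref{corrolorylemma} operates.

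The substantive step is the Lipschitz-type condition on $P_t$ demanded by Lemma~\ref{corrolorylemma}: one needs $\gamma=\beta\in[0,1)$, a null sequence $\lambda_t\ge0$, and a finite sequence $k_t(s)$ with $k_t(s)\to K(s)$ finite and time-independent, such that $\|P_tQ-P_tQ_*\|=\beta\|Q-Q_*\|+\lambda_t+\beta k_t(s)\|Q-Q_*\|$ for all $Q$. The key observation is that for each $s'$ the product $\sigma^1_t(s')\cdots\sigma^n_t(s')$ is a probability distribution over the joint-action set, so $P_tQ^k(s,\boldsymbol a)-P_tQ_*^k(s,\boldsymbol a)=\beta\sum_{\boldsymbol b}\big(\prod_i\sigma^i_t(s')(b^i)\big)\big(Q^k(s',\boldsymbol b)-Q_*^k(s',\boldsymbol b)\big)$ is a $\beta$-scaled convex average of the components of $Q-Q_*$ and hence nonexpansive, giving $\|P_tQ-P_tQ_*\|\le\beta\|Q-Q_*\|$. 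The resulting slack, together with the discrepancy $\E[P_tQ_*]-Q_*$ arising because the advisor solution is not in general a Nash equilibrium, is exactly what the $\lambda_t$ and $k_t(s)\|Q-Q_*\|$ terms must absorb; Assumption~\ref{assumption:advisorassumption}, which forces the advisor solution at each state to become stationary in the limit, is what guarantees that $k_t(s)$ converges to a finite $K(s)$ and that the reward and transition contribution to $\lambda_t$ vanishes. Having matched all hypotheses, Lemma~\ref{corrolorylemma} yields $Q_t\to Q_*-S$ w.p.~1, with the explicit form of $S$ read off from Theorem~\ref{maintheorem}.

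I expect this last step to be the main obstacle. The delicate points are that $k_t(s)$ must be arranged to depend only on $(s,t)$ rather than on the operator's argument $Q$, and that --- since $P_t$ is only nonexpansive toward $Q_*$ and does not fix it --- the argument necessarily gives convergence to a bounded-distance neighbourhood $(Q_*-S)$ of the Nash $Q$-function rather than to $Q_*$ exactly; this is precisely why Theorem~\ref{maintheorem} is needed in place of the classical result of \citet{szepesvari1999unified}, and the size of $S$ reflects how far the advisor's limiting solution sits from Nash behaviour.
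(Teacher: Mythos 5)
Your proposal follows essentially the same route as the paper: reduce Eq.~\ref{eq:updateoffpolicyQ} to the relaxation iteration of Lemma~\ref{corrolorylemma} via the operator $P_t$ of Definition~\ref{defn:Pt}, discharge the step-size and expectation hypotheses from Assumptions~\ref{assumption:visitassumption}--\ref{assumption:learningrate} and Lemma~\ref{lemm:expectation}, and let $k_t(s)$ absorb the residual so that Assumption~\ref{assumption:advisorassumption} forces $k_t(s)\to K(s)$, exactly as in Eqs.~\ref{eq:assumptionfix}--\ref{eq:rewrite}. One small caveat: your nonexpansiveness bound $\|P_tQ-P_tQ_*\|\le\beta\|Q-Q_*\|$ silently applies the same advisor weights to both $Q$ and $Q_*$, whereas the paper's Eq.~\ref{eq:Ptapplication} uses the advisor solutions of the respective stage games ($\sigma_t$ versus $\sigma_{*,t}$), so the difference need not be a convex average of componentwise differences; this does not affect the conclusion, since (like the paper) you ultimately rely on the $\lambda_t$ and $k_t(s)\|Q-Q_*\|$ terms to absorb whatever remainder arises.
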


\begin{proof}

We will state a lemma before we begin the proof of the theorem.

\begin{lemm}\label{lemm:expectation}
For a $n$-player stochastic game, $E[P_t Q_{*}] = Q_{*}$ where $Q_* = (Q^1_{*}, \ldots, Q^n_{*}).$
\end{lemm}

The proof of the Theorem~\ref{convergencetheorem} is a direct application of Lemma~\ref{corrolorylemma}, which establishes convergence given three conditions. The condition pertaining to the expectation relationship is satisfied by Lemma~\ref{lemm:expectation}.

To satisfy the first condition (distances between $Q$-functions), we will begin by providing two expressions pertaining to distances between $Q$-functions seen in Lemma~\ref{corrolorylemma}.  The first expression gives a formal way of determining the distance between any $Q$-function and the Nash $Q$-function. 

Consider two $Q$ functions, $Q, Q_{*} \in \mathcal{Q}$. Then we can get,
    \begin{equation}\label{eq:Qtapplication}
    \begin{array}{l}
        ||Q - Q_{*}|| \triangleq  max_j max_s ||Q^j(s) - Q_{*}^j(s) || 
        \\ \\
         = max_j max_s max_{a^1, \ldots, a^n} ||Q^j(s, a^1, \ldots, a^n) -  Q_{*}^j(s, a^1, \ldots, a^n) ||
    \end{array}
    \end{equation}
    where $Q_{*} = (Q^1_*, \ldots, Q^n_*)$, $Q^j_*$ is the Nash $Q$-function of the agent $j$.

The second expression below pertains to distances under corresponding $P_t$ operators.

\noindent Consider two $Q$ functions $Q, Q_{*} \in \mathcal{Q}$. Then we can get, 
\begin{equation}\label{eq:Ptapplication}
    \begin{array}{l}
     ||P_t Q  - P_t Q_{*}|| \triangleq max_j || P_t Q^j - P_t Q_{*}^j||
     \\ \\ 
     = max_j max_s|| \beta \sigma^1_t(s) \cdots \sigma^n_t(s) Q^j(s)  - \beta \sigma_{*,t}^1(s) \cdots {\sigma}_{*,t}^n(s) Q_{*}^j(s)||
     \\ \\
     = max_j \beta|| \sigma^1_t(s) \cdots \sigma^n_t(s) Q^j(s)  - \sigma_{*,t}^1(s) \cdots \sigma_{*,t}^n(s) Q_{*}^j(s)||.
    \end{array}
\end{equation}

\noindent Here, $(\sigma^1_t(s), \ldots, \sigma^n_t(s))$ is an advisor solution for the stage game $(Q^1(s), \ldots, Q^n(s))$ at time $t$ and  $(\sigma^1_{*,t}(s'), \ldots, \sigma^n_{*,t}(s'))$ is an advisor solution for the stage game $(Q^1_*(s'), \ldots, Q^n_*(s'))$ at time $t$. Also $Q_{*}^j$ is the Nash $Q$-function of agent $j$.

In the second step, the corresponding reward terms from the $P_t$ operators get cancelled from Definition~\ref{defn:Pt}. Since the state $s$ changes for every $t$, the dependence on $max_s$ in Eq.~\ref{eq:Ptapplication} is removed in the last step. 

\begin{equation}\label{eq:assumptionfix}
    \begin{array}{l}
   || P_tQ  - P_t Q_{*} || = \beta ||Q - Q_{*} || + \lambda_t + \beta k_t(s)||Q - Q_{*} ||
    \end{array}
\end{equation}

We state the equation in Lemma~\ref{corrolorylemma} again in  Eq.~\ref{eq:assumptionfix}. Now, we can find a finite  $k_t(s)$ such that the Eq.~\ref{eq:assumptionfix} is satisfied for all $Q \in \mathcal{Q}$ and all $s \in \mathcal{S}$. This is due to the fact that all the other terms in that expression are guaranteed to be finite due to Assumption~\ref{assumption:visitassumption}. This satisfies another condition of Lemma~\ref{corrolorylemma}.

To satisfy the last condition of Lemma~\ref{corrolorylemma}, we rewrite the Eq.~\ref{eq:assumptionfix}, to get an expression for $k_t(s)$ in the limit ($t \xrightarrow{} \infty$), 

\begin{equation}\label{eq:rewrite}
    \begin{array}{l}
   k_t(s) = \frac{||P_t Q  - P_t {Q_{*}}||}{\beta||Q - Q_{*}||} - 1 \\  \\
   k_t(s) = \frac{\max_j \beta | \sigma^1_t(s) \cdots \sigma^n_t(s) Q^j(s)  - \sigma^1_{*,t}(s) \cdots \sigma_{*,t}^n(s) Q_{*}^{j}(s) |}{\beta||Q - Q_{*}||} - 1
   \\ \\ 
   k_t(s) = \frac{\max_j | \sigma^1(s) \cdots \sigma^n(s) Q^j(s)  - \sigma^1_{*}(s) \cdots \sigma_{*}^n(s) Q_{*}^{j}(s) |}{||Q - Q_{*}||} - 1 
   \\ \\ 
   k_t(s) = K(s).
    \end{array}
\end{equation}

The first expression in Eq.~\ref{eq:rewrite} is obtained as $\lambda_t$ is guaranteed to go to 0 in the limit. In the second expression, Eq.~\ref{eq:Ptapplication} is being used. The third expression is from Assumption~\ref{assumption:advisorassumption}. Now, from Eq.~\ref{eq:rewrite}, we can see that $k_t(s)$ is a constant for a given state that has no dependence on time $t$. This satisfies the last condition of Lemma~\ref{corrolorylemma}. 

Hence, all the conditions of Lemma~\ref{corrolorylemma} are satisfied and, therefore, the process $ Q_{t+1} = (1 - \alpha_t) Q_t + \alpha_t[P_t Q_t]$ converges to a bounded distance from the Nash equilibrium $(Q_{*} - S)$. Thus, the equilibrium point reached is an epsilon equilibrium, where the epsilon value can be given by the point $S$. The expression for this point is as given in Theorem \ref{maintheorem} with the value of $K$ being given in Eq.~\ref{eq:rewrite}. Here the agents can still unilaterally deviate and possibly obtain an additional payoff consistent with Definition~\ref{def:epsionadvisorq}.
\end{proof}

Now, we move on to providing theoretical guarantees for Algorithm \ref{alg:advisorQ2}. We will show in Theorem~\ref{theorem:onpolicy} that this algorithm will converge to a Nash equilibrium under a set of assumptions. Again, these assumptions are weaker than others previously considered in literature. To prove this convergence result for Algorithm~\ref{alg:advisorQ2} we will retain the first two assumptions but replace 
 Assumption~\ref{assumption:advisorassumption} with two other assumptions.

\begin{assumption}\label{assumption:GLIE}
The algorithm is Greedy in the limit with Infinite Exploration (GLIE).
\end{assumption}

\begin{assumption}\label{assumption:globaloptimum}
The Nash equilibrium is a global optimum or saddle point in every stage game of the stochastic game. 
\end{assumption}

Assumption~\ref{assumption:GLIE} allows the policy to explore, but in the limit ($t \xrightarrow{} \infty$) this assumption requires the policy to choose greedy actions.  Assumption~\ref{assumption:globaloptimum} is strong, however, \cite{hu2003nash} show that similar assumptions are required to prove convergence in theory but not necessary to observe convergence in practice, which is consistent with our observations as well. Thus, even if such assumptions are violated in practice, convergence is still observed. Further, it is to be noted that Assumption~\ref{assumption:globaloptimum} is a 
weaker condition than the assumption in \citet{hu2003nash}, since it does not require calculating the Nash equilibrium at each stage game and using the same to update the $Q$-values.

\begin{theorem}\label{theorem:onpolicy}

When we update the $Q$-functions according to Eq.~\ref{Eq:qupdate}, they converge to the Nash $Q$-function under Assumptions~\ref{assumption:visitassumption}, \ref{assumption:learningrate}, \ref{assumption:GLIE}, and \ref{assumption:globaloptimum}, in the limit ($t \xrightarrow{} \infty$).

\end{theorem}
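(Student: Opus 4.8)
The plan is to mirror the proof of Theorem~\ref{convergencetheorem}, but with the random operator that matches Eq.~\ref{Eq:qupdate} and with the extra leverage supplied by Assumptions~\ref{assumption:GLIE} and~\ref{assumption:globaloptimum}, which will let us drive the perturbation term $S$ of Theorem~\ref{maintheorem} to zero. First I would define, in analogy with Definition~\ref{defn:Pt}, the operator $P_t$ by $P_t Q^k(s,\boldsymbol{a}) = r^k_t(s,\boldsymbol{a}) + \beta Q^k(s',\boldsymbol{a'})$, where $s'$ is the state reached at time $t+1$ and $\boldsymbol{a'} = (a^{1'},\ldots,a^{n'})$ is the joint next action selected by the ADMIRAL-DM policy (random with probability $\epsilon_t$, advisor with probability $\epsilon'_t$, greedy otherwise). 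Then Eq.~\ref{Eq:qupdate} is exactly the relaxation $Q_{t+1} = (1-\alpha_t)Q_t + \alpha_t P_t Q_t$, so the statement reduces to checking the hypotheses of Lemma~\ref{corrolorylemma} for this $P_t$, relative to a fixed Nash $Q$-function $Q_* = (Q^1_*,\ldots,Q^n_*)$ whose existence is guaranteed by \citet{fink1964equilibrium}.

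Second, I would establish the fixed-point-in-expectation condition $E[P_t Q_*] = Q_*$, the analogue of Lemma~\ref{lemm:expectation}, and this is where Assumptions~\ref{assumption:GLIE} and~\ref{assumption:globaloptimum} do the work. By GLIE the probability $\epsilon_t+\epsilon'_t$ of a non-greedy choice vanishes, so asymptotically $\boldsymbol{a'}$ is the joint greedy action produced by the iterated best-response step of Algorithm~\ref{alg:advisorQ2} on the stage game $(Q^1_*(s'),\ldots,Q^n_*(s'))$; under Assumption~\ref{assumption:globaloptimum} (every stage-game Nash equilibrium is a global optimum or a saddle point) this selection realizes the Nash value $v^j(s',\pi^1_*,\ldots,\pi^n_*)$ for each agent $j$, by the same argument used by \citet{hu2003nash} and \citet{bowling2000convergence} that the stage-game value is then well-defined independently of which equilibrium (possibly mixed) is picked. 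Taking the expectation over $s'\sim P(\cdot\mid s,\boldsymbol{a})$ recovers $Q^j_*(s,\boldsymbol{a})$, and the finite-$t$ bias coming from the residual exploration mass is $O(\epsilon_t+\epsilon'_t)$, which I would absorb into the vanishing sequence $\lambda_t$ of Theorem~\ref{maintheorem} (condition~2) rather than insisting on an exact identity for every $t$.

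Third, I would verify the distance condition of Lemma~\ref{corrolorylemma}: there is $\gamma=\beta<1$, a sequence $\lambda_t\to 0$ w. p. 1, and a finite sequence $k_t(s)$ with $||P_tQ - P_tQ_*|| = \beta||Q-Q_*|| + \lambda_t + \beta k_t(s)||Q-Q_*||$ and $k_t(s)\to K(s)$. For large $t$, where the selection is greedy, the reward terms cancel exactly as in Eq.~\ref{eq:Ptapplication}, leaving a comparison of the greedy/Nash backups of $Q(s')$ and $Q_*(s')$; by the contraction property of the Nash operator under Assumption~\ref{assumption:globaloptimum} this difference is at most $\beta||Q-Q_*||$, so $k_t(s)$ can be made arbitrarily small and $K(s)=0$ for every $s$, while the early rounds with random/advisor-chosen $\boldsymbol{a'}$ enter only through $\lambda_t$, which vanishes by GLIE together with Assumption~\ref{assumption:visitassumption}. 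With $K\equiv 0$, Theorem~\ref{maintheorem} forces $S=0$, so Lemma~\ref{corrolorylemma} gives $Q_t\to Q_*$ w. p. 1, which is the claim. I expect the second step to be the main obstacle: pinning down that the iterated-greedy action selection of Algorithm~\ref{alg:advisorQ2}, in the GLIE limit, genuinely selects a Nash equilibrium of the current stage game and induces a $\beta$-contraction, essentially re-deriving the core lemma of \citet{hu2003nash} for our selection rule, with care needed around ties, mixed equilibria (whose value is still unique under Assumption~\ref{assumption:globaloptimum}), and the interaction with the residual exploration probability.
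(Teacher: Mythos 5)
Your proposal reaches the right conclusion and isolates the correct role of the two new assumptions (Assumption~\ref{assumption:GLIE} makes the behaviour policy greedy in the limit; Assumption~\ref{assumption:globaloptimum} makes the greedy stage-game value coincide with the Nash value), but it takes a genuinely different route from the paper. The paper does \emph{not} re-use the Theorem~\ref{maintheorem}/Corollary~\ref{corr:conditionalCorollary}/Lemma~\ref{corrolorylemma} machinery built for ADMIRAL-AE. Instead it follows the SARSA-style template of \citet{singh2000convergence}: it invokes the stochastic-approximation result of \citet{jaakkola1994convergence} (Lemma~\ref{lemma:randomprocess}), defines the Nash operator of \citet{hu2003nash} and uses its contraction property under Assumption~\ref{assumption:globaloptimum} (Lemma~\ref{lemm:nashoperator}), and then decomposes the update noise as $F_t = F_t^{Q} + C_t$ in Eq.~\ref{eq:contraction}, where $F_t^{Q}$ is the Nash backup (which contracts in expectation) and $C_t = \gamma[Q_t(s_{t+1},\boldsymbol{a}_{t+1}) - v^{Nash}(s_{t+1})]$ is the discrepancy between the action actually taken and the Nash backup, shown to vanish via GLIE and the global-optimum/saddle-point argument in Eq.~\ref{eq:Ceqution}. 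What the paper's decomposition buys is that the non-greedy early rounds are absorbed cleanly into the vanishing term $c_t$ of condition~3 of Lemma~\ref{lemma:randomprocess}, so no exact fixed-point identity for the behaviour-policy operator is ever needed. Your route, by contrast, bakes the behaviour policy into $P_t$ and then needs $E[P_tQ_*]=Q_*$, which Lemma~\ref{corrolorylemma} demands as an exact identity at every $t$ but which fails at finite $t$ because of the residual exploration and advisor mass; your proposed fix (absorbing the $O(\epsilon_t+\epsilon'_t)$ bias into $\lambda_t$) is standard but requires extending Lemma~\ref{lemm:conditionalaveraging} and Corollary~\ref{corr:conditionalCorollary} to a conditional mean that only converges to the fixed point, a strengthening the paper never proves. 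You also correctly identify the real crux, common to both proofs: that the iterated per-agent greedy selection realizes the Nash stage-game value under Assumption~\ref{assumption:globaloptimum}. The paper's treatment of that step (the global-optimum case and the saddle-point inequality chain) is exactly the argument you anticipate needing, so neither route escapes it; the paper's packaging is simply the more economical one, and yours would additionally have to confirm that $K(s)\equiv 0$ forces $S=0$ in Theorem~\ref{maintheorem}, which does hold by the first case considered in its proof.
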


Theorem~\ref{theorem:onpolicy} proves that the Q-updates in Algorithm~\ref{alg:advisorQ2} converges to the Nash equilibrium strategies. The proof is along the lines of Theorem~1 in \cite{singh2000convergence}, but involves significant modifications to cater to the multi-agent scenario.

\begin{proof}
The proof will involve using two lemmas from previous work, one from \citet{jaakkola1994convergence} and the other from \citet{hu2003nash}. We will start the proof of this theorem by stating the first lemma.

\begin{lemm}\label{lemma:randomprocess}

A random iterative process 

\begin{equation}\label{eq:deltaeq}
\begin{array}{l}
     \Delta_{t+1}(x) = (1 - \alpha_t(x))\Delta_t(x) + \alpha_t(x) F_t(x)
\end{array}{}
\end{equation}

\noindent where $x \in X$, $t = 0,1, \ldots, \infty$, converges to zero with probability one (w. p. 1) if the following properties hold: 
 
1. The set of possible states $X$ is finite. 

2. $0 \leq \alpha_t(x) \leq 1$, $\sum_t \alpha_t(x) = \infty$, $\sum_t \alpha^2_t(x) < \infty$ w. p. 1, where the probability is over the learning rates $\alpha_t$. 

3. $|| \E \{{F_t(x)|\mathscr{P}_t}\} ||_W \leq \mathscr{K} ||\Delta_t||_W + c_t$, where $\mathscr{K} \in [0,1)$ and $c_t$ converges to zero w. p. 1. 

4. $\textrm{\textbf{var}}\{F_t(x) | \mathscr{P}_t\} \leq K(1 + ||\Delta_t||_W)^2$, where $K$ is some constant. 

\noindent Here $\mathscr{P}_t$ is an increasing sequence of $\sigma$-fields that includes the past of the process.  In particular, we assume that $\alpha_t, \Delta_t, F_{t-1} \in \mathscr{P}_t$. The notation $||\cdot||_W$ refers to some (fixed) weighted maximum norm and the notation $\textrm{\textbf{var}}$ refers to the variance.

\end{lemm}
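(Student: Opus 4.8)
The plan is to treat Lemma~\ref{lemma:randomprocess} as a pure stochastic-approximation statement and prove it by the classical conditional-mean/noise decomposition. First I would split the driving term into its conditional mean $G_t(x) := \E\{F_t(x)\mid\mathscr{P}_t\}$ and the martingale-difference noise $w_t(x) := F_t(x) - G_t(x)$, which by construction satisfies $\E\{w_t(x)\mid\mathscr{P}_t\} = 0$. Writing $\Delta_t = \delta_t + W_t$ with the coupled recursions
\begin{equation*}
\delta_{t+1}(x) = (1-\alpha_t(x))\delta_t(x) + \alpha_t(x) G_t(x), \qquad W_{t+1}(x) = (1-\alpha_t(x))W_t(x) + \alpha_t(x) w_t(x),
\end{equation*}
initialized by $\delta_0 = \Delta_0$ and $W_0 = 0$, a one-line induction confirms that $\delta_t + W_t = \Delta_t$ for every $t$ (here I use that $\alpha_t$ and $G_t$ are $\mathscr{P}_t$-measurable). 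It then suffices to show that both $W_t \to 0$ and $\delta_t \to 0$ w.p. 1, since $X$ is finite and we may argue coordinatewise and then pass to the weighted max-norm.

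For the noise component $W_t$, the natural tool is the conditional-averaging result already in the paper: since the increments $w_t$ have zero conditional mean and the step sizes satisfy $\sum_t \alpha_t(x) = \infty$ and $\sum_t \alpha_t(x)^2 < \infty$ (condition~2), an argument in the spirit of Lemma~\ref{lemm:conditionalaveraging} (equivalently, a Robbins--Siegmund supermartingale analysis of $\E\{W_t(x)^2\mid\mathscr{P}_t\}$) drives $W_t(x) \to 0$ for each of the finitely many $x$. The genuine difficulty --- and what I expect to be the main obstacle --- is that condition~4 bounds $\mathrm{var}\{F_t(x)\mid\mathscr{P}_t\}$ by $K(1 + \|\Delta_t\|_W)^2$, so the noise variance is coupled to the very process we are trying to control and $W_t$ is not an a-priori bounded-variance iteration. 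I would break this circularity by first establishing $\sup_t \|\Delta_t\|_W < \infty$ w.p. 1: the pseudo-contraction of condition~3 ($\mathscr{K}<1$) prevents the conditional mean from amplifying $\|\Delta_t\|_W$, while $\sum_t \alpha_t^2 < \infty$ keeps the accumulated quadratic noise finite, so a stopping-time/rescaling argument (analogous to the rescaling device of Lemma~\ref{lemm:rescaling}) rules out escape to infinity. On the probability-one event where $\|\Delta_t\|_W$ is bounded, $\mathrm{var}\{F_t\mid\mathscr{P}_t\}$ is uniformly bounded and the conditional-averaging conclusion applies cleanly.

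Finally, for the mean component $\delta_t$, condition~3 gives $\|G_t\|_W \le \mathscr{K}\|\Delta_t\|_W + c_t$ with $\mathscr{K}\in[0,1)$ and $c_t \to 0$; substituting $\|\Delta_t\|_W \le \|\delta_t\|_W + \|W_t\|_W$ and using the already-proven $W_t \to 0$ together with $c_t \to 0$ shows that $\|\delta_t\|_W$ obeys, asymptotically, a deterministic pseudo-contraction with factor $\mathscr{K}$ and vanishing perturbations. An iterated shrinking-bounds argument then yields $\limsup_t \|\delta_t\|_W \le \mathscr{K}\,\limsup_t \|\delta_t\|_W$, which forces $\limsup_t \|\delta_t\|_W = 0$ because $\mathscr{K} < 1$. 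Combining $\delta_t \to 0$ with $W_t \to 0$ gives $\Delta_t = \delta_t + W_t \to 0$ w.p. 1, which is the claim. The only place requiring real care beyond bookkeeping is the decoupling of the variance bound in the second paragraph; everything else is standard once that boundedness is in hand.
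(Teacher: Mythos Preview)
Your sketch is essentially correct and follows the classical conditional-mean/martingale-noise decomposition that underlies this result. However, the paper itself does not prove this lemma at all: it simply states the lemma and refers the reader to Theorem~1 of \citet{jaakkola1994convergence} for the proof (both in the main text and in the appendix). So there is no ``paper's own proof'' to compare against beyond the citation.

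For what it is worth, the approach you outline is precisely the one taken in the cited reference: Jaakkola, Jordan and Singh decompose $F_t$ into its $\mathscr{P}_t$-conditional mean and a martingale-difference residual, handle the noise part via a Dvoretzky-type averaging lemma under the step-size conditions, and handle the mean part via the pseudo-contraction in condition~3 together with a rescaling argument to deal with the $\|\Delta_t\|_W$-dependent variance bound. Your identification of the variance--boundedness coupling as the one nontrivial step is accurate; in the original, this is resolved exactly by the rescaling/stopping-time device you allude to. So your write-up is a faithful reconstruction of the cited proof rather than a new route.
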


Let us define a Nash operator $P_t$, consistent with the definition in \cite{hu2003nash}. The Nash operator is defined using the following equation, 

\begin{equation}\label{eq:nashoperator}
\begin{array}{l}
    P_t Q^k(s, a^1, \ldots, a^n) = \E_{s' \sim p} [r^k_t(s,a^1, \ldots, a^n)  + \gamma \pi^1_{*} (s') \cdots \pi^n_{*}(s') Q^k(s')]
    \end{array}
\end{equation}

\noindent where $s'$ is the state at time $t+1$, $(\pi^1_{*} (s') ,\ldots, \pi^n_{*}(s'))$ is the Nash equilibrium solution for the stage game $(Q^1(s'), \ldots, Q^n(s'))$, and $p$ is the transition function. $Q^k$ denotes the $Q$-value of a representative agent $k$.

\begin{lemm}\label{lemm:nashoperator}
Under Assumption~\ref{assumption:globaloptimum}, the Nash operator as defined in Eq.~\ref{eq:nashoperator} forms a contraction mapping with the fixed point being the Nash $Q$-value of the game. 

\end{lemm}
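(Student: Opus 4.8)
The plan is to reproduce the contraction argument of \citet{hu2003nash} in our notation. I work with the maximum norm over agents, states and joint actions, $\|Q-\hat Q\| = \max_k\max_s\max_{\boldsymbol a}|Q^k(s,\boldsymbol a)-\hat Q^k(s,\boldsymbol a)|$, as in Eq.~\ref{eq:Qtapplication}, and aim to show $\|P_tQ - P_t\hat Q\|\le\gamma\|Q-\hat Q\|$ for all $Q,\hat Q\in\mathcal{Q}$. Since $0\le\gamma<1$ this makes $P_t$ a contraction, so by Banach's fixed-point theorem it has a unique fixed point, which I then verify is the Nash $Q$-value.

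First I would note that in Eq.~\ref{eq:nashoperator} the immediate-reward term $r^k_t(s,\boldsymbol a)$ is common to $P_tQ$ and $P_t\hat Q$ and cancels, and that pulling $\E_{s'\sim p}$ outside the absolute value (which only shrinks it, being a convex combination) reduces the problem, for each agent $k$ and each successor state $s'$, to bounding
\[
\bigl|\,\pi^1_*(s')\cdots\pi^n_*(s')\,Q^k(s')\;-\;\hat\pi^1_*(s')\cdots\hat\pi^n_*(s')\,\hat Q^k(s')\,\bigr|,
\]
where $\sigma:=(\pi^1_*,\ldots,\pi^n_*)$ is the Nash equilibrium of the stage game $(Q^1(s'),\ldots,Q^n(s'))$ and $\hat\sigma:=(\hat\pi^1_*,\ldots,\hat\pi^n_*)$ is the Nash equilibrium of $(\hat Q^1(s'),\ldots,\hat Q^n(s'))$. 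The crux is the claim that this quantity is at most $\max_j\max_{\boldsymbol b}|Q^j(s',\boldsymbol b)-\hat Q^j(s',\boldsymbol b)|$.

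To establish that claim I would fix $s'$, suppress it from the notation, assume without loss of generality $\sigma Q^k\ge\hat\sigma\hat Q^k$, and split into the two cases permitted by Assumption~\ref{assumption:globaloptimum}. If the equilibrium of the $\hat Q$-stage game is a global optimum, then $\hat\sigma\hat Q^k\ge\sigma\hat Q^k$, so $\sigma Q^k-\hat\sigma\hat Q^k\le\sigma(Q^k-\hat Q^k)\le\|Q^k-\hat Q^k\|$ because $\sigma$ is a (product) distribution over joint actions. If instead both equilibria are saddle points, I would interpose the mixed profile $\sigma^k\hat\sigma^{-k}$: the Nash part of the saddle property of $\hat\sigma$ in the $\hat Q$-game gives $\hat\sigma\hat Q^k\ge\sigma^k\hat\sigma^{-k}\hat Q^k$, while the minimax part of the saddle property of $\sigma$ in the $Q$-game gives $\sigma Q^k\le\sigma^k\hat\sigma^{-k}Q^k$; subtracting yields $\sigma Q^k-\hat\sigma\hat Q^k\le\sigma^k\hat\sigma^{-k}(Q^k-\hat Q^k)\le\|Q^k-\hat Q^k\|$. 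Taking the maximum over $k$ and over $s'$ and reinstating the factor $\gamma$ gives the contraction bound.

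Finally, for the fixed point I would recall that by definition $Q^k_*(s,\boldsymbol a)=r^k(s,\boldsymbol a)+\gamma\sum_{s'}p(s'|s,\boldsymbol a)\,v^k(s',\pi_*)$, and that the restriction $(\pi^1_*(s'),\ldots,\pi^n_*(s'))$ of the stochastic-game Nash equilibrium to state $s'$ is a Nash equilibrium of the stage game $(Q^1_*(s'),\ldots,Q^n_*(s'))$ with $v^k(s',\pi_*)=\pi^1_*(s')\cdots\pi^n_*(s')Q^k_*(s')$ (the analogue of Lemma~\ref{lemm:expectation} for the Nash operator, by a one-shot-deviation argument); hence $P_tQ_*=Q_*$, and uniqueness follows from the contraction. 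I expect the main obstacle to be the key inequality, especially the saddle-point case, which requires pairing the Nash and minimax inequalities of the two stage games correctly and implicitly relies on the equilibrium types being compatible across $Q(s')$ and $\hat Q(s')$ (the issue flagged by \citet{bowling2000convergence}); this is precisely where Assumption~\ref{assumption:globaloptimum} is used and why the argument does not extend to arbitrary stage-game equilibria.
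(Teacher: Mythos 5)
Your proposal is correct and takes essentially the same route as the paper: the paper's proof of this lemma is a one-line deferral to Theorem~17 of Hu and Wellman (2003), and your argument---cancelling the reward term, reducing to the stage games, and then handling the global-optimum case by domination and the saddle-point case by pairing the Nash inequality of $\hat\sigma$ with the minimax inequality of $\sigma$ at the crossed profile $\sigma^k\hat\sigma^{-k}$---is precisely that cited argument written out, including the correct observation that the equilibrium types must agree across stage games (the Bowling caveat that Assumption~\ref{assumption:globaloptimum} is meant to enforce).
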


Now, since the $P_t$ operator forms a contraction mapping, $||P_t Q - P_t Q_*|| \leq \beta || Q - Q_*||$, is satisfied for some $\beta \in [0,1)$ and all $Q$. Here $Q_*$ is the Nash $Q$-value.

We will apply Lemma~\ref{lemma:randomprocess} to show that the $Q$-values converge to the Nash $Q$-value. We will drop the agent index in all the expressions for simplicity. The rest of the proof is conducted for the $Q$-values of a representative agent. 

The first two conditions of Lemma~\ref{lemma:randomprocess} are satisfied from the assumptions. Comparing Eq.~\ref{eq:deltaeq} and Eq.~\ref{Eq:qupdate} we get that $x$ can be associated with the state action pairs $(s,a^1,\ldots,a^n) $ and  $\Delta_t(s_t,a_t)$ can be associated with $Q_t(s,a^1,\ldots,a^n) - Q_*(s,a^1,\ldots,a^n)$. Here, $Q_*(s,a^1,\ldots,a^n)$ can be considered as the Nash $Q$-value ($Q$-values under the Nash $Q$-function). 

Now we get 

\begin{equation}
    \begin{array}{l}
         \Delta_{t+1}(x) = (1 - \alpha_t(x)) \Delta_t(x) + 
         \alpha_t(x)F_t(x), 
    \end{array}{}
\end{equation}

\noindent where 

\begin{equation}\label{eq:contraction}
    \begin{array}{l}
         F_t(x) = r_t + \gamma v^{Nash}(s_{t+1}) 
         - Q_*(s_t,a_t^1,\ldots,a_t^n)  
          + \gamma[Q_t(s_{t+1}, a_{t+1}^{1}, \ldots, a_{t+1}^{n}) - v^{Nash}(s_{t+1})] 
         \\ \\
        \delequal r_t + \gamma v^{Nash}(s_{t+1})  - Q_*(s_t,a_t^1,\ldots,a_t^n)  
        + C_t(s_t,a_t^1,\ldots,a_t^n)
        \\ \\ 
         \delequal  F_t^{Q}(s_t,a_t^1,\ldots,a_t^n) + C_t(s_t,a_t^1,\ldots,a_t^n).
         
     \end{array}{}
\end{equation}{}

\noindent We define  $F_t(s_t,a_t^1,\ldots,a_t^n) = F_t^{Q}(s_t,a_t^1,\ldots,a_t^n)$ = $C_t(s_t,a_t^1,\ldots,a_t^n) = 0 $ if $(s,\boldsymbol{a}) \neq (s_t, \boldsymbol{a}_t) $. We also define $\boldsymbol{a} = (a_1, \ldots, a_n)$ and $\boldsymbol{a}_t = (a_{1,t}, \ldots, a_{n,t})$. Let the $\sigma$-field generated by all the random variables $ (s_t, \alpha_t, a^1_t, \ldots, a^n_t, r_{t-1}, \ldots, s_1, \alpha_1, a_1, Q_0 )$ be represented by $\mathscr{P}_t$. Now, all the $Q$-values are $\mathscr{P}_t$ measurable which makes $\Delta_t$ and $F_t$, $\mathscr{P}_t$ measurable and this satisfies the measurability condition of Lemma~\ref{lemma:randomprocess}. 

\citet{hu2003nash} showed that $v^{Nash}(s_{t+1}) \triangleq v^{k}(s', \pi^1_*, \ldots, \pi^n_*) = \pi^1_*(s') \cdots \pi^n_{*}(s') Q^k_*(s')$ (see the proof in Lemma 10 of \cite{hu2003nash}). Hence, from Lemma \ref{lemm:nashoperator}, we can show that the $\E[F^{Q}_t]$ forms a contraction mapping. This can be done using the fact that $\E(P_t Q_*) = Q_*$ (Lemma~\ref{lemm:expectation}). Here, the norm is the maximum norm on the joint action. 

Now, we have the following for all $t$,

\begin{equation}
    \begin{array}{l}
         ||\E[F_t^Q(s_t,a_t^1,\ldots,a_t^n) | \mathscr{P}_t] || \leq \gamma||Q_t(s_t,a_t^1,\ldots,a_t^n) 
         - Q_*(s_t,a_t^1,\ldots,a_t^n) || = \gamma||\Delta_t||.
    \end{array}{}
\end{equation}

\noindent Now from Eq.~\ref{eq:contraction},
\begin{equation}
    \begin{array}{l}
         ||\E[F_t(s_t,a_t^1,\ldots,a_t^n) | \mathscr{P}_t] || \leq
         
         ||\E[ F_t^{Q}(s_t,a_t^1,\ldots,a_t^n) | \mathscr{P}_t] ||
         
        +  ||\E [C_t (s_t,a_t^1,\ldots,a_t^n) | \mathscr{P}_t] ||
         \\ \\
        \leq \gamma ||\Delta_t || + || \E[C_t(s_t,a_t^1,\ldots,a_t^n)|\mathscr{P}_t]||. 
    \end{array}
\end{equation}

\noindent This satisfies the third condition of Lemma~\ref{lemma:randomprocess} if $c_t = || \E[C_t(s_t,a_t^1,\ldots,a_t^n)|\mathscr{P}_t]|| $ converges to 0 w. p. 1. \\

Let us rewrite the definition of the $C_t$, 

\begin{equation}\label{eq:Ceqution}
\begin{array}{l}
   C_t(s_t,a_t^1,\ldots,a_t^n) =  \gamma[Q_t(s_{t+1}, a_{t+1}^{1'}, \ldots, a_{t+1}^{n'})  - v^{Nash}(s_{t+1})] \\ \\
   C_t(s_t,a_t^1,\ldots,a_t^n) =  \gamma[\max Q_t(s_{t+1}, a_{t+1}^{1'} \ldots, a_{t+1}^{n'}) -  v^{Nash}(s_{t+1})]
   \\  \\
   C_t(s_t,a_t^1,\ldots,a_t^n) =  \gamma[v(s_{t+1}) - v^{Nash}(s_{t+1})].
\end{array}
\end{equation}

In the second step, we are using the assumption that the $Q$-value is GLIE. The max operator operates over the action space of the representative agent. 

According to  Assumption~\ref{assumption:globaloptimum}, the Nash equilibrium could only be a global optimum or a saddle point. Now, if it is a global optimum, the value of maximizing all the actions in Eq.~\ref{eq:Ceqution} will lead to the global optimum for all the agents and this will be the Nash payoff, thus leading to $C_t$ evaluating to 0 in the limit. Furthermore, \citet{hu2003nash} show that a global optimum is always a Nash equilibrium and all global optima are guaranteed to have equal values. Alternatively, if the Nash equilibrium is a saddle point, consider a stage game, with saddle point equilibrium payoff, $\sigma$ and $\pi$. Then, $\sigma^k \sigma^{-k}Q^k(s) \geq \pi^k \sigma^{-k}Q^k(s) $, as deviating from the equilibrium when the others are playing the equilibrium strategy will leave an agent worse off by definition of a Nash equilibrium. Also,  $\pi^k \pi^{-k}Q^k(s) \leq \pi^k \sigma^{-k}Q^k(s)$, as in a saddle point, if others deviate the agent should be better off (see Definition~13 in \cite{hu2003nash}). Thus, we will get the relation, $\pi^k \pi^{-k}Q^k(s) \leq \sigma^k \sigma^{-k}Q^k(s)$. Since $\sigma$ and $\pi$ are saddle points, the previous argument holds without the loss of generality. Hence, the following is also true, $\sigma^k \sigma^{-k}Q^k(s) \leq \pi^k \pi^{-k}Q^k(s)$. Thus, the value obtained is the same in the saddle points and the value would be Nash value if all the agents are being greedy given the strategies of all other agents. Thus, we have proved that for all cases $C_t$ converges to 0 in the limit. 

The fourth condition of the Lemma~\ref{lemma:randomprocess} is satisfied since we have the reward to be bounded (Assumption~\ref{assumption:visitassumption}) and we know the variance of $F_t^Q$ is bounded \citep{jaakkola1994convergence}. 

Thus, it follows from Lemma~\ref{lemma:randomprocess} that the process $\Delta_t$ converges to 0 and hence, $Q_t$ converges to the Nash $Q$-function $Q_*$. 
\end{proof}

Theorem~\ref{theorem:onpolicy} shows that complicated steps in algorithms such as Nash-Q \citep{hu2003nash} to predict the actions of other agents using an equilibrium calculation are unnecessary. The other agents' current policy can be used directly instead of the equilibrium calculations. Assumption~\ref{assumption:globaloptimum} is strong enough to ensure that such processes converge. 
Also, Theorem~\ref{theorem:onpolicy} proves convergence without any restrictions on the nature of advisors. They could be sub-optimal or adversarial.
Thus, in both the Theorem \ref{convergencetheorem} and Theorem \ref{theorem:onpolicy}, we have proved fixed point guarantees in general-sum stochastic games with  weaker assumptions than those used by earlier work (\citet{hu2003nash}).
We also note that Theorem \ref{theorem:onpolicy} just assumes that the advisor influence decays to 0 in the limit, and so is also applicable to learning algorithms without advisors. 

To conclude, from the Theorem~\ref{convergencetheorem} and Theorem~\ref{theorem:onpolicy} we see that both our algorithms (i.e., ADMIRAL-AE and ADMIRAL-DM) have a suitable fixed point and a guarantee of converging to that fixed point in the limit. This shows that our algorithms are theoretically grounded.

\section{Experiments}\label{sec:mainexperiments}

We experimentally validate our algorithms, showing their effectiveness in a variety of situations using different testbeds. We also demonstrate superior performance to common baselines previously used in literature. The source code for the experiments has been open sourced \citep{sourcecode}.

\subsection{Experimental Results - Tabular Version}\label{sec:gridworldappendix}

The objective of this section is to provide a simple illustration of the tabular version of our algorithms using different kinds of advisors. 

\begin{figure}
    \centering
    \includegraphics[width=0.3\textwidth]{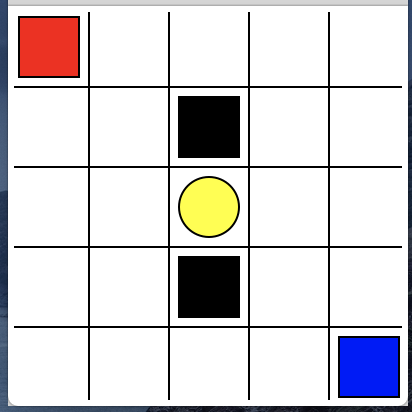}
    \caption{Grid Maze environment}
    \label{fig:gridworldenv}
\end{figure}

For our first experiment, we investigate the performance of ADMIRAL-AE, where we control the quality of the advisors. To this end, we use a $5 \times 5$ Grid Maze environment for the empirical evaluation. The schematic representation for this environment is available in Figure~\ref{fig:gridworldenv}. Here the red and blue agents are trying to reach the yellow goal. The agents must learn to avoid the black pitfalls while reaching the goal. This is a cooperative game where the two agents must learn to perfectly coordinate the task of reaching the goal to achieve maximum reward.  However, the agents are unable to communicate directly. An agent has to learn to take the correct actions to reach the goal in relation to the observed behaviour of the other agent. The game terminates if at least one of the agents reaches the goal state or hits a pitfall.  
We implemented four rule-based advisors of decreasing quality, from Advisor~1, which provides the best action for each state (relative to the other advisors), to Advisor~4, which makes random suggestions. The state in this game corresponds to the positions (grid coordinates) of both the agents and the action involves moving in one of the four cardinal directions. A detailed description of the environment, reward function, action selection, and more details on the advisors are in Appendix~\ref{appendix:experimentaldetails}.

First we conduct the `pre-learning' phase where we study the performance of ADMIRAL-AE using the different advisors in the Grid Maze domain. In our implementations, both agents play a separate instance of the same algorithm (ADMIRAL-AE with a particular advisor). We consider the cumulative rewards obtained by ADMIRAL-AE with each of the advisors over a period of 2000 episodes of training.  Figure~\ref{fig:gridworld}(a) shows the average result over five runs.  Since perfect coordination gives the large positive rewards, we can see from Figure~\ref{fig:gridworld}(a) that the ADMIRAL-AE implementation with the best advisor (Advisor~1) results in the  highest performance for the task. The performance progressively degrades from Advisor~1 to Advisor~4. We highlight that the performance of ADMIRAL-AE depends on the quality of the advisor, with the best advisor (Advisor~1) leading to the best overall performance. This also shows that we would find most value in learning from Advisor~1, as compared to the other advisors. Thus, from Figure~\ref{fig:gridworld}(a) we conclude that there is great value in using ADMIRAL-AE when the quality and suitability of advisors are the objective of study. 

\begin{figure}

	\centering
	\subfloat[Performance of ADMIRAL-AE using 4 different advisors.
	]{{\includegraphics[width=0.45\textwidth, height=5cm]{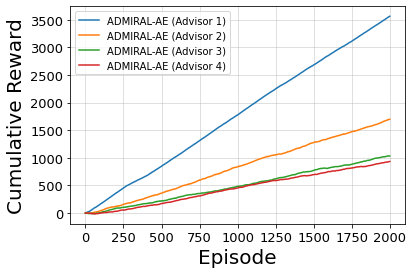} }}
	\quad
	\subfloat[Mean Square Error (MSE), between current $Q$-value of ADMIRAL-AE using Advisor~1 and the $Q_\sigma$, which is the true value of the advisor.
	]{{\includegraphics[width=0.45\textwidth, height=5cm]{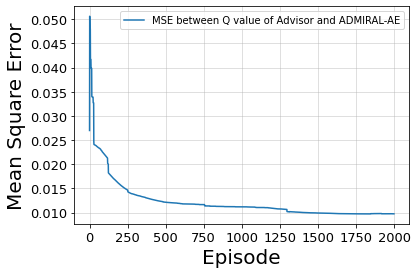}}}
  \caption{Experimental findings using the tabular version of ADMIRAL-AE with different advisors. (a) shows that ADMIRAL-AE with the best advisor (Advisor 1) gives the best overall performance, and ADMIRAL-AE with the worst advisor (Advisor 4) gives the worst overall performance. (b) shows that the MSE between the $Q$-value from ADMIRAL-AE and the value of the advisor $Q_\sigma$ progressively reduces, and hence ADMIRAL-AE evaluates correctly. All results show an average of five runs, and they have negligible standard deviation.}%
	\label{fig:gridworld}
\end{figure}

Further, we plot the mean square error (MSE) between the $Q$-values (of every state and joint action) obtained from playing ADMIRAL-AE (using the best advisor, Advisor~1) at the end of each episode and the true $Q$-value of the advisor (denoted as $Q_\sigma$). As mentioned before, the true $Q$-value of the advisor captures the expectation of the sum of the immediate reward and the discounted sum of future rewards obtained, when all agents follow the advisor's strategy for infinite periods starting from the current state and joint action. We obtain this value by running trajectories from each state and joint action pair until the end of the episode and calculating the expected discounted sum of rewards. The plot of the MSE is given in Figure~\ref{fig:gridworld}(b). From this figure, we see that the MSE approaches close to zero after about 2000 episodes of training, which shows that the ADMIRAL-AE algorithm correctly evaluates the advisor. This experiment provides another illustration for the effectiveness of ADMIRAL-AE in evaluating the given advisor.

In Table~\ref{tab:mazeepsilon} we use the average cumulative performances (5 runs) of ADMIRAL-AE along with all the advisors (from Figure~\ref{fig:gridworld}(a)) to find a value for $\epsilon'_0$, the hyperparameter that controls the advisor influence in ADMIRAL-DM. Recall that this is a major objective of the `pre-learning' phase. We use the Eq.~\ref{eq:normalize} given in Section~\ref{sec:offpolicy}. The performance of ADMIRAL-AE using each of the advisors should lie between the maximum possible performance and the performance of ADMIRAL-AE using a random advisor. We normalize the average performances between the range of $[0,1]$, which will be used as the initial value of $\epsilon'$ (or $\epsilon'_0$) in ADMIRAL-DM. The maximum possible performance in Table~\ref{tab:mazeepsilon} is adjusted for random exploration, which is approximately 5\% of all actions. Hence, we subtract this portion from the theoretical possible maximum performance of 4000 for this domain. The initial values of $\epsilon'$ that would pertain to the advisors, are given in the last column of Table~\ref{tab:mazeepsilon}. From this table, it can be seen that the $\epsilon'_0$ value is high for Advisor~1, since there is good value to be gained in listening to this advisor. On the other hand, $\epsilon'_0$ is lower for other advisors and for Advisor~4 this value is 0, which means there will be no advisor influence.  Since Advisor 4 only provides random advice, the agent is better off listening to its own policy rather than following actions suggested by  Advisor~4. Thus, using the values given in Table~\ref{tab:mazeepsilon}, ADMIRAL-DM would listen more to the good advisor and listen less (or not at all) to the bad advisors, as was our goal.

\begin{table}
\begin{center}
 \begin{tabular}{||p{0.12 \linewidth} |p{0.15 \linewidth} |p{0.18 \linewidth} |p{0.15 \linewidth} | p{0.15 \linewidth}||} 
 \hline
 Advisor & Average cumulative reward (rounded to nearest 10) & Maximum possible cumulative reward (adjusted for random exploration) & Average performance of ADMIRAL-AE using a random advisor (rounded to nearest 10) & Normalized value (rounded up to nearest first decimal) \\ [0.5ex] 
 \hline\hline
 Advisor~1 & 3560 & 3800 & 930 & 1 \\ 
 \hline
 Advisor~2 & 1700 & 3800 & 930 & 0.3 \\
 \hline
 Advisor~3 & 1030 & 3800 & 930 & 0.1  \\
 \hline
 Advisor~4 & 930 & 3800 & 930 & 0  \\[1ex] 
 \hline
\end{tabular}
\caption{Finding $\epsilon'_0$ using ADMIRAL-AE for the Grid Maze environment.}
\label{tab:mazeepsilon}
\end{center}
\end{table}

\begin{figure}
    \centering
	\subfloat[Performance of ADMIRAL-DM in the Grid Maze domain.
	]{{\includegraphics[width=0.45\textwidth, height=5cm]{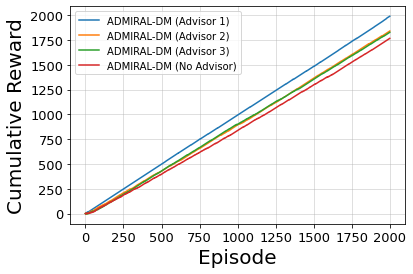} }} \quad
	\subfloat[Mean Square Error (MSE), between current $Q$-value of ADMIRAL-DM (using Advisor~1) and $Q_{*}$, which is the Nash $Q$-value. 
	]{{\includegraphics[width=0.45\textwidth, height=5cm]{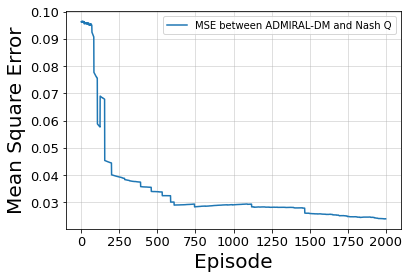}}}
    \caption{Experimental findings on the Grid Maze domain with both agents playing a tabular implementation of ADMIRAL-DM with different advisors with hyperparameter $\epsilon'_0$ obtained from Table~\ref{tab:mazeepsilon}. (a) shows that using a tuned value for $\epsilon'_0$, gives a good performance for all the implementations of ADMIRAL-DM with the different advisors. However, better advisors help in getting a relatively better performance. (b) shows that the MSE between the current $Q$-estimate of ADMIRAL-DM using Advisor~1, and the Nash $Q$-value progressively reduces. All results show an average of five runs, and they have negligible standard deviation.}%
	\label{fig:gridmazesarsa}
\end{figure}

Next, we show an illustration of the tabular implementation of our ADMIRAL-DM algorithm (Algorithm~\ref{alg:advisorQ2}). We use the same Grid Maze domain along with the same four advisors for this study. In this setting, we have the ADMIRAL-DM algorithm training along with each of the advisors for 2000 episodes. In each implementation, both agents use the same algorithm for training (ADMIRAL-DM with an advisor) similar to the previous experiments. We set the initial value of $\epsilon'$ (or $\epsilon'_0$) to the values obtained from ADMIRAL-AE (given in Table~\ref{tab:mazeepsilon}). As described, since the Advisor~4 is bad, the value of $\epsilon'_0$ is set to 0 and there is no influence from this advisor during learning. Further, for all the implementations, the advisor influence through $\epsilon'$ is linearly decayed during training, as described in the algorithmic steps for ADMIRAL-DM. More details regarding the game conditions and reward functions are given in Appendix~\ref{appendix:experimentaldetails}. The results (cumulative rewards) are in Figure~\ref{fig:gridmazesarsa}(a), where we plot the averages of five experimental runs. We note that learning from the best advisor (Advisor~1) using ADMIRAL-DM obtains the best overall performance, while ADMIRAL-DM with the other advisors requires more episodes to obtain similar performances to that of the Advisor~1. Since Advisor~1 teaches useful strategies, ADMIRAL-DM using Advisor~1 sees a good performance early on in training, even when there have been only limited interactions with the environment. This shows the value of positive influences from good advisors for improving the sample efficiency of MARL algorithms.   

Now, we  show that the $Q$-values of an agent following the ADMIRAL-DM algorithm converges to the Nash $Q$-value of the stochastic game. In Figure~\ref{fig:gridmazesarsa}(b) we plot the MSE between the $Q$-values (of every state and joint action) of ADMIRAL-DM using the Advisor~1, and the Nash $Q$-value. To obtain the Nash $Q$-value we construct the Nash policy 
and obtain the value of this policy by running trajectories from each state and joint action pair till the end of the episode and calculating the expected discounted sum of rewards (similar to obtaining the value of the advisor in the previous experiment). In this environment, the Nash equilibrium strategies will provide the actions of perfect coordination that obtains large positive rewards. The MSE in Figure~\ref{fig:gridmazesarsa}(b) approaches very close to zero after 2000 episodes of training, showing that the $Q$-values following ADMIRAL-DM finds the Nash $Q$-value in the limit.

%

\subsection{Experimental Results - Function Approximation - ADMIRAL-AE}\label{sec:experimentswithmaeqlee}

 We present results for our advisor evaluation algorithm (Algorithm~\ref{alg:advisorQ}) on the large state-action Pommerman environment \citep{resnick2018pommerman}. The objective is to conduct the `pre-learning' phase to evaluate a set of advisors and pick a suitable $\epsilon'_0$ for learning using ADMIRAL-DM, which we  study in the upcoming sub-sections. We  use a two-agent version of Pommerman, which we denote as Domain~OneVsOne of Pommerman (we will consider another domain of Pommerman shortly). Pommerman is a complex multi-agent domains, with each state containing more than 200 elements describing the position of the board, special features like bombs, and the position of other agents. Each agent can perform 6 actions, which include moving in the grid and laying bombs to kill the opponent. The reward function in Pommerman is quite sparse with the agents getting a +1 for winning the game, -1 for losing or a draw, with nothing in between. This game is general-sum since both agents get -1 for a draw. There is a maximum of 800 steps and the games where there are no winners after 800 steps are declared to be a draw. It is very hard for RL agents to learn good performance in Pommerman due to difficulties in balancing the twin goals of the killing of opponent and protecting themselves \citep{gao2019hard}.

\begin{figure}[h]
    \centering
	\subfloat[Advisor 1]{{\includegraphics[width=0.45\textwidth, height=4cm]{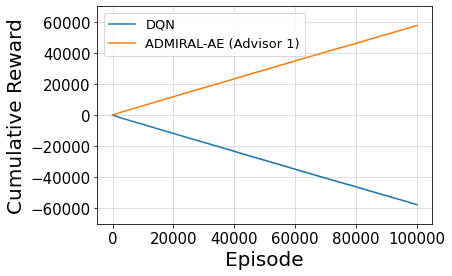} }}
	\subfloat[Advisor 2]{{\includegraphics[width=0.45\textwidth, height=4cm]{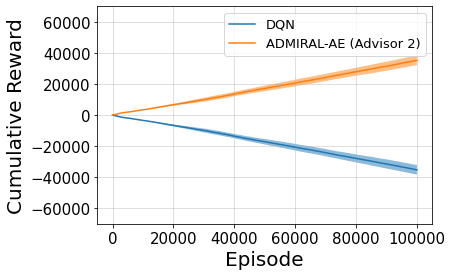} }}
	\\
	\subfloat[Advisor 3]{{\includegraphics[width=0.45\textwidth, height=4cm]{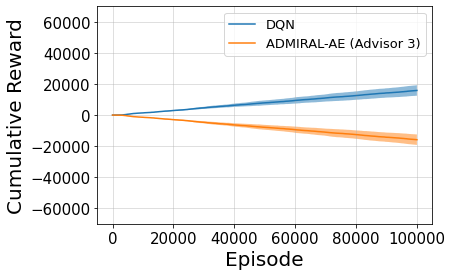} }}
	\subfloat[Advisor 4]{{\includegraphics[width=0.45\textwidth, height=4cm]{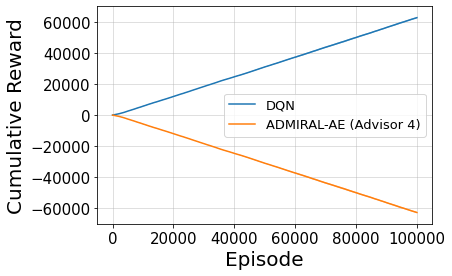} }}
  \caption{Analysis of ADMIRAL-AE algorithm on Pommerman (Domain~OneVsOne) against (single-agent) DQN. The standard deviation in (a) and (d) are very small (negligible). The best advisor (Advisor 1) makes the agent reach the best overall performance. The performance steadily decreases from Advisor 1 to Advisor 4. All results are averages of 10 experimental runs ((a) and (d) have negligible standard deviations).
  }%
	\label{fig:pommermanoffpolicy}
\end{figure}

For these experiments, we use the neural network implementation of the ADMIRAL-AE algorithm (Algorithm~\ref{alg:maeqlae}). We  consider four different advisors. 
Advisors are arranged in decreasing quality of advice, with Advisor 1 providing the best advice and Advisor 4 proving the worst (e.g. random advice).
 The Advisors~1 and 2 have a positive influence on learning, as they can teach many useful techniques to win the game, while Advisor~4 has a negative influence on learning. The Advisor~3 is also capable of teaching some useful strategies and in general, is better than a random advisor (Advisor~4). However, it is much worse compared to Advisor~1 or Advisor~2.  Appendix~\ref{appendix:experimentaldetails} contains  complete descriptions of the advisors and the implementation details of the algorithms used.

We conduct all experimental runs on 100,000 Pommerman games (episodes). Each episode is a full Pommerman game containing a maximum of 800 steps. Each experiment has a DQN (single-agent version as introduced in \citet{mnih2015human}) agent and an ADMIRAL-AE agent training and competing against each other. The experiments analyze the performance of ADMIRAL-AE with each of the four advisors against the common opponent (DQN). The performance is plotted in Figure~\ref{fig:pommermanoffpolicy}. We repeat the experiments 10 times and plot the averages and standard deviations. We observe that using the best advisor (Advisor 1) clearly results in the best performance of the ADMIRAL-AE algorithm, with an   overall cumulative reward reaching around 60,000 (Figure~\ref{fig:pommermanoffpolicy}(a)). The second-best advisor (Advisor 2) results in cumulative reward around 35,000 (Figure~\ref{fig:pommermanoffpolicy}(b)). When the ADMIRAL-AE uses Advisor 3 and Advisor 4, DQN results in  better performance cumulatively than the ADMIRAL-AE algorithm (Figures~\ref{fig:pommermanoffpolicy}(c) and (d)). 
The results (Figure~\ref{fig:pommermanoffpolicy}) show that the ADMIRAL-AE algorithm can distinguish between different quality advisors. From Figure~\ref{fig:pommermanoffpolicy}, it is clear that the advisor of choice for learning in this domain is Advisor~1. This result is obtained by running a separate instance of the ADMIRAL-AE algorithm with each of the advisors. This is consistent with our description of possible ways of evaluating the advisors using the ADMIRAL-AE algorithm in Section~\ref{sec:offpolicy}. 

In Table~\ref{tab:pommermanvsdqnepsilon} we tabulate the results and normalize the average performances to obtain a suitable value for $\epsilon'_0$ (using Eq.~\ref{eq:normalize}). The procedure is the same as that adopted in Section~\ref{sec:gridworldappendix}. We adjust the column for maximum possible performance value for 10\% random exploration as done in Table~\ref{tab:mazeepsilon}. The Advisor~1 along with its initial value of $\epsilon'$ is used in the next sub-section for learning using the ADMIRAL-DM method.

\begin{table}[h]
\begin{center}
 \begin{tabular}{||p{0.12 \linewidth} |p{0.15 \linewidth} |p{0.18 \linewidth} |p{0.15 \linewidth} | p{0.15 \linewidth}||} 
 \hline
 Advisor & Average cumulative reward (rounded to nearest 1000) & Maximum possible cumulative reward (adjusted for random exploration) & Average performance of the  ADMIRAL-AE using a random advisor (rounded to nearest 1000) & Normalized value (rounded up to nearest first decimal)\\ [0.5ex]
 \hline\hline
 Advisor~1 & 58000 & 90000 & -63000 & 0.8 \\ 
 \hline
 Advisor~2 & 35000 & 90000 & -63000 & 0.7  \\
 \hline
 Advisor~3 & -16000 & 90000 & -63000 & 0.4 \\
 \hline
 Advisor~4 & -63000  & 90000 & -63000 & 0  \\[1ex] 
 \hline
\end{tabular}
\caption{Finding $\epsilon'_0$ using ADMIRAL-AE for the Pommeran Domain~OneVsOne against DQN.}
\label{tab:pommermanvsdqnepsilon}
\end{center}
\end{table}

\subsection{Experimental Results - Function Approximation - ADMIRAL-DM}\label{sec:experiments}

 We now show that it is possible to extend our tabular ADMIRAL-DM method to function approximation based implementations that make our algorithms more generally applicable to environments with large state-action spaces. We will use the neural network-based version of ADMIRAL-DM as discussed in Section~\ref{sec:nnimplementation}. Additionally, we show that ADMIRAL-DM is capable of outperforming several strong baselines from literature.

We perform comparative experiments in three domains. All our experiments in this section are repeated 30 times, and we plot the mean and standard deviation. The important elements of our experimental domains are mentioned here, while the complete details of the domains and implementation details of all algorithms are in Appendix~\ref{appendix:experimentaldetails}. Neural network implementations of decision-making algorithms (ADMIRAL-DM, ADMIRAL-DM(AC)) are used in this sub-section. The first domain we consider is Domain~OneVsOne of Pommerman introduced in Section~\ref{sec:experimentswithmaeqlee}. Our baselines are DQfD, CHAT, and DQN. We perform 50,000 episodes of training, where the algorithms train against specific opponents. Each episode is a full Pommerman game (lasting a maximum of 800 steps).
All the algorithms relying on demonstrations (DQfD, CHAT, ADMIRAL-DM, and ADMIRAL-DM(AC)) use the Advisor~1 considered in Section~\ref{sec:experimentswithmaeqlee}. The probability of using the advisor action ($\epsilon_t'$ in Algorithm~\ref{alg:advisorQ2}) starts from 0.8 (obtained from Table~\ref{tab:pommermanvsdqnepsilon}) and linearly decays to be close to zero at the end of training for both ADMIRAL-DM and ADMIRAL-DM(AC). To provide data for offline pretraining in the case of DQfD, two instances of Advisor~1 is used to play many Pommerman games that generate the required data. The DQfD is pretrained with all of this data, before entering the training phase of our experiments.

\begin{figure}[h]
    \centering
	\subfloat[ADMIRAL-DM vs DQN]{{\includegraphics[width=0.45\textwidth, height=3cm]{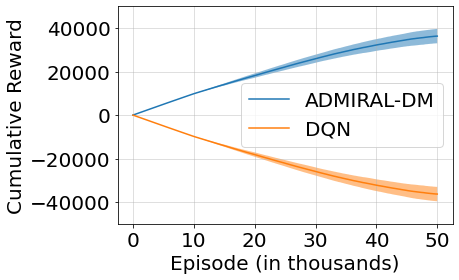} }}
	\subfloat[ADMIRAL-DM vs DQfD]{{\includegraphics[width=0.45\textwidth, height=3cm]{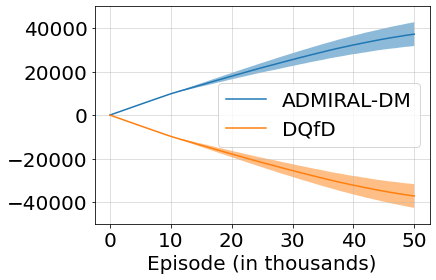} }}
	\\
	\subfloat[ADMIRAL-DM vs CHAT]{{\includegraphics[width=0.45\textwidth, height=3cm]{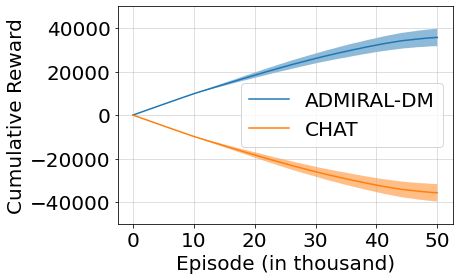} }}
	\subfloat[ADMIRAL-DM vs ADMIRAL-DM(AC)]{{\includegraphics[width=0.45\textwidth, height=3cm]{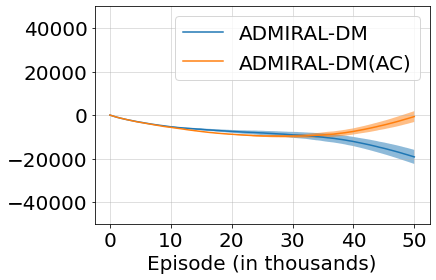} }}
	\\
	\subfloat[Faceoff against ADMIRAL-DM]{{\includegraphics[width=0.8\textwidth, height=3cm]{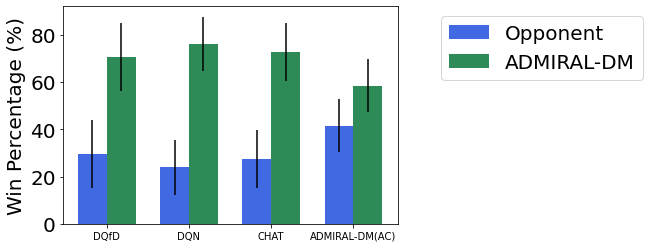} }}
  \caption{Pommerman competition against ADMIRAL-DM. The ADMIRAL-DM beats all baselines in the execution phase. In the training phase ADMIRAL-DM(AC) performs better than ADMIRAL-DM in a head-to-head challenge. 
  }%
	\label{fig:pommermantraining}
\end{figure}
 
 After the training phase, the trained algorithms enter a face-off competition of 10,000 games where there is no more training, no further exploration and additionally ADMIRAL-DM and ADMIRAL-DM(AC) play without any advisor influence. ADMIRAL-DM(AC) is a CTDE technique, which 
 only performs decentralized execution in face-off using the trained actor-network.
 We plot the cumulative rewards in the training phase (Figure~\ref{fig:pommermantraining} (a), (b), (c), (d)), from which it can be seen that ADMIRAL-DM's performance is better than the baselines (DQN, DQfD, and CHAT). The face-off plots in Figure~\ref{fig:pommermantraining}(e) show that ADMIRAL-DM wins more games on average against all the other baselines, showing its dominance. DQfD relies on pretraining, which is harder in MARL, as the nature of opponents that an agent will face during competition is impossible to determine upfront. The algorithms that use online advisors to give real-time feedback (capturing the changing nature of the opponent) tend to do better. DQfD has also been previously reported to have over-fitting issues \citep{gao2018reinforcement}, which is likely to hurt its performance more in multi-agent environments as compared to single-agent environments. In multi-agent environments, it is more important to be able to generalize to unseen dynamic opponent behaviour, which is different from that seen in pre-collected demonstration data. As discussed previously, CHAT maintains a confidence measure on the advisor, which depends on the advisor's consistency in action recommendations at different states. In MARL, this measure is not completely reliable, since even good advisors may need to formulate stochastic action recommendations as responses to the opponent. DQN, on the other hand, learns directly from interaction experiences and cannot learn from advisor inputs. This is a disadvantage in environments where external sources of knowledge, such as advisors, are available to be leveraged. 
 Furthermore, since our baselines are independent algorithms (that consider opponents to be part of the state), they lose out to ADMIRAL-DM, which explicitly tracks opponent action. ADMIRAL-DM loses to ADMIRAL-DM(AC) during training (Figure~\ref{fig:pommermantraining} (d)). Though ADMIRAL-DM(AC) shows slower learning overall  (as it is training both actor and critic), it ultimately learns a higher performing policy. One important reason is that the actor-critic method trains a stochastic policy that can explore naturally, whereas the $Q$-learning method needs a hyperparameter to conduct forced exploration ($\epsilon$-greedy). Another reason could be that ADMIRAL-DM(AC) learns from each recent experience, while ADMIRAL-DM has delayed learning using the replay buffer. However, in the face-off, ADMIRAL-DM has an edge over ADMIRAL-DM(AC) (Figure~\ref{fig:pommermantraining}(e)), probably due to being centralized. Since the performance of ADMIRAL-DM and ADMIRAL-DM (AC) in the face-off results given in Figure~\ref{fig:pommermantraining}(e) are close, we perform a Fischer's exact test for the average performances to check statistical significance. We get p < 0.03 which shows that this result is statistically significant (we treat p < 0.05 as statistically significant as in common practice).

\begin{figure}[h]
    \centering
	\subfloat[ADMIRAL-DM(AC) vs DQN]{{\includegraphics[width=0.45\textwidth, height=4cm]{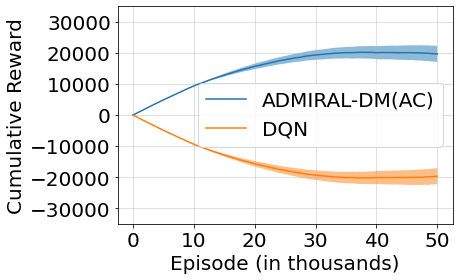} }}
	\subfloat[ADMIRAL-DM(AC) vs DQfD]{{\includegraphics[width=0.45\textwidth, height=4cm]{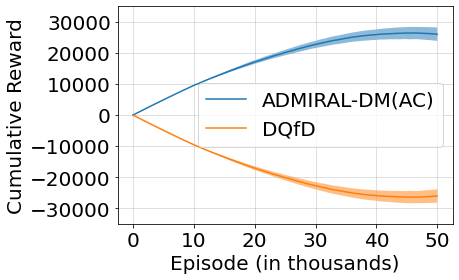} }}
	\\
	\subfloat[ADMIRAL-DM(AC) vs CHAT]{{\includegraphics[width=0.45\textwidth, height=4cm]{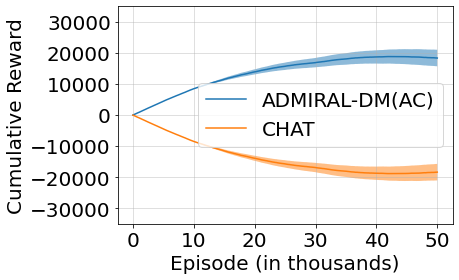} }}
	\quad
	\subfloat[Faceoff against ADMIRAL-DM(AC)]{{\includegraphics[width=0.5\textwidth, height=4cm]{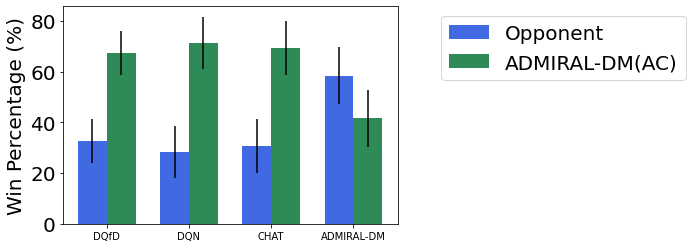} }}
  \caption{Pommerman competition against ADMIRAL-DM(AC). ADMIRAL-DM(AC) defeats all other baselines in both the training and execution phases. 
  }%
	\label{fig:pommermantraining-actorcritic}
\end{figure}

Next, we conduct similar experiments with ADMIRAL-DM(AC) that show it outperforms the baselines. The ADMIRAL-DM(AC) is explicitly compared to all the baselines in a training and face-off scheme similar to that done with ADMIRAL-DM. To recall, we perform training experiments of 50,000 full Pommerman games and face-off contests of 10,000 games, where the trained agents compete against each other without any further training or advisor influence. The results are plotted in Figure~\ref{fig:pommermantraining-actorcritic}, where the ADMIRAL-DM(AC) shows better performance than the baselines (in both training and face-off phases). In training,  ADMIRAL-DM(AC) dominates all the other baselines by winning around 20,000 -- 30,000 games out of the 50,000 games conducted. As observed in the previous experiments, the ADMIRAL-DM(AC) algorithm's learning is slower than that of the $Q$-learning variant, making the overall number of games won (captured by cumulative rewards), against the baselines to be lower than that of the corresponding training of ADMIRAL-DM against the baselines in Figure~\ref{fig:pommermantraining}. In the face-off contests, the ADMIRAL-DM(AC) algorithm wins more than 50\% of the games against all baselines except ADMIRAL-DM. As noted previously, the ADMIRAL-DM algorithm has a slight edge in performance over that of ADMIRAL-DM(AC) in the face-off stage.

Next, we use two cooperative domains from the Stanford Intelligent Systems Laboratory (SISL) \citep{gupta2017cooperative}. These experiments  have two phases --- training and execution. The algorithms train for 1000 games in the training phase and then enter an execution phase, where they execute the trained policy for 100 games. We choose to set the value of advisor influence $\epsilon_t'$ to 0.8 at the start of training and linearly decay it the same way as in the above experiments with Pommerman (since we are using good advisors). In the execution phase, there is no further exploratory actions for all algorithms, and no more influence of the advisor for ADMIRAL-DM and ADMIRAL-DM(AC).

\begin{figure}[h]
    \centering
	\subfloat[Pursuit-Training]{{\includegraphics[width=0.49\textwidth, height=4cm]{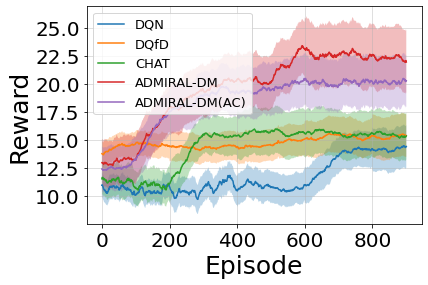} }}
	\subfloat[Pursuit-Execution]{{\includegraphics[width=0.49\textwidth, height=4cm]{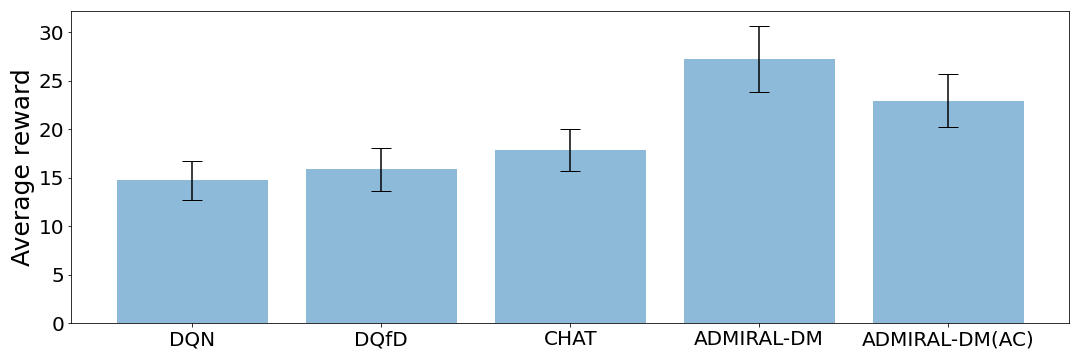} }}
	\\
	\subfloat[Waterworld - Training]{{\includegraphics[width=0.49\textwidth, height=4cm]{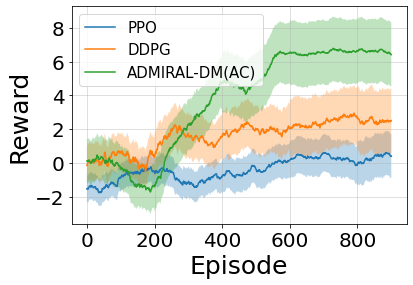} }}
	\subfloat[Waterworld - Execution]{{\includegraphics[width=0.49\textwidth, height=4cm]{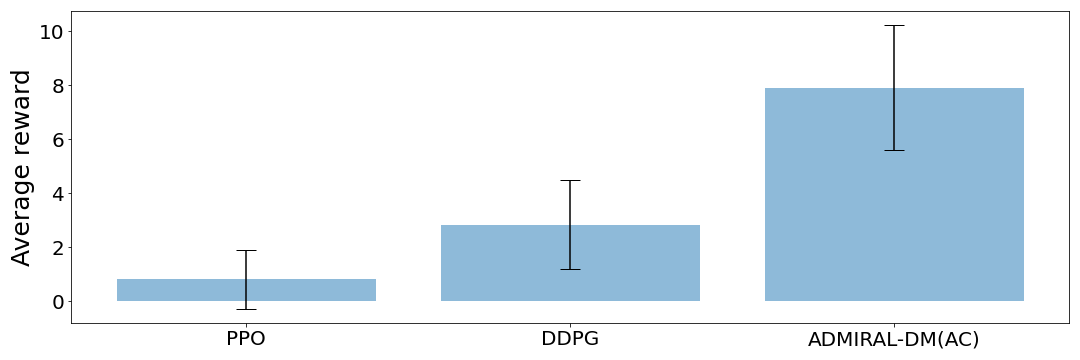} }}
	
  \caption{SISL Environments - Training and Execution. The ADMIRAL algorithms give a better performance than all the baselines in both the phases. Training graphs have been smoothed with a running average of 100. 
  }%
	\label{fig:pettingzoo}
\end{figure}

The first SISL environment is a Pursuit environment, which contains 8 pursuers controlled by learning algorithms, trying to capture 30 evaders moving randomly in the grid-based environment. Rewards have a local structure, where the pursuers participating in the capture of an evader or the pursuers encountering evaders are rewarded individually. The game is general-sum and does not have a global reward structure. The local reward structure helps to tackle the issue of credit assignment.
We use a pre-trained policy of DQN (trained for 1000 episodes) as the advisor. We plot the reward obtained (averaged per agent) for the training and execution phases in Figures~\ref{fig:pettingzoo}(a) and (b). The execution performance bars in Figure~\ref{fig:pettingzoo}(b) is the average performance across the 100 execution games. The results show that ADMIRAL-DM has  better performance than all other baselines, including DQN used as the advisor, in both phases. Thus, our algorithm can ultimately outperform the advisor. This environment is highly non-stationary (due to having more number of learning agents), so completely centralized ADMIRAL-DM has an edge over ADMIRAL-DM(AC) which uses decentralized actors. 

Since some performances in Figure~\ref{fig:pettingzoo}(b) are close, we perform an unpaired 2-sided $t$-test for statistical significance. Regarding the performances of CHAT and ADMIRAL-DM(AC) we get a value of $p < 0.02$ and similarly for the performances of ADMIRAL-DM and ADMIRAL-DM(AC) we get a $p < 0.02$, which shows that both these comparisons are statistically significant.\footnote{We consider $p < 0.05$ as statistically significant.}

Our second SISL environment is the continuous action space Waterworld environment, which has 5 pursuer agents trying to consume food and avoid poison. The actions are continuous-valued thrust, that the agents can apply to move in a particular direction, and with a desired speed. Here, multiple pursuer agents need to work together to consume food. 
Agents get rewards based on foods captured and punishments based on poison consumed. Rewards have a local structure similar to the Pursuit environment. 
The advisor here is a pre-trained proximal policy optimization (PPO) \citep{schulman2017proximal} agent (trained for 1000 episodes). 
Similar to the Pursuit environment, we plot the performances for both training and execution phases (averaged per agent) in the Waterworld environment (see Figures~\ref{fig:pettingzoo}(c) and (d)). 
The ADMIRAL-DM(AC) alone is used for these experiments, as the $Q$-learning variant is not applicable for continuous action spaces. 
We use two popular RL algorithms for continuous control, PPO and deep deterministic policy gradients (DDPG) \citep{lillicrap2015continuous} as baselines. 
The results show that ADMIRAL-DM(AC) has better performance than others in both phases (Figure~\ref{fig:pettingzoo}(c) and (d)). The ADMIRAL-DM(AC) algorithm has two important advantages over the other baselines here. The first advantage is that it is capable of leveraging an advisor.  The second advantage is that ADMIRAL-DM(AC) is trained in a centralized fashion by tracking the opponent behaviour while the other algorithms are independent methods. Still, ADMIRAL-DM(AC) is decentralized in execution. Notably, the ADMIRAL-DM(AC) algorithm also improves upon PPO, used as the advisor, similar to our observation in the Pursuit environment. 

In both the above SISL experiments, we see a small improvement in performance in the execution phases for both algorithms, ADMIRAL-DM and ADMIRAL-DM(AC), as compared to the final training performances. This is due to the fact that, at the end of the training, there is still a small amount of exploration and advisor influence (1 \% of actions) involved, whereas during execution both these influences are completely removed, which contributes to a net improvement in performance.

To summarize, our experimental results show that the ADMIRAL-DM and ADMIRAL-DM(AC) algorithms  make the best use of advisors in multi-agent settings compared to the other state-of-the-art algorithms. After the advisor influence completely stops, the performance of ADMIRAL-DM and ADMIRAL-DM(AC) is better than the others. We have also demonstrated that our methods can be extended to continuous action spaces and work in decentralized environments using the popular CTDE technique.


\subsection{Performance of ADMIRAL-DM Under The Influence Of Different Advisors}\label{sec:pommerman}

Next we  study the impact of using the ADMIRAL-AE in a `pre-learning' phase to determine the value of $\epsilon'_0$. Towards the same, we would like to use an algorithm to serve as a common opponent. We choose to use a different algorithm compared to the baselines considered in the previous sub-section (where the objective was to show better performances of ADMIRAL-DM as compared to these baselines). The algorithm we choose to use as the opponent is Deep Sarsa, which is similar to DQN but uses a ``Sarsa-like'' \citep{sutton1998introduction} Bellman update for the $Q$-values. We clarify that our objective in this section is not to show better performances against any baseline (which we have already done in Section~\ref{sec:experiments}). 

Further, in this section, we provide additional experiments that evaluate ADMIRAL-DM on different advisors with the common opponent (Deep Sarsa) and show that ADMIRAL-DM is capable of recovering from bad action-advice. All results reported in this section use averages and standard deviations of 30 runs.

\begin{figure}[h]
    \centering
	\subfloat[Advisor 1]{{\includegraphics[width=0.45\textwidth, height=4cm]{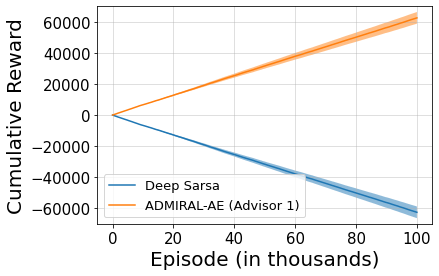} }}
	\subfloat[Advisor 2]{{\includegraphics[width=0.45\textwidth, height=4cm]{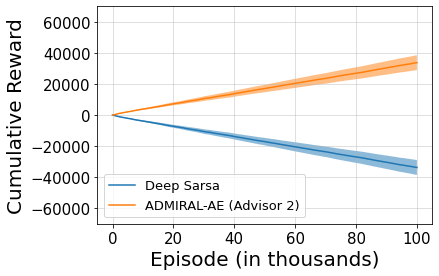} }}
	\\
	\subfloat[Advisor 3]{{\includegraphics[width=0.45\textwidth, height=4cm]{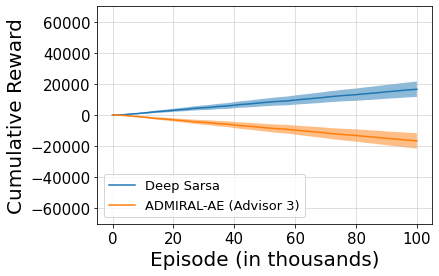} }}
	\subfloat[Advisor 4]{{\includegraphics[width=0.45\textwidth, height=4cm]{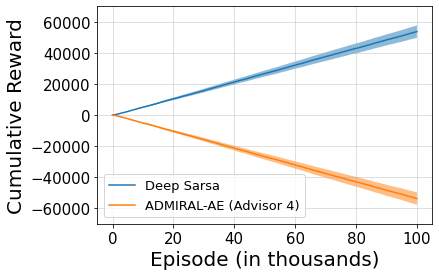} }}
  \caption{Results in Domain~OneVsOne of Pommerman using different advisors with ADMIRAL-AE and Deep Sarsa.  The best advisor (Advisor~1) gives the highest overall performance while the worst advisor (Advisor~4) gives the lowest performance. 
  }

	\label{fig:onevsoneoffpolicy}
\end{figure}

We describe two sets of experiments with two different domains of Pommerman. In the first set of experiments, we  use the neural network implementation of ADMIRAL-DM and ADMIRAL-AE on the Domain~OneVsOne of Pommerman using the four different advisors introduced in Section~\ref{sec:experimentswithmaeqlee}. To recall, Advisor~1 is the best advisor who can give the best action (relative to other advisors) at all states and Advisor~4 is the worst. The quality of advisors reduces from Advisor~1 to Advisor~4. As mentioned, we use a common opponent as an agent playing Deep Sarsa.

First, we wish to evaluate the given advisors against the performance of Deep Sarsa in the `pre-learning' phase. To do this, we run a series of training experiments, where we implement ADMIRAL-AE using each of the four advisors against Deep Sarsa. The results are plotted in Figure~\ref{fig:onevsoneoffpolicy}. As observed earlier, the best advisor leads to the best overall performance and the worst advisor leads to the worst performance. Using these performances the $\epsilon'_0$ values are determined in Table~\ref{tab:pommermanvsdeepsarsaepsilon} (using Eq.~\ref{eq:normalize}). It can be seen that the suggested value of $\epsilon'_0$ is highest (0.9) for the best advisor and the smallest (0) for the worst advisor. These values of $\epsilon'_0$ show that the agent will listen more to the good advisors and listen less (or not at all) to the bad ones.

\begin{table}[h]
\begin{center}
 \begin{tabular}{||p{0.12 \linewidth} |p{0.15 \linewidth} |p{0.18 \linewidth} |p{0.15 \linewidth} | p{0.15 \linewidth}||}  
 \hline
 Advisor & Average cumulative reward (rounded to nearest 1000) & Maximum possible cumulative reward (adjusted for random exploration) & Average performance of agent using a random advisor (rounded to nearest 1000) & Normalized value (rounded up to nearest first decimal)\\ [0.5ex] 
 \hline\hline
 Advisor~1 & 63000 & 90,000 & -54000 & 0.9 \\ 
 \hline
 Advisor~2 & 34000 & 90,000 & -54000 & 0.7  \\
 \hline
 Advisor~3 & -16400 & 90,000 & -54000 & 0.3  \\
 \hline
 Advisor~4 & -54000 & 90,000 & -54000 & 0  \\[1ex] 
 \hline
\end{tabular}
\caption{Finding an initial value for $\epsilon'_0$ using ADMIRAL-AE for the Pommeran Domain~OneVsOne against Deep Sarsa.}
\label{tab:pommermanvsdeepsarsaepsilon}
\end{center}
\end{table}

Next, we run the ADMIRAL-DM algorithm against Deep Sarsa, using each of the four advisors. We use four initial values of $\epsilon'_0$ for each of the advisors, where one of these values  corresponds to the choice of $\epsilon'_0$ as obtained from our previous experiment with ADMIRAL-AE reflected in Table~\ref{tab:pommermanvsdeepsarsaepsilon}. In addition to these four values, we also consider a value of 0 for $\epsilon'_0$, which considers the performance of ADMIRAL-DM with no advisor inputs to serve as a baseline. As done previously, the value of $\epsilon'$ is decayed linearly, during training, for ADMIRAL-DM in all the experiments. All training is conducted for 100,000 episodes with the advisor influence ($\epsilon'$) being linearly decayed to 0 at 50,000 episodes, \emph{i.e.}~there is no advisor influence after 50,000 episodes. Each episode is a complete Pommerman game involving a maximum of 800 steps as in the previous experiments. More details of the game conditions and the advisors can be found in Appendix~\ref{appendix:experimentaldetails}. The results showing the performance of ADMIRAL-DM in each of these experiments are presented in Figure~\ref{fig:onevsonecompetitionresults}.

\begin{figure}[h]   
    \centering
	\subfloat[Advisor 1 ]{{\includegraphics[width=0.45\textwidth, height=4cm]{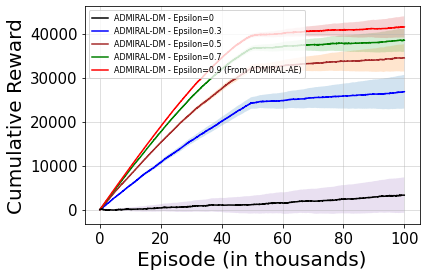} }}
	\subfloat[Advisor 2]{{\includegraphics[width=0.45\textwidth, height=4cm]{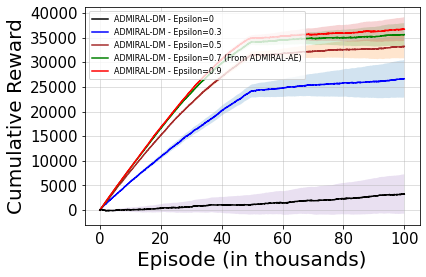} }}
	\\
	\subfloat[Advisor 3]{{\includegraphics[width=0.45\textwidth, height=4cm]{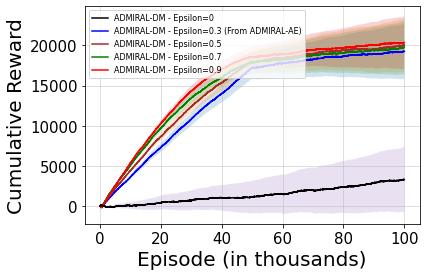} }}
	\subfloat[Advisor 4]{{\includegraphics[width=0.45\textwidth, height=4cm]{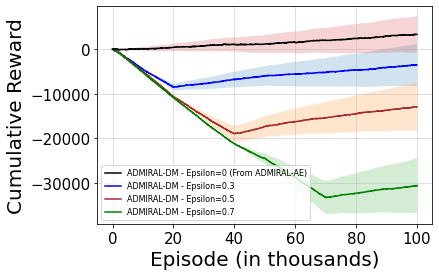} }}
  \caption{Results in Domain~OneVsOne of Pommerman using different advisors with ADMIRAL-DM and Deep Sarsa. The result plots show the performance of ADMIRAL-DM in the training against Deep Sarsa. The results show that $\epsilon'_0$ value obtained from the performance of ADMIRAL-AE in Table~\ref{tab:pommermanvsdeepsarsaepsilon} show either the best performance or is very close to the best possible performances amongst all the $\epsilon'_0$ values. 
  }

	\label{fig:onevsonecompetitionresults}
\end{figure}

\begin{figure}[h]
\centering
	\subfloat[OneVsOne - $\epsilon'=0.3$ ]{{\includegraphics[width=0.45\textwidth, height=4cm]{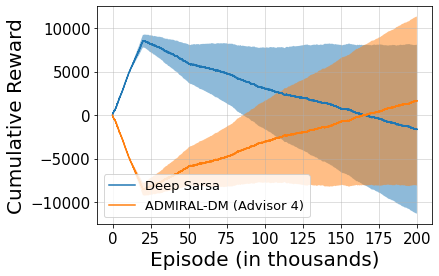} }}
	\subfloat[OneVsOne - $\epsilon'=0.5$]{{\includegraphics[width=0.45\textwidth, height=4cm]{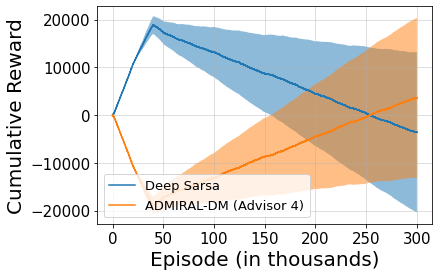} }}
	\\
	\subfloat[OneVsOne - $\epsilon'=0.7$]{{\includegraphics[width=0.45\textwidth, height=4cm]{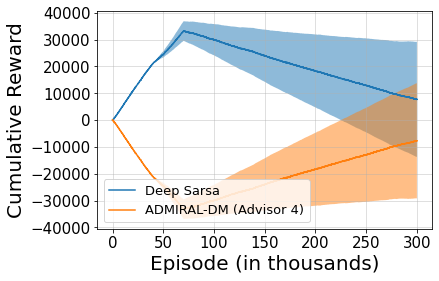} }}

  \caption{Results of ADMIRAL-DM vs Deep Sarsa using the Advisor~4, which is the worst advisor among the ones considered. The plots correspond to the OneVsOne domain. All the figures show that, ADMIRAL-DM is capable of recovering from bad action advice.  Greater influence of the bad advisor (larger $\epsilon'_0$) leads to a larger time needed for recovering from the bad action influence. 
  }%
	\label{fig:advisor4performanceonevsone}
\end{figure}

We highlight several observations. Figure~\ref{fig:onevsonecompetitionresults}(a) and (b) show that ADMIRAL-DM, using the good advisors (Advisor~1 and Advisor~2), achieves the maximum overall performance since the positive influence from the good advisors helps. However, if the advisor influence is limited ($\epsilon'=0.3$), the good advisors  only have a limited impact. As the value of $\epsilon'_0$ increases, we see that the performance of ADMIRAL-DM using the first two advisors improves (Figures~\ref{fig:onevsonecompetitionresults}(a) and (b)), as expected. Since Advisor~1 is even better than Advisor~2, we see from Figures~\ref{fig:onevsonecompetitionresults}(a) and (b) that ADMIRAL-DM using Advisor~1 clearly shows superior performance to that of Advisor~2 for the highest value of $\epsilon'_0$, 0.9. When $\epsilon'_0$ values were lower (such as 0.5), performance using both Advisor~1 and Advisor~2 are comparable since these advisors did not have many opportunities to make an impact. Notably, ADMIRAL-DM using Advisor~1 shows the best performance for $\epsilon'=0.9$, the value obtained from ADMIRAL-AE as seen in Table~\ref{tab:pommermanvsdeepsarsaepsilon}. ADMIRAL-DM using Advisor~2 almost provides the same performances for the highest values of $\epsilon'_0$, 0.7 and 0.9. Additional inputs from this advisor are not as useful (compared to Advisor~1), since it is weaker. Hence, a value of $\epsilon'=0.7$ as obtained from ADMIRAL-AE is sufficient for this advisor.

Turning our attention to the performance of ADMIRAL-DM with the third advisor, we find that it is significantly inferior compared to the other two advisors, yet still has a limited positive influence (Figure~\ref{fig:onevsonecompetitionresults}(c)). Furthermore,  the performance using this advisor is considerably better than using no advisor at all ($\epsilon'=0$). However, while using Advisor~3, we notice that, as the values of $\epsilon'_0$ increases, there is no appreciable improvement in performance. This shows that more influence of a comparatively less effective advisor does not lead to much improvement in performance. Again, the value suggested by ADMIRAL-AE (0.3), comes very close to the best possible performance with other values of $\epsilon'_0$. 

The performance of ADMIRAL-DM using the last advisor (Advisor~4) is interesting. This advisor has a negative influence on learning and makes ADMIRAL-DM lose  for the first few episodes (Figure~\ref{fig:onevsonecompetitionresults}(d)). However, ADMIRAL-DM recovers after the advisor influence wanes in all cases. As the value of $\epsilon'_0$ increases, we see that further influence from the bad advisor is  detrimental and a larger number of episodes is required before ADMIRAL-DM shows signs of recovery from the poor advice. Hence, the best value of $\epsilon'_0$, in this case, is 0, since listening to this advisor only harms learning. Again, this was the value obtained for Advisor~4, in Table~\ref{tab:pommermanvsdeepsarsaepsilon}. We present a more elaborate set of results on the experiments with Advisor~4 in Figure~\ref{fig:advisor4performanceonevsone}. Here we show that for all cases of $\epsilon'_0$, ADMIRAL-DM is capable of recovering from bad action-advising and after a suitable number of episodes, can overtake the performance of Deep Sarsa. However, higher values of $\epsilon'_0$ makes the learning from the bad advisor  problematic, since while ADMIRAL-DM shows signs of recovery it still cannot overtake the cumulative performance of Deep Sarsa even after 300,000 episodes (Fig:~\ref{fig:advisor4performanceonevsone}(c)).

\begin{figure}[h]
\centering
	\subfloat[Advisor 1]{{\includegraphics[width=0.45\textwidth, height=4cm]{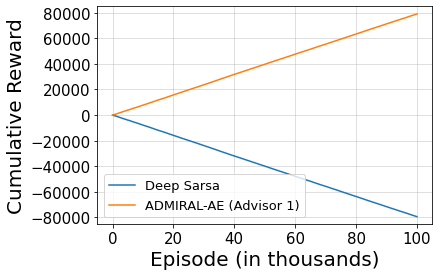} }}
	\subfloat[Advisor 2]{{\includegraphics[width=0.45\textwidth, height=4cm]{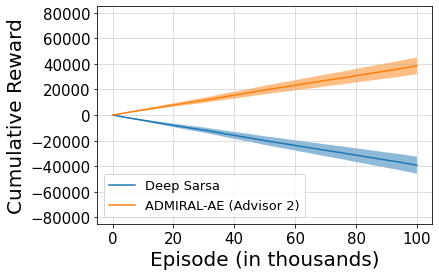} }}
	\\
	\subfloat[Advisor 3]{{\includegraphics[width=0.45\textwidth, height=4cm]{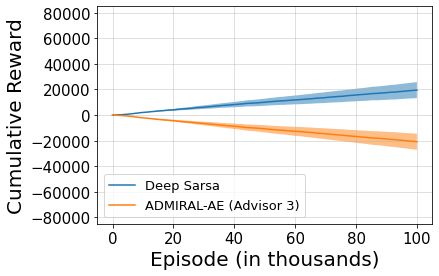} }}
	\subfloat[Advisor 4]{{\includegraphics[width=0.45\textwidth, height=4cm]{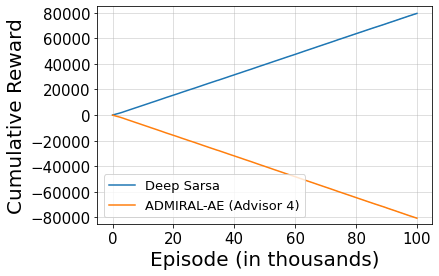} }}
  \caption{Results in Domain~TwoVsTwo of Pommerman using different advisors with ADMIRAL-AE and Deep Sarsa.  The best advisor (Advisor~1) gives the highest overall performance while the worst advisor (Advisor~4) gives the lowest performance. The standard deviation of (a) and (d) are negligible.
  }%
	\label{fig:teamcompetitionoffpolicy}
\end{figure}

We  now provide a brief description of another domain of Pommerman. Domain~TwoVsTwo is a larger version of Pommerman, where there are a total of four agents, with two of the four belonging to the same team. The state space is much larger than Domain~OneVsOne, with each state containing 372 elements. The reward function remains sparse, with the two agents belonging to the winning team getting +1 and the two agents belonging to the losing team getting -1 at the end of the game. In case of a draw, all the agents get -1. In Domain~TwoVsTwo, we consider one team of Deep Sarsa and one team of the ADMIRAL-DM. This makes this domain harder than the Domain~OneVsOne, as the agents must learn to cooperate amongst the members of the same team and compete against the members of the opponent team to win the game.  The Domain~TwoVsTwo is a mixed competitive-cooperative domain, which is different from Domain~OneVsOne that had only pure competition. We  use the same four advisors as considered before for the Domain~TwoVsTwo.

 Similar to the experiments with Domain~OneVsOne, we first evaluate the advisors against Deep Sarsa using the ADMIRAL-AE algorithm. The results are presented in Figure~\ref{fig:teamcompetitionoffpolicy}. Again, the best advisor gives the maximum overall performance and the worst advisor results in  the minimum performance.  The $\epsilon'_0$ values are determined based on these results in Table~\ref{tab:pommermanteamcompetitionvsdeepsarsaepsilon} (using Eq.~\ref{eq:normalize}). These values are used in further experiments using ADMIRAL-DM and Deep Sarsa in the Domain~TwoVsTwo of Pommerman.

\begin{table}[h]
\begin{center}
  \begin{tabular}{||p{0.12 \linewidth} |p{0.15 \linewidth} |p{0.18 \linewidth} |p{0.15 \linewidth} | p{0.15 \linewidth}||} 
 \hline
 Advisor & Average cumulative reward (rounded to nearest 1000) & Maximum possible cumulative reward (adjusted for random exploration) & Average performance of agent using a random advisor (rounded to nearest 1000) & Normalized value (rounded up to nearest first decimal)\\ [0.5ex]
 \hline\hline
 Advisor~1 & 79000 & 90000 & -81000 & 1 \\ 
 \hline
 Advisor~2 & 39000 & 90000 & -81000 & 0.7 \\
 \hline
 Advisor~3 & -21000 & 90000 & -81000 & 0.4  \\
 \hline
 Advisor~4 & -81000 & 90000 & -81000 & 0  \\[1ex] 
 \hline
\end{tabular}
\caption{Finding $\epsilon'_0$ using ADMIRAL-AE for the Pommeran Domain~TwoVsTwo against Deep Sarsa.}
\label{tab:pommermanteamcompetitionvsdeepsarsaepsilon}
\end{center}
\end{table}

\begin{figure}[h]
\centering
	\subfloat[Advisor 1 ]{{\includegraphics[width=0.45\textwidth, height=4cm]{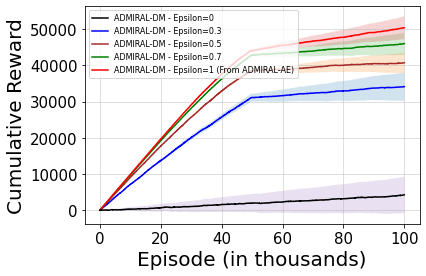} }}
	\subfloat[Advisor 2]{{\includegraphics[width=0.45\textwidth, height=4cm]{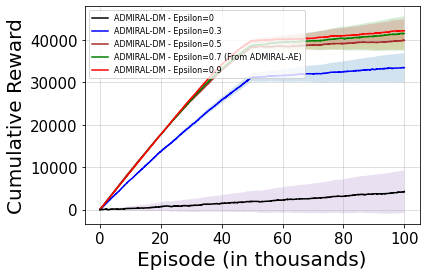} }}
	\\
	\subfloat[Advisor 3]{{\includegraphics[width=0.45\textwidth, height=4cm]{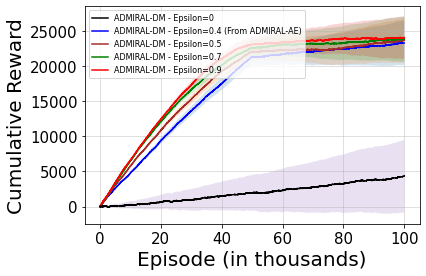} }}
	\subfloat[Advisor 4]{{\includegraphics[width=0.45\textwidth, height=4cm]{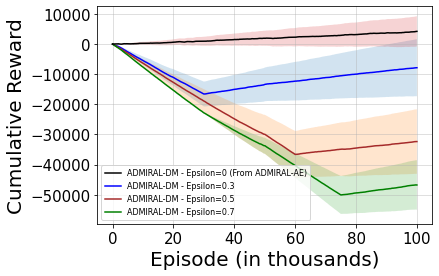} }}
  \caption{Results in Domain~TwoVsTwo of Pommerman using different advisors with ADMIRAL-DM and Deep Sarsa. The result plots show the performance of team playing ADMIRAL-DM in training competitions against Deep Sarsa. For this domain too, $\epsilon'_0$ value obtained from the performance of ADMIRAL-AE in Table~\ref{tab:pommermanteamcompetitionvsdeepsarsaepsilon} show either the best performance or is very close to the best possible performances amongst all the $\epsilon'_0$ values. 
  }

	\label{fig:teamcompetitionresults}
\end{figure}

\begin{figure}[h]
\centering

	\subfloat[TwoVsTwo - $\epsilon'=0.3$]{{\includegraphics[width=0.45\textwidth, height=4cm]{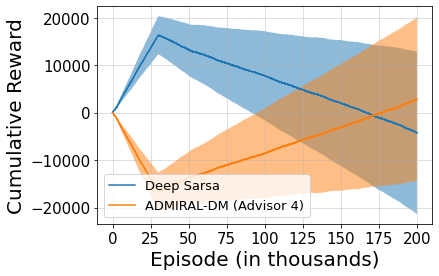} }}
	\subfloat[TwoVsTwo - $\epsilon'=0.5$]{{\includegraphics[width=0.45\textwidth, height=4cm]{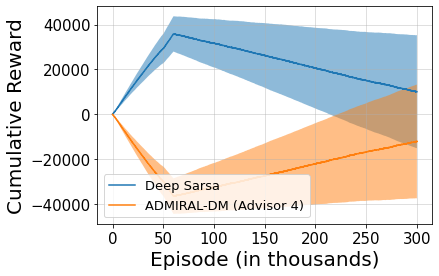} }}\\
	\subfloat[TwoVsTwo - $\epsilon'=0.7$]{{\includegraphics[width=0.45\textwidth, height=4cm]{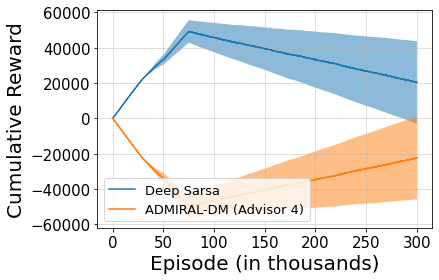} }}
	
  \caption{Results of ADMIRAL-DM vs. Deep Sarsa using the Advisor~4 in the TwoVsTwo domain. Similar to the first domain, all the figures show that ADMIRAL-DM is capable of eventually recovering from bad action advice.
  }%
	\label{fig:advisor4performancetwovstwo}
\end{figure}

 Our next set of experiments train in the Domain~TwoVsTwo with each agent in one team of Pommerman agents playing ADMIRAL-DM and each agent in the other team playing Deep Sarsa. We  consider four different initial values of $\epsilon'$ (where one of these values will correspond to the value obtained in Table~\ref{tab:pommermanteamcompetitionvsdeepsarsaepsilon}), as we did in the  OneVsOne domain. In addition to these $\epsilon'_0$ values, we  continue to use the value of $\epsilon'_0=0$ as the baseline. This corresponds to the situation of ADMIRAL-DM learning with no advisor influence. All training is run for 100,000 episodes, with the value of $\epsilon'$ decaying linearly to 0  at 50,000 episodes. Thus, there is no more influence from advisors for the last 50,000 episodes of training. The results showing the performance of the team playing ADMIRAL-DM (in the competition against a team playing Deep Sarsa) are in Figure~\ref{fig:teamcompetitionresults}. The ADMIRAL-DM performances show similar characteristics to that seen in Domain~OneVsOne. ADMIRAL-DM, using the best advisor (Advisor~1), shows the best performance for all values of $\epsilon'_0$ and its performance keeps improving with the increase in value of $\epsilon'_0$ (refer Figure~\ref{fig:teamcompetitionresults}(a)). Notably, comparing Figure~\ref{fig:teamcompetitionresults}(a) and Figure~\ref{fig:onevsonecompetitionresults}(a), the performance of ADMIRAL-DM using the best advisor is better in the case of Domain~TwoVsTwo as compared to the Domain~OneVsOne. This suggests that a good advisor has comparatively higher impact when tasks get harder. This is because there are far more strategies that need to be learned to do well in this domain and the opponent learning from scratch needs more time for learning the hard task. A good advisor, on the other hand, can teach the different strategies needed much faster and provide an early lead for ADMIRAL-DM. Similar observations apply to   the performances of ADMIRAL-DM using Advisor~2 as well (refer Figure~\ref{fig:teamcompetitionresults}(b) and Figure~\ref{fig:onevsonecompetitionresults}(b)).

 For Advisor~3, the performance does not change much with varying values of $\epsilon'_0$ (refer Figure~\ref{fig:teamcompetitionresults}(c)) as observed in the case of Domain~OneVsOne. Comparing Figure~\ref{fig:teamcompetitionresults}(c) and Figure~\ref{fig:onevsonecompetitionresults}(c), we note that the best performance of ADMIRAL-DM using  Advisor~3 is better for the Domain~TwoVsTwo as compared to Domain~OneVsOne, reinforcing our inferences earlier about a higher potential for impact in useful advisors when the tasks get harder. Regarding Advisor~4, the greater negative influence from this advisor necessitates a longer time for recovery (refer Figure~\ref{fig:teamcompetitionresults}(d)). 
 
 Again, from Figure~\ref{fig:teamcompetitionresults}, for all the four advisors, we  observe that the value for $\epsilon'_0$ as obtained from Table~\ref{tab:pommermanteamcompetitionvsdeepsarsaepsilon} gives the best possible performance or comes quite close to the best possible performance, compared to other possible values of $\epsilon'_0$. This shows the advantage of evaluation using ADMIRAL-AE. In Figures~\ref{fig:advisor4performancetwovstwo}(a), (b) and (c), ADMIRAL-DM shows signs of recovery for all values of $\epsilon'_0$ while using Advisor~4 for learning. However, for Advisor~4, the smaller the value of $\epsilon'_0$ the better. When making a comparison between  Figure~\ref{fig:advisor4performanceonevsone} and Figure\ref{fig:advisor4performancetwovstwo}, we note that, as the negative influence from the advisor increases (through a larger $\epsilon'_0$) the time needed for recovery also rises in the case of Domain~TwoVsTwo and is greater than that needed in the Domain~OneVsOne. As the complexity of the tasks increase, the agents need a lot more time to learn good policies that recovers the loss of performance from bad action-advice.  This shows that negative influence from an advisor is more costly in the case of harder MARL tasks/environments as against comparatively simpler environments.

From our experiments, we conclude that ADMIRAL-DM is capable of recovering from bad advisor recommendations and is able to suitably leverage  advisors who have some positive influence on learning. However, if possible, it is best to evaluate an advisor using the ADMIRAL-AE method and obtain a suitable initial value for the hyperparameter that determines the advisor influence ($\epsilon'_0$). This helps in learning good policies faster.

\subsection{Summary}

To summarize, in Section~\ref{sec:gridworldappendix} we showed an experimental illustration of our algorithms in a tabular domain. We showed that, while using ADMIRAL-AE, the best advisor gives the best overall performance. Further, ADMIRAL-AE provides a suitable value for the hyperparameter $\epsilon'_0$, which when used by ADMIRAL-DM subsequently, provides good performances with different types of advisors. We also provided an experimental illustration of our theoretical convergence results in the case of both ADMIRAL-DM and ADMIRAL-AE. In Section~\ref{sec:experimentswithmaeqlee} we provided an illustration of ADMIRAL-AE in a large environment with neural networks as function approximators. Again, we illustrated that using ADMIRAL-AE with the best advisor provides the best performance amongst other (comparatively worse) advisors. Obtaining an appropriate value for $\epsilon'_0$ from Section~\ref{sec:experimentswithmaeqlee} we showed that ADMIRAL-DM and ADMIRAL-DM(AC) provide better performances than a set of baselines in Section~\ref{sec:experiments}. We tested our algorithms in both competitive and cooperative domains, as well as settings with discrete and continuous action spaces.   

In Section~\ref{sec:pommerman}, we used two Pommerman domains to illustrate that using ADMIRAL-AE to obtain a suitable value for $\epsilon'_0$, provides the best performance for ADMIRAL-DM. Hence, when possible, it would be best to use ADMIRAL-AE for determining $\epsilon'_0$ using a pre-learning phase. Also, we illustrated that ADMIRAL-DM is capable of recovering from bad action advice from advisors if appropriate values for $\epsilon'_0$ cannot be determined before training ADMIRAL-DM. 

Additionally, in Appendix~\ref{sec:adaptive} we show another important advantage of using the principled method of ADMIRAL-AE for advisor evaluation in environments having dynamically learning and adapting advisors. We show that a principled method like ADMIRAL-AE would find a suitable value for $\epsilon'_0$ when it is presented with a learning advisor, where other methods based on simple heuristics may have a high chance of failure.


\section{Conclusion}

In this paper, we introduced the problem of learning under the influence of external advisors in MARL. We provided a  principled framework for MARL algorithms learning to use advisors.
Using $Q$-learning based methods, we proposed two MARL algorithms for this problem. We conducted theoretical analyses of these algorithms, establishing conditions under which fixed point guarantees can be provided regarding their learning in general-sum stochastic games. We  proved that previous theoretical results can extend to this setting under a comparatively weaker set of assumptions than previously considered. Empirically, we showed that our algorithms can be scaled to domains with large state-action spaces using traditional function approximators like neural networks. We also introduced an additional actor-critic variant of our ADMIRAL-DM algorithm that can operate under the CTDE paradigm and can learn in environments with continuous action spaces. Our empirical results further established the superiority of our algorithms compared to standard baselines. Furthermore,  we have shown that our methods would be useful in a wide variety of problems and that the algorithms can recover from the influence of weak/bad advisors during learning.

While there is a rich body of literature on the use of external knowledge sources in single-agent RL \citep{bignold21}, MARL provides additional challenges which mean that not all the results and approaches can transfer over directly. We discussed the important problems of directly using the single-agent based methods that learn from external sources in MARL. Additionally, we performed direct comparison experiments to elucidate a few of these problems.  Our approach to learning from advisors in MARL may look more complex compared to other single-agent approaches, however, the non-stationarity of the environment makes learning under the influence of advisors in MARL considerably more challenging. In MARL, quick adaptation to the changing environment is the key to better performance \citep{littman2001friend}. Our approach of using an online advisor is a more appropriate formulation of advisors in MARL, as real-time feedback against non-stationary opponents are critical for learning effective multi-agent policies, as demonstrated in our experiments.

Importantly, we consider a general setting, where we had no restrictions on the type or quality of the advisor, and no restrictions on the relation between the reward functions of different agents (general-sum). Particularly in MARL, the assumption of optimal advisors could be overly strong, since performance depends on the nature of opponents. The advisor could be capable of providing good feedback in strategizing against a particular class of opponents yet be useless against another class of opponents. Furthermore, a sub-optimal advisor could be good only in a very narrow portion of the state space, which is still useful for an agent learning from scratch in this environment. By explicitly allowing nonrestrictive \textbf{sub-optimal} advisors, our work is more widely applicable than previous methods that make an assumption of optimal (or near-optimal) experts to help RL training \citep{ross2011reduction, giusti2015machine, sonabend2020}.  

From the empirical perspective, as future work, we would like to study the performance of ADMIRAL algorithms under the simultaneous influence of multiple advisors providing conflicting demonstrations. This problem has been studied in single-agent RL environments \citep{li2019two}, but not yet in the MARL context. There is an emerging line of work that studies the possibility of multiple agents learning from peers in cooperative MARL settings \citep{omidshafiei2019learning}. Our paper has the potential to contribute to this line of work as well. Further, in this paper, we provided a simple technique of making use of the evaluation of advisors in a learning algorithm by setting the value of $\epsilon'_0$. More sophisticated ways of analyzing the performance of ADMIRAL-AE and using the results for learning faster and more effective decision-making policies is left to future work. An observation about ADMIRAL-AE is that, in MARL, the advisors can be used as a way to predict the behaviour of other agents as well, which is not relevant in single-agent settings. In MARL, each agent needs to have the ability to perform accurate opponent modelling, based on its observations, to obtain strong performances \citep{hernandez2019survey}. This is because the reward function and the transition dynamics depends on the joint action at each state. Previous methods have used several techniques, such as using a separate neural network for learning opponent behaviour \citep{he2016opponent}, learning policy features from raw observations \citep{zhang2018deep}, and using the agent's own policy to predict opponent actions \citep{roberta2018modeling}. However, many of these methods are computationally expensive and scale poorly with the number of states, actions, and agents. Another possibility for opponent modelling is leveraging an external advisor that can possibly predict opponent behaviour, as done in ADMIRAL-AE, which could be relatively computationally friendly given the availability of an appropriate advisor. This could open up a very interesting research direction in learning from advisors in MARL. Both our ADMIRAL-DM and ADMIRAL-AE algorithms are exponential in the number of agents as described in our complexity analysis (Section~\ref{sec:advisorqlearning}). Hence, our algorithms are not easily scalable to environments with a large number of agents which is one limitation of our framework. As future work, our framework could be combined with works on mean field games \cite{lasry2007mean} which can make the approach more scalable (since mean field methods guarantee a constant dependence on the number of agents).

From the theoretical perspective, as future work, we would like to fully characterize the convergence rates of our algorithms. Additionally, some of our theoretical assumptions for the environmental settings may seem restrictive, however, we assert that this work is the first to provide a theoretical foundation for MARL with advisors and that these assumptions are useful in understanding the strengths and limitations of such an approach. Furthermore, we note that the assumptions we make are also made by other works exploring the foundations of MARL, such as \citet{hu2003nash}. In future work, we wish to explore the ramifications of relaxing some of these assumptions. The theoretical understanding of MARL, in general, is still in its infancy and much more research into MARL theory is required to enhance our understanding of this area \citep{zhang2019multi}. In this paper, we restrict the theoretical analysis to tabular settings, which is in line with the state-of-the-art in theoretical analysis of learning in general-sum stochastic games \citep{zhang2019multi}. The objective is to provide a theoretical guarantee in the most basic (baseline) setting possible. Using a similar approach to single-agent RL methods that extend the tabular results to the function approximation setting \citep{carvalho2020new}, it would be possible to extend our theoretical results in this paper to the function approximation setting as well. An elaborate theoretical study of this is left to future work.

Two specific motivational applications for our work were discussed in Section~\ref{sec:introduction}. Similar to these examples, other domains such as managing natural disasters, controlling disease outbreaks, and learning to play multi-player/multi-team sports would also find benefit from our approach. These domains also have state-of-the-art advisor models. These models, while not optimal, are still used in practice. Due to the inherent poor sample efficiency of MARL methods, it becomes critical to use all available sources of knowledge judiciously. Hence, learning from fallible advisors is important for many real-world domains that typically have the structure of multiple agent interaction.

The framework we have introduced in this paper on using external information through advisors is expected to improve the sample complexity of MARL algorithms and is an important step towards making MARL methods usable in real-world environments. We expect that our paper will spark more work in the  area of accelerating RL training using other available sources of knowledge (or advisors) and that it will interest a broad community of researchers in the area of RL, game theory, machine learning, and multi-agent systems. 

\section*{Acknowledgements}

Kate Larson is an affiliate of the Vector Institute, Toronto. The authors would like to thank Seyed Majid Zahedi at  University of Waterloo for his comments on the paper draft. Resources used in preparing this research at the University of Waterloo were provided by the province of Ontario and the government of Canada through CIFAR, NRC, NSERC and companies sponsoring the Vector Institute. Part of this work has taken place in the Intelligent Robot Learning (IRL) Lab at the University of Alberta, which is supported in part by research grants from the Alberta Machine Intelligence Institute (Amii); a Canada CIFAR AI Chair, Amii; Compute Canada; Mitacs; and NSERC.

\clearpage

\appendix

\section{Proof For The Lemmas In The Main Paper}\label{appendix:proofs}

In this section, we will restate all the lemmas in the main paper with detailed proofs. No new lemmas are given in this section. 

\begin{lemm2}
Let us fix an arbitrary positive constant $C$, an arbitrary $w_0$, and a sequence $\epsilon$. Then provided that 

(i) $ y_t(w_0, \epsilon)$ converges to some point (independent of $t$) $\mathcal{D}$ 

(ii) The sequence $\epsilon$ converges to 0 in the limit ($t \xrightarrow{} \infty$)

\noindent The homogeneous process $x_t(w_0, \epsilon)$ converges to a point $\frac{1}{\hat{\beta}} \mathcal{D}$ w. p. 1, where $\hat{\beta}$, satisfying $0 < \hat{\beta} \leq 1$, is the scaling factor applied.  
\end{lemm2}

\begin{proof}
We state that 
\begin{equation}\label{eq:rescaling}
    \begin{array}{l}
         y_t(w, \epsilon) = x_t(d_t w, c_t \epsilon_t)
    \end{array}
\end{equation}

\noindent for some sequences $\{c_t\}$ and $\{d_t\}$, where $c_t = (c_{t0}, c_{t1}, \ldots, c_{ti}, \ldots)$, $\{c_t\}$ and $\{d_t\}$ satisfy $0,d_t, c_{ti} \leq 1$, and $c_{ti} = 1$ if $i\geq t$. Here, the product of the sequences $c_t$ and $\epsilon_t$ is component wise: $(c_t \epsilon_t)_i = c_{ti} \epsilon_i$. Note that $y_t(w, \epsilon_t)$ and $x_t(w,\epsilon_t)$ depend only on $\epsilon_0, \cdots, \epsilon_{t-1}$. Thus, it is possible to prove Eq.~\ref{eq:rescaling} by constructing the appropriate sequence $c_t$ and $d_t$. 

Set $c_{0i} = d_i = 1$ for all $i = 0,1,2, \ldots$. Then Eq.~\ref{eq:rescaling} holds for $t=0$. Let us assume that $(c_i, d_i)$ is defined in a way that Eq.~\ref{eq:rescaling} holds for $t$. Let $B_t$ be the ``scaling coefficient'' of $y_t$ at step $t+1$ ($B_t = 1$ if there is no scaling, otherwise $0<B_t<1$ with $B_t = C/||G_t(y_t, \epsilon_t||$). Now we have: 

\begin{equation}
    \begin{array}{l}
         y_t(w, \epsilon_t) = B_t G_t (y_t(w,\epsilon_t), \epsilon_t) \\
         
         = G_t(B_t y_t(w, \epsilon_t), B_t \epsilon_t)\\
         
         = G_t(B_t x_t(d_tw, c_t\epsilon_t), B_t \epsilon_t).
    \end{array}
\end{equation}

We claim that 
\begin{equation}\label{eq:claimforS}
    \begin{array}{l}
         B{x_t} (w, \epsilon_t) = x_t(B w, B \epsilon_t)
    \end{array}
\end{equation}

\noindent holds for all $w$, $\epsilon_t$ and $B>0$. For $t = 0$, this obviously holds. Assume that it holds for some time $t$. Then, from Eq.~\ref{eq:homogeneous},

\begin{equation}
    \begin{array}{l}
         B{x_{t+1}} (w, \epsilon_t) = B G_t (x_t(w, \epsilon_t), \epsilon_t)\\
         = G_t(B{x_t}(w, \epsilon_t), B\epsilon_t)
        = x_{t+1}(B w, B \epsilon_t)
             \end{array}
\end{equation}

\noindent Thus, 

\begin{equation}
    \begin{array}{l}
         y_{t+1}(w,\epsilon_t) = G_t (x_t (B_t d_t w, B_t c_t \epsilon_t), B_t \epsilon_t), 
        
    \end{array}
\end{equation}

\noindent and we see that Eq.~\ref{eq:rescaling} holds if we define $c_{t+1,i}$ as $c_{t+1,i} = B_t c_{ti}$ if $0 \leq i \leq t$, $c_{t+1, i} = 1$ if $i>t$ and $d_{t+1} = B_t d_t$.

Thus, we get that with the sequences 

\begin{equation}
    \begin{array}{l}
         c_{t,i} = \Pi_{j=i}^{t-1}B_j, \textrm{ if } i<t;
         \\
          c_{t,i} = 1, otherwise;
         
    \end{array}
\end{equation}

\noindent $d_0 = 1$ and 

\begin{equation}
    \begin{array}{l}
         d_{t+1} = \Pi^t_{i=0} B_i, 
    \end{array}
\end{equation}

\noindent Eq.~\ref{eq:rescaling} is satisfied for all $t\geq0$. 




We know that $y_t(w, \epsilon_t) \xrightarrow{} \mathcal{D}$ w. p. 1. Since the process $y_t$ has been constructed by bounding the process with $|| y_t ||\leq C$, it follows that $\mathcal{D} \leq C$. Then, there exists a finite index $M$ such that if $t>M$ then 

\begin{equation}\label{eq:probability}
    \begin{array}{l}
         Pr(||y_t(w, \epsilon_t)|| < C) > 1 - \delta
    \end{array}
\end{equation}

\noindent without applying any more rescaling. Now, let us restrict our attention to those events $\omega$ for which $||y_t(w, \epsilon_t(\omega))|| < C$ for all $t > M$ without rescaling. Thus, we have no rescaling for all $t$, such that $t>M$. Thus, $c_{t,i} = c_{M+1,i}$ for all $t \geq M+1$ and $i$, and specifically $c_{ti} = 1$ if $i,t \geq M + 1$. Similarly, if $t>M$ then $d_{t+1}(\omega) = \Pi_{i=0}^{M} B_i (\omega) = d_{M +1} (\omega)$. Let $A_\omega = \{\omega: ||y_t(w, \epsilon)(\omega)|| < C\}$ without rescaling. By Eq. \ref{eq:rescaling}, we have that if $t>M$ then, 

\begin{equation}
    \begin{array}{l}
         y_t(w, \epsilon_t(\omega)) = 
         x_t(d_{M+1}(\omega)w, c_{M+1}(\omega) \epsilon_t(\omega)).
    \end{array}
\end{equation}

Thus, it follows from our assumption concerning $y_t$ that $x_t(d_{M+1}(\omega) w, c_{M+1} \epsilon_t(\omega))  $ converges to $\mathcal{D}$ almost everywhere (a. e.) on $A_\omega$ and, consequently, by Eq.~\ref{eq:claimforS}, $x_t(w, c_{M+1}\epsilon_t(\omega)/d_{M+1}(\omega))$ converges to $\mathcal{D}' = \frac{1}{d_{M+1}}\mathcal{D}$ a. e. on $A_{\omega}$. Since $c_{M+1} = 1$ in the limit and we are analyzing in the space of $A_\omega$, $x_t(w, \epsilon_t(\omega)/d_{M+1}(\omega))$ converges to $\mathcal{D}'$ too. Now, since $\epsilon$ converges to 0 in the limit, and we are analyzing values in the space of $A_\omega$, we can write $x_t(w, \epsilon_t(\omega)) \approx x_t(w, \epsilon_t(\omega)/d_{M+1}(\omega)))$ which also converges to $\mathcal{D}'$. All these hold with probability at least $1-\delta$, since, by Eq.~\ref{eq:probability}  $Pr(A_{\omega} > 1- \delta)$. Since $\delta$ was arbitrary, the lemma follows. 

\end{proof}

\begin{lemm2}
Let $X$ and $Y$ be normed vector spaces, $U_t: X \times Y \xrightarrow{} X (t=0,1,2, \ldots)$ be a sequence of mappings, and $\theta_t \in Y$ be an arbitrary sequence. Let $\theta_\infty \in Y$ and $x_\infty \in X$. Consider the sequences $x_{t+1} = U_t(x_t, \theta_\infty)$, and $y_{t+1} = U_t(y-t, \theta_t)$ and suppose that $x_t$ and $\theta_t$ converge to $x_\infty$ and $\theta_\infty$ respectively, in the norm of the appropriate spaces. 

Let $L^\theta_k$ be the uniform Lipschitz index of $U_k(x,\theta)$ with respect to $\theta$ at $\theta_\infty$ and, similarly, let $L^\mathscr{X}_t$ and $L^\theta_t$ satisfy the relations $L^\theta_t \leq C(1 - L^\mathscr{X}_t)$, and $\Pi_{m=t}^\infty L^\mathscr{X}_m = 0$ where $C>0$ is some constant and $t = 0,1,2, \ldots,$ then $\lim_{t \xrightarrow{} \infty} || y_t - x_\infty|| = 0$.
 
\end{lemm2}

\begin{proof}
See Theorem 15 in \citet{szepesvari1999unified} for detailed proof.  
\end{proof}

\begin{lemm2}

Let $\mathcal{Z}$ be an arbitrary set and consider the process
\begin{equation*}
    \begin{array}{l}
         x_{t+1}(z) = G_t(z)x_t(z) + F_t(z) (C + k_t(z) C)
    \end{array}{}
\end{equation*}

\noindent where $x_1, F_t, G_t \geq 0 $ are random processes,  $||x_1|| < C < \infty$ w. p. 1 for some $C>0$, and $z$ is an element in $\mathcal{Z}$. Assume that for all $k$, $\lim_{n \xrightarrow{} \infty} \Pi_{t=k}^n G_t(z) = 0$ uniformly in $z$ w. p. 1 and $F_t(z) = \gamma(1 - G_t(z))$, for some $0 \leq \gamma < 1$, and $\forall z \in \mathcal{Z}$,  w. p. 1. Also, $k_t(z)$ converges to $K(z)$ in the limit. Then, $x_t(z)$ converges to a point $D(z) = \gamma(C + K(z)C)$ w. p. 1.

\end{lemm2}

\begin{proof}

Consider the process that is obtained from substituting the value of $F_t(z)$ in terms of $G_t(z)$ in Eq.~\ref{eq:limitlemma},

\begin{equation}\label{eq:limitlemma2}
    \begin{array}{l}
         x_{t+1}(z) = G_t(z)x_t(z) + \gamma(1 - G_t(z)) (C + k_t(z) C).
    \end{array}{}
\end{equation}

Now, subtracting $\gamma (C + k_t(z)C)$ on both sides of the equation we get 

\begin{equation}\label{eq:limitlemma3}
    \begin{array}{l}
         x_{t+1}(z) - \gamma (C + k_t(z)C) =  G_t(z)(x_t(z) - \gamma (C + k_t(z) C)).
    \end{array}{}
\end{equation}

The above equation converges to 0 in the limit as $\lim_{n \xrightarrow{} \infty} \Pi_{t=k}^n G_t(z) = 0$. Thus, we have that $x_{t+1}(z) - \gamma (C + k_t(z)C)$ converges to 0 in the limit. Hence, $x_t$ converges to $\gamma (C + K(z)C)$, since $k_t(z)$ converges to $K(z)$ in the limit. 

\end{proof}

\begin{lemm2}
Consider an equation of the form
\begin{equation*}
    \begin{array}{l}
         x_{t+1}(z) = G_t(z) x_t(z) + F_t(z) (||x_t|| + \epsilon_t + k_t(z) ||x_t||)
    \end{array}
\end{equation*}

\noindent where the sequence $\epsilon_t $ converges to zero w. p. 1. Assume that for all $k$, $\lim_{n \xrightarrow{} \infty} \Pi_{t=k}^n G_t(z) = 0$ uniformly in $z$ w. p. 1 and $F_t(z) = \gamma(1 - G_t(z))$, for some $0 \leq \gamma < 1$, and $\forall z \in \mathcal{Z}$,  w. p. 1. Assume further that $k_t(z)$ is finite, and it converges to $K(z)$ in the limit ($t \xrightarrow{} \infty$). Then $x_t(z)$ converges to a point represented by $S'(z) = \frac{1}{\hat{\beta}} (\gamma C_1 + K(z) C_1) $, where $C_1$ is a small positive constant, w. p. 1 uniformly over $\mathcal{Z}$. Here $\hat{\beta}$ is a scaling factor satisfying $0 < \hat{\beta} \leq 1$.

\end{lemm2}

\begin{proof}

Let us consider a process $y_t$ that is obtained from keeping the original process $||x_t||$ bounded by a constant $C_1$. This is an arbitrary bound, with $C_1$ specified to be a small positive constant. Since, $||x_t||$ is guaranteed to be positive, we can find such a positive $C_1$. Now we get,

\begin{equation}\label{eq:originalprocess}
    \begin{array}{l}
      y_{t+1}(z) = G_t(z) y_t(z)  + \gamma (1 - G_t(z)) (C_1 + \epsilon_t + k_t(z) C_1 ).
    \end{array}
\end{equation}

By Lemma~\ref{lemma:normedvectorspace}, $y_t$ converges to $\gamma C_1 + K(z) C_1 $ as the following bindings show $X, Y := \textbf{R}, \theta_t := \epsilon_t, U_t(x,\theta) := G_t(z)x + \gamma (1 - G_t(z)) (C_1 + k_t(z) C_1 + \theta)$, where $z \in Z$ is arbitrary. Then, $L^X_t = G_t(z)$ and $L^\theta_t = \gamma(1 - G_t(z))$ satisfying the conditions of Theorem~\ref{lemma:normedvectorspace}. We know that $x$ in the expression $U_t(x,\theta)$ converges to $\gamma C_1 + K(z) C_1 $ as proved in Lemma~\ref{lemm:boundedlemma}. 

In Lemma~\ref{lemm:rescaling}, we proved that the original process will converge to a point $\frac{1}{\hat{\beta}} \mathcal{D}$, if the bounded process converges to $\mathcal{D}$. Here $\hat{\beta}$ is the scaling factor applied, satisfying the relation $0 < \hat{\beta} \leq 1$. Using this result, now we get that the process represented by the Eq.~\ref{eq:perterbationterm} converges to a point $S'(z) = \frac{1}{\hat{\beta}} (\gamma C_1 + K(z) C_1) $ which is a constant for a given $z$. This gives an expression for the point $S$ in Theorem~\ref{maintheorem}. 


\end{proof}

\begin{lemm2}
Let $\mathcal{F}_t$ be an increasing sequence of $\sigma$-fields, let $0\leq \alpha_t$ and $w_t$ be random variables such that $\alpha_t$ and $w_{t-1}$ are $\mathcal{F}_t$ measurable. Assume that the following hold w. p. 1: $E[w_t|\mathcal{F}_t, \alpha_t \neq 0] = A$, $E[w^2_t|\mathcal{F}_t] < B < \infty$, $\sum_{t=1}^{\infty} \alpha_t = \infty$ and $\sum_{t=1}^{\infty} \alpha^2_t < C < \infty$ for some $B,C > 0$. Then the process 

\begin{equation*}
    \begin{array}{l}
         Q_{t+1} = (1 - \alpha_t)Q_t + \alpha_t w_t
    \end{array}
\end{equation*}

\noindent converges to A w. p. 1. 

\end{lemm2}

\begin{proof}
Refer to Lemma 4 in \cite{szepesvari1999unified} for detailed proof. 
\end{proof}

\begin{lemm2}
Assume that $\alpha_t$ satisfies Assumption 2 and the mapping $P_t: \mathcal{Q} \xrightarrow{} \mathcal{Q}$ satisfies the condition that, there exists a scalar $\gamma$ satisfying $0 \leq \gamma < 1$, a sequence $\lambda_t \geq 0$ converging to zero w. p. 1, and a finite sequence $k_t(s)$ such that $|| P_tQ  - P_t Q_{*} || = \beta ||Q - Q_{*} || + \lambda_t + \beta k_t(s)||Q - Q_{*} ||$ for all $Q$, and all $s \in \mathcal{S}$. Assume further that, $k_t(s)$ converges to a finite point $K(s)$ in the limit. Additionally, $Q_{*}(s, \boldsymbol{a}) = E[P_tQ_{*}(s, \boldsymbol{a}) ]$, then the iteration defined by 
\begin{equation*}
    \begin{array}{l}
         Q_{t+1}(s, \boldsymbol{a}) = (1 - \alpha_t)Q_t(s, \boldsymbol{a})  +\alpha_t[P_t Q_t(s, \boldsymbol{a})]
    \end{array}
\end{equation*}
converges to $(Q_{*} - S)$ w. p. 1, where $S$ is as given in Theorem~\ref{maintheorem}. 
\end{lemm2}

\begin{proof}
This lemma directly follows from Corollary~\ref{corr:conditionalCorollary} and Lemma~\ref{lemm:conditionalaveraging}. 
\end{proof}

\begin{lemm2}
For a $n$-player stochastic game, $E[P_t Q_{*}] = Q_{*}$ where $Q_* = (Q^1_{*}, \ldots, Q^n_{*}).$
\end{lemm2}

\begin{proof}


         
Refer to Lemma 11 in  \cite{hu2003nash} for proof. 
\end{proof}

\begin{lemm2}

A random iterative process 

\begin{equation*}
\begin{array}{l}
     \Delta_{t+1}(x) = (1 - \alpha_t(x))\Delta_t(x) + \alpha_t(x) F_t(x)
\end{array}{}
\end{equation*}

\noindent where $x \in X$, $t = 0,1, \ldots, \infty$, converges to zero with probability one (w. p. 1) if the following properties hold: 
 
1. The set of possible states $X$ is finite. 

2. $0 \leq \alpha_t(x) \leq 1$, $\sum_t \alpha_t(x) = \infty$, $\sum_t \alpha^2_t(x) < \infty$ w. p. 1, where the probability is over the learning rates $\alpha_t$. 

3. $|| \E \{{F_t(x)|\mathscr{P}_t}\} ||_W \leq \mathscr{K} ||\Delta_t||_W + c_t$, where $\mathscr{K} \in [0,1)$ and $c_t$ converges to zero w. p. 1. 

4. $\textrm{\textbf{var}}\{F_t(x) | \mathscr{P}_t\} \leq K(1 + ||\Delta_t||_W)^2$, where $K$ is some constant. 

\noindent Here $\mathscr{P}_t$ is an increasing sequence of $\sigma$-fields that includes the past of the process. In particular, we assume that $\alpha_t, \Delta_t, F_{t-1} \in \mathscr{P}_t$. The notation $||\cdot||_W$ refers to some (fixed) weighted maximum norm.

\end{lemm2}

\begin{proof}

Refer to Theorem 1 in \citet{jaakkola1994convergence} for proof. 

\end{proof}

\begin{lemm2}
Under Assumption~\ref{assumption:globaloptimum}, the Nash operator as defined in Eq.~\ref{eq:nashoperator} forms a contraction mapping with the fixed point being the Nash $Q$-value of the game. 

\end{lemm2}

\begin{proof}
The detailed proof is given in Theorem~17 of \citet{hu2003nash}. 
\end{proof}

\section{Additional Definitions For Theorem~\ref{maintheorem}}\label{appendix:definitions}

We restate some definitions from \cite{szepesvari1999unified}, needed for us in Theorem \ref{maintheorem}, to stay self-contained.

Let us consider an arbitrary operator, $T: \mathcal{B} \xrightarrow{} \mathcal{B}$, where $\mathcal{B}$ is a normed vector space with norm $||.||$. Let $\mathcal{T} = (T_0, T_1, \cdots, T_t, \cdots)$ be a sequence of random operators, $T_t$ mapping $\mathcal{B} \times \mathcal{B}$ to $\mathcal{B}$.

\begin{defn}
Let $F \subseteq \mathcal{B} $ be a subset of $\mathcal{B}$ and let $\mathcal{F}_0: F \xrightarrow{} 2^{\mathcal{B}}$ be a mapping that associates subsets of $\mathcal{B}$ with the elements of $F$. If, for all $f \in F$ and all $m_0 \in \mathcal{F}_0(f)$, the sequence generated by the recursion $m_{t+1} = T_t(m_t, f)$ converges to $Tf$ in the norm of $\mathcal{B}$ with probability 1, then we say that $\mathcal{T}$ approximates T for initial values from $\mathcal{F}_0(f)$ and on the set $F \subseteq \mathcal{B}$. Further, we say that $\mathcal{T}$ approximates $T$ on the singleton set ${f}$ and the initial value mapping $\mathcal{F}_0: F \xrightarrow{} B$ defined by $\mathcal{F}_0(f) = F_0$.

\end{defn}

\begin{defn}
The subset $F \subseteq \mathcal{B}$ is invariant under $T: \mathcal{B} \times \mathcal{B} \xrightarrow{} \mathcal{B}$ if, for all $f,g \in F, T(f,g) \in F$. If $\mathcal{T}$ is an operator sequence as above, then $F$ is said to be invariant under $\mathcal{T}$ if for all $i \geq 0$, $F$ is invariant under $T_i$. 
\end{defn}

\section{ADMIRAL-AE Using An Adaptive Advisor}\label{sec:adaptive}

In this sub-section, we aim to provide an illustration of the behaviour of ADMIRAL-AE in the presence of an adaptive advisor. This advisor would actively change and adapt its strategies according to the changing opponent. The objective is to show that the ADMIRAL-AE algorithm will be able to capture the strength of the adaptive advisor, and hence there is merit in using a principled approach to evaluate an advisor. Also, we would like to discuss the advantages of keeping this evaluation method separate from another approach that aims to learn from the advisor. In this section, we clarify that we are only considering the `pre-learning' phase introduced in our paper, since our objective is to analyze the performance of ADMIRAL-AE with two different advisors. Results in this section use the average and standard deviation of 30 runs.

\begin{figure}[h]
    \centering
    \includegraphics[width=0.5\textwidth]{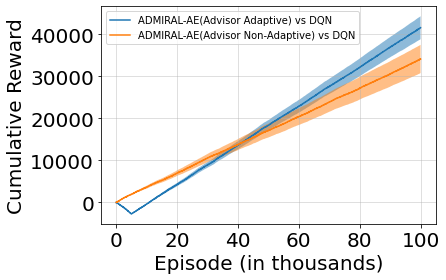}
    \caption{Two instances of ADMIRAL-AE along with an adaptive advisor and a non-adaptive advisor on Pommerman Domain~OneVsOne. The plot shows the performance of ADMIRAL-AE using the Advisor Non-Adaptive against the common opponent of DQN (orange line) and the performance of ADMIRAL-AE using the Advisor Adaptive against the common opponent of DQN (blue line). This plot shows that the ADMIRAL-AE using an adaptive advisor starts off week, but eventually surpasses the performance of the non-adaptive advisor. Thus, a principled method like ADMIRAL-AE can evaluate an adaptive advisor appropriately, while other non-principled approaches may have a high percentage of failure.}
    \label{fig:adaptiveadvisor}
\end{figure}

We will continue to use the two-agent version (Domain~OneVsOne) of Pommerman for this experiment. We consider two different advisors, namely, Advisor~Non-Adaptive and Advisor~Adaptive. The Advisor~Non-Adaptive does not actively track opponent strategies or adapt to them. This advisor plays a balanced strategy --- choosing to be risk-seeking (actively laying bombs to kill the opponent) while the opponent is in proximity and choosing to be risk-averse (escaping from the enemy) otherwise. This advisor is reactive and only responds to the present position of the opponent, and does not attempt to model the opponent's nature actively. It has been observed that this is a relatively good strategy in Pommerman, particularly in the early stages of training \citep{meisheri2019accelerating}.  At this stage, an agent playing a relatively conservative strategy could wait for the opponent to make a mistake and kill itself. However, this strategy is not very strong and could lose out once the opponent is well-trained. A well-trained opponent is less likely to make the mistake of killing itself, and there is the added possibility of the non-adaptive strategy becoming predictable, which could be figured out by the opponent. Hence, the Advisor~Non-Adaptive will find it hard to win games in the middle and later stages of training.  On the other hand, the Advisor~Adaptive uses an adaptive strategy that plays a risk-averse strategy when the enemy is risk-seeking, and a risk-seeking strategy  when the enemy is risk-averse. This is a very strong strategy for winning in Pommerman \citep{zhou2018hybrid} but requires active modelling of the opponent. The Advisor~Adaptive tracks the percentage of bombs played by the opponent to determine the nature of the opponent. In the initial few episodes (about 5000) the Advisor~Adaptive's behaviour is close to random since it still does not have enough information to learn the nature of the opponent. 

We implement a separate instance of the ADMIRAL-AE algorithm with both the advisors and plot the performance against a common baseline agent using DQN for learning. We run all the training for 100,000 episodes and plot the performances of both algorithms. The results are captured in Figure~\ref{fig:adaptiveadvisor}. The results show that ADMIRAL-AE using the Advisor~Adaptive loses out in the beginning while it is still figuring out the nature of the opponent. However, it soon shows a much stronger performance that surpasses the performance of the ADMIRAL-AE using the Advisor~Non-Adaptive. ADMIRAL-AE using the Advisor~Non-Adaptive, while showing good overall performance, does not quite reach the levels of the performance of ADMIRAL-AE using the Advisor~Adaptive, due to the non-adaptive strategy possibly becoming predictable and prone to exploitation by the opponent, after the opponent has trained for a sufficient number of episodes. This performance shows that an agent during learning should listen more to the Advisor~Adaptive as compared to the Advisor~Non-Adaptive for the best outcome. If the advisors were directly used for learning without being evaluated separately, then a learning agent would be prone to discarding Advisor~Adaptive quickly due to its initial weak performance, while the agent would listen more to the Advisor~Non-Adaptive. Yet we have seen that the opposite behaviour would actually be better using an implementation of ADMIRAL-AE with each of these advisors. This demonstrates that the evaluation would be prone to inaccuracy and inconsistency if it is combined with learning a policy. The analysis in this sub-section shows the advantage of using a principled evaluation algorithm (ADMIRAL-AE) for evaluating an advisor, especially when they have adaptive characteristics.

\section{Experimental Details}\label{appendix:experimentaldetails}

In this section, we provide the experimental details for all the experiments. We have given detailed information about the advisors and reward functions for all the experiments. Hyperparameters for all algorithms are also provided.  

\subsection{Grid Maze Domain}
The reward function is defined in such a way that the agents get a +1 if any one of two agents reaches the goal, and they get a -1 if any one of the two gets to the pitfall. Both the agents get a +2 if both reach the goal at the same instant, and both the agents get a -2 if they reach a pitfall at the same instant. If one of the two gets to the goal and the other gets to the pitfall, they both still get a +1. In this environment, each agent obtains a local state (observation) which corresponds to the coordinates of the grid cell the agent is currently located. We use the joint observations of the two agents as the state in the environment for determining the $Q$-values in the experiments showing convergence in Section~\ref{sec:gridworldappendix}. This state is available to both agents. The other experiments simply use the observation of the concerned agent in the $Q$-updates, in case of both ADMIRAL-DM and ADMIRAL-AE, to make the challenge harder.  

The actions that the agents can take in this game are one of moving up, down, left or right. If the wall obstructs an action, then the agent will remain at the same spot.

All four advisors used are rule-based agents, where Advisor~1 follows high-quality rules at each grid cell, which enables the agent to move to the goal state and avoid the pitfalls. Advisor~2 on the other hand can only suggest the correct actions for the agents to reach the goal and escape the pitfall if the agent is right next to the goal or pitfall (within one step). It is not capable of giving the right action in the other parts of the grid. Thus, Advisor~2 cannot teach perfect coordination to obtain the large positive reward (+2). Advisor~3 can only suggest actions that make the agents move closer to the goal state, but cannot give accurate actions that avoid pitfalls. Advisor~4 is a random advisor which only suggests random actions.

The ADMIRAL-AE implementation for this experiment chooses to do the advisor actions with a probability of 50\% ($\eta' = 0.5$). The action that maximizes the $Q$-value (greedy action) is chosen with probability 45\%  and a random action with probability 5\% ($\eta = 0.05$) to satisfy Assumption~\ref{assumption:visitassumption}. We use the previous actions of the other agents while determining actions at the current time step $t$ as done in the other algorithms. The ADMIRAL-DM algorithm simply follows the scheme in Algorithm~\ref{alg:advisorQ2} where the advisor suggestions are performed with decreasing probabilities and the algorithm becomes completely greedy (dependent only on the agent's own policy) after a finite number of episodes.

\subsection{Pommerman Domain}

All the advisors are based on the rule-based agent (called the simple agent) already provided in the Pommerman framework. It is well documented that the simple agent is hard to beat even for complex RL algorithms in the Pommerman environment \citep{resnick2018pommerman}. We use four advisors for the ADMIRAL-AE experiments in Section~\ref{sec:experimentswithmaeqlee} and \ref{sec:pommerman}, of which the first advisor alone is used in the experiments that study the performance of ADMIRAL-DM and ADMIRAL-DM(AC) in Section~\ref{sec:experiments}. The first advisor (Advisor 1) is the best of all the four advisors and has rules for all the different aspects of the game (escaping from enemies, collecting bombs, laying bombs, etc.). The second advisor (Advisor 2) has rules for moving away from danger and collecting powerups like life and bombs. This is a relatively conservative advisor which relies on staying safe and hoping for the opponent to make a mistake. Actually, this is a very effective strategy in the Pommerman game and therefore Advisor 2 is also a useful one. Advisor 3 only has rules for collecting powerups, but cannot teach any of the other strategies needed to win the Pommerman game. When there is no possibility of teaching any useful strategy based on the situation of the game (no nearly enemies, powerups, etc.), the first three advisors provide pseudo-random strategies that encourage exploration of states which are relatively less visited. The fourth advisor (Advisor 4) is the weakest advisor of the four. It suggests random actions to the agent and does not contribute to the learning of the agent (rather, it harms learning). 

Each new Pommerman game gives a randomized game map and there is a maximum of 800 steps for each game. 

\subsection{Pursuit Domain} 

The Pursuit domain was initially defined in SISL \citep{gupta2017cooperative}. We use the canonical domain implemented in the petting zoo environment \citep{terry2020pettingzoo}. All the environmental parameters including the rewards are left as the same as the defaults in \cite{gupta2017cooperative}. For the training phase, the game has 30 evader agents and 8 pursuer agents, where the evader agents move randomly, and the pursuer agents are controlled by learning algorithms. The game is cooperative, with the pursuers receiving a reward of 5 for fully surrounding an evader (the evader is removed from the environment). The pursuers receive a reward of 0.01 for each time they touch an evader. The actions space is discrete with 5 actions, each action corresponding to moving to a neighbouring grid (including stay). Each episode is a full game with a maximum of 500 steps. Each agent in this environment is able to observe a grid of 7 $\times$ 7 centered around itself (the whole grid is 16 $\times$ 16). Since this environment is fully cooperative with homogeneous learning agents, all training (for each algorithm) is completely centralized (all agents train the same network). 

\subsection{Waterworld Domain}

Waterworld is also a SISL environment with a set of 5 pursuers tasked with collecting food and avoiding poison. We use the corresponding petting zoo environment \citep{terry2020pettingzoo}. The environmental parameters are the same as that in \cite{gupta2017cooperative}. This is also a cooperative environment where multiple pursuers need to work together to capture food. The environment is a two-dimensional continuous space, where the action space is a continuous two-dimensional value representing the horizontal and vertical thrust that makes the agents move in particular directions with the desired speed. The local state (observation) of each agent consists of multiple sensor features and two other elements that indicate the collision of the agent with a food or poison respectively. In this environment, at least two agents need to attack a food together to capture it. There are a total of 5 food particles (not destroyed upon capture) and 10 poison particles in the environment. The agents get a reward of 10.0 for capturing food and a 0.01 for encountering food. Further, the agents have a thrust penalty of -0.5, and a penalty of -1.0 for encountering poison. The poison particles and food particles move in the environment with a speed of 0.01. Since this environment is also fully cooperative with homogenous learning agents, all training is completely centralized for this domain too.


\subsection{Hyperparameters And Implementation Details}

The hyperparameters for the baselines were chosen to be the same as those recommended by the respective papers. Some minor modifications were made due to performance and computational efficiency considerations. 

Regarding the hyperparameters of DQfD, we set $1 \times 10^6$ as the demo buffer size and perform 50,000 mini-batch updates for pretraining. The replay buffer size is twice the size of the demo buffer. The N-step return weight is 1.0, the supervised loss weight is 1.0 and the L2 regularization weight is $10^{-5}$. The epsilon greedy exploration is 0.9. The discount factor is 0.99 and the learning rate is 0.002. 
The pretraining for DQfD comes from a data buffer related to a series of games where two rule-based agents (advisors) compete against each other.  All other values are similar to that used in \cite{hester2018deep}.

The CHAT \citep{wang2017improving} implementation uses a neural network for confidence measurement (termed NNHAT in \cite{wang2017improving}). The learning rate is 0.01, we use a discount factor of 0.9 and a fixed exploration constant ($\epsilon$-greedy) of 0.9. We use the extra action variant of HAT \citep{taylor2011integrating} in the CHAT implementation, as this gave the best performance in most of our comparative experiments. A neural network is used as the function approximator, as described in \cite{mnih2015human}. The target net is replaced every 10 learning iterations. The confidence threshold is set as 0.6 and the default action as ``action-0''. The mini-batch size is 32 and learning rate is $\alpha = 0.01$. The CHAT algorithm can directly use advisors in an online fashion similar to ADMIRAL-DM. We omit the rule summarization step of CHAT, and directly use the advisor, to make the performance as good as possible.  The replay buffer size is $2 \times 10^6$. 


The DQN and ADMIRAL-DM hyperparameters are the same as those mentioned for CHAT (as relevant). These algorithms also perform $\epsilon$-greedy exploration with a constant value of 0.9 for $\epsilon$. The advisor influence parameter ($\epsilon'_t$ in Algorithm~\ref{alg:advisorQ2}) for ADMIRAL-DM and ADMIRAL-DM(AC) starts at a high value of 0.8 at the beginning of the training and linearly decays to 0.01 during training. The face-off and execution phases have this parameter to be 0 (no advisor influence). All other hyperparameters for DQN, ADMIRAL-DM and CHAT are similar to that used in  \citet{mnih2015human}. The ADMIRAL-DM(AC) has the critic learning rate set at $10^{-3}$ and the actor learning rate to be $10^{-5}$. The discount factor for ADMIRAL-DM(AC) is 0.9. 


The ADMIRAL-AE algorithm in Section~\ref{sec:experimentswithmaeqlee} chooses to do the advisor action with a probability of 50\%. The action that maximizes the $Q$-value (the greedy action) is chosen with probability 45\% and a random action with probability 5\% to satisfy Assumption~\ref{assumption:visitassumption}. This is the same as that in the tabular domain (Section~\ref{sec:gridworldappendix}). The hyperparameters of ADMIRAL-AE are the same as ADMIRAL-DM. 

For the function approximation experiments, both ADMIRAL-DM and ADMIRAL-DM(AC) simply use the observed current actions of other agents for action selection instead of maintaining copies of policies of the other agents as specified in Algorithm~\ref{alg:maeqldm} and Algorithm~\ref{alg:maeacdm}, since the opponents could possibly be using different algorithms (all agents are not always using the same algorithmic steps). This is also computationally more efficient. The current actions of all other agents are either directly observed or provided by the game engine to each agent in all our experiments, to perform a joint action update.

The DDPG uses the learning rate of the actor as 0.001 and the critic as 0.002. The discount factor is 0.9. We use the soft replacement strategy with a learning rate of 0.01. The batch size is 32. The PPO implementation also uses the same batch size and actor and critic learning rates. All other values are similar to those used in \cite{schulman2017proximal} (PPO) and \cite{lillicrap2015continuous} (DDPG). 
For PPO, we used a single-thread implementation, which we found to be as good as the multi-threaded implementation for our experiments, and more computationally efficient. This could be because the data correlations are already being broken by the multi-agent (non-stationary) nature of the domains.

For the Deep Sarsa implementation used in this paper, we follow almost the same steps as done in Algorithm~\ref{alg:maeqlae}, except that we do not have any advisors and hence the algorithm does not have the term $\epsilon'_t$ in the implementation. As a consequence, the greedy action is selected with probability $(1 - \epsilon_t)$. Also, we are using an independent implementation, where the actions of the other agents are not considered during action selection by the algorithms playing Deep Sarsa. The action is chosen only based on the current state. Deep Sarsa uses the same hyperparameters as ADMIRAL-DM. The values are either the same or closely match those considered in previous research \citep{mnih2015human}. 

We use a set of 30 random seeds (1--30) for all training experiments and a new set of 30 random seeds (31--60) for all execution experiments. 

\subsection{Wall Clock Times} 

The experiments on the Grid Maze domain can just be run on the CPU and takes less than 20 minutes to complete. 

The training for all the experiments on the Pommerman domain in Section~\ref{sec:experimentswithmaeqlee} and Section~\ref{sec:experiments} was run on a 2 GPU virtual machine with 16 GB GPU memory per GPU. The experiments take an average of 18 hours wall clock time to complete. We use Nvidia Volta-100 (V100) GPUs for all these experiments. The CPUs use Skylake as the processor microarchitecture.

The experiments on Pursuit and Waterworld domains in Section~\ref{sec:experiments} were run on a virtual machine having the same configuration containing 2 GPUs. These experiments take an average of 12 hours wall clock time to complete. The experiments on Pommeran in Section~\ref{sec:pommerman} took an average of 15 hours wall clock time to complete.

\vskip 0.2in
\bibliography{main}
\bibliographystyle{apalike}

\end{document}